\newtheorem{defenition}{Definition}
\newtheorem{assumption}{Assumption}
\newtheorem{remark}{Remark}
\newtheorem{theorem}{Theorem}
\newtheorem{lemma}{Lemma}
\newcommand{\tabincell}[2]{\begin{tabular}{@{}#1@{}}#2\end{tabular}}
\begin{document}

\title{Asynchronous Federated Learning with Differential Privacy for Edge Intelligence}

\author{Yanan~Li, Shusen~Yang, Xuebin~Ren, and Cong~Zhao
\thanks{Y. Li and S. Yang are with School of Mathematics and Statistics, Xi'an Jiaotong University, Shaanxi 710049, P.R. China; X. Ren is with School of Computer Science and Technology, Xi'an Jiaotong University, Shaanxi 710049, P.R. China; C. Zhao is with the Department of Computing, Imperial College London, London SW7 2AZ, UK.
}}


\IEEEtitleabstractindextext{%
\begin{abstract}
Federated learning has been showing as a promising approach in paving the last mile of artificial intelligence, due to its great potential of solving the data isolation problem in large scale machine learning. 
Particularly, with consideration of the heterogeneity in practical edge computing systems, asynchronous edge-cloud collaboration based federated learning can further improve the learning efficiency by significantly reducing the straggler effect. 
Despite no raw data sharing, the open architecture and extensive collaborations of asynchronous federated learning (AFL) still give some malicious participants great opportunities to infer other parties' training data, thus leading to serious concerns of privacy. To achieve a rigorous privacy guarantee with high utility, we investigate to secure asynchronous edge-cloud collaborative federated learning with differential privacy, focusing on the impacts of differential privacy on model convergence of AFL. Formally, we give the first analysis on the model convergence of AFL under DP and propose a multi-stage adjustable private algorithm (MAPA) to improve the trade-off between model utility and privacy by dynamically adjusting both the noise scale and the learning rate. Through extensive simulations and real-world experiments with an edge-could testbed, we demonstrate that MAPA significantly improves both the model accuracy and convergence speed with sufficient privacy guarantee. 

\end{abstract}

\begin{IEEEkeywords}
Distributed machine learning, Federated learning, Asynchronous learning, Differential privacy, Convergence.
\end{IEEEkeywords}}

\maketitle

\IEEEdisplaynontitleabstractindextext

\IEEEpeerreviewmaketitle

\IEEEraisesectionheading{\section{Introduction}\label{sec:introduction}}
\IEEEPARstart{M}{achine} learning (ML), especially the deep learning, can sufficiently release the great utility in big data, and has achieved great success in various application domains, such as natural language processing \cite{hu2014convolutional,hu2018dual}, objection detection \cite{wan2019minientropy,shen2019object}, and face recognition \cite{lu2019learning,ding2019trunk}. 
 However, with the increasing public awareness of privacy, more and more people are reluctant to provide their own data \cite{shokri2015privacy,mohassel2017secureml,mcmahan2018learning}. At the same time, large companies or organizations also begin to realize that the curated data is their coral assets with abundant business value \cite{zhang2016privacy,yang2019federated}. Under such a circumstance, a series of ever-strictest data regulations like GDPR \cite{voigt2017eu} have also been legislated to forbid the arbitrary data usage without user permission as well as any kind of cross-organization data sharing. The increasing concern of data privacy has been causing serious data isolation problems across domains, which poses great challenges in various ML applications.


Aiming to realize distributed ML with privacy protection, federated learning \cite{konevcny2016federated,mcmahan2017communication} (FL) has demonstrated the great potential of conducting large scale ML on enormous users' edge devices or distributed network edge servers via parameter based collaborations, which avoid the direct raw data sharing. For example, Google embedded FL into Android smartphones to improve mobile keyboard prediction without collecting users' input \cite{konevcny2016ondevice}, which may include sensitive data like the credit numbers and home addresses, etc. Besides, with the great ability of bridging up the AI services of different online platforms, FL has been seen as a promising facility for a series of innovative AI business models, such as health-care \cite{liu2018fadl}, insurance \cite{wang2019interpret} and fraud detection \cite{yang2019ffd}. Compared with distributed ML in the Cloud server, FL relies on a large number of heterogeneous edge devices/servers, which would have heterogeneous training progress and cause severe delays for the collaborative FL training. Therefore, asynchronous method has long been leveraged in deep learning to improve the learning efficiency via reducing the straggler effect~\cite{recht2011hogwild,liu2015asynchronousconvergence,sun2017asynchronous,zhang2014asynchronous,hannah2017more}. In this paper, we focus on asynchronous federated learning (AFL) in the context of edge-cloud system with heterogeneous delays \cite{lu2019DPAFL}, as shown in Fig.~\ref{fig:scenario}.

\begin{figure}
	\centering
	\includegraphics[width=0.95\linewidth]{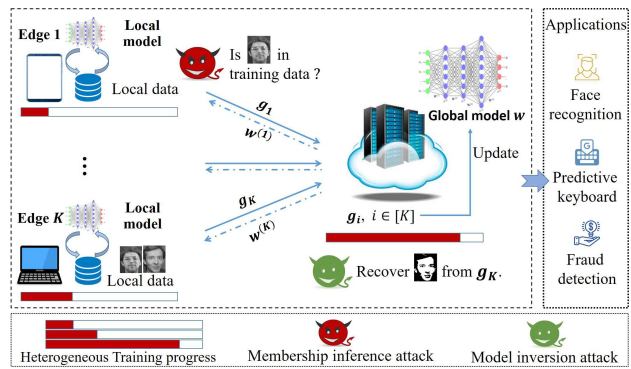}
	\caption{Application scenarios of asynchronous federated learning.}
	\label{fig:scenario}
\end{figure} 

The basic privacy protection of FL benefits from the fact that all raw data are stored locally and close to their providers. However, this is far from sufficient privacy protection. On the one hand, it has been proved that various attacks \cite{fredrikson2015inversion,melis2019exploiting,shokri2017mia} can be launched against
either ML gradients or trained models to extract the private information of data providers. For example, both the membership inference attack \cite{shokri2017mia} and model inversion attack \cite{fredrikson2015inversion} have been validated to be able to infer the individual data or recover part of training data, as shown in Fig.~\ref{fig:scenario}. On the other hand, the open architecture and extensive collaborations make FL systems rather vulnerable to these privacy attacks. Particularly, considering the extensive attacks in other distributed systems like cyber-physical systems, it is not hard to imagine that both the participating edges or the Cloud server in the AFL may act as the honest but curious adversaries to silently infer the private information from the intermediate gradients or the trained models.

To further secure FL, encryption based approaches like secure multi-party computation \cite{mohassel2017secureml} and homomorphic encryption \cite{giacomelli2018privacy} have been proved to be highly effective and able to provide strong security guarantee. However, these approaches are based on complicated computation protocols, leading to potentially unaffordable overheads for edge devices such as mobile phones. Alternatively, by adding proper noises, differential privacy (DP \cite{Dwork2006Calibrating}) can prevent privacy leakage from both the gradients and the trained models with high efficiency, therefore, has also attracted great attentions in machine learning as well as FL~\cite{abadi2016deep,geyer2017differentially,agarwal2018cpsgd,mcmahan2018learning,lu2019DPAFL,shokri2015privacy}. 

Nonetheless, most of the existing work on DP with FL consider synchronous FL, which are different from our research on DP for general edge-cloud collaboration based AFL. 
Specifically, we study the analytical convergence of AFL under DP in this paper. Based on the analytical results, we propose the \underline{m}ulti-stage \underline{a}djustable \underline{p}rivate \underline{a}lgorithm (MAPA), a gradient-adaptive privacy-preserving algorithm for AFL to provide both high model utility  under the rigorous guarantee of differential privacy. 
Our contributions are listed as follows.

\begin{enumerate}
	\item We theoretically analyze the error bound of AFL with considering DP. In particular, the average error bound after $T$ iterations under expectation is dominated by $O\left(\frac{1}{\sqrt{T}}\left(\frac{\sigma}{\sqrt{b}}+\frac{\Delta S}{\varepsilon}\right)+\frac{\tau_{max}^{2}\log T}{T}\right)$ (Theorem \ref{theorem:convex_errorbound}), which extends the result $O(\sigma/\sqrt{bT})$ for general ML and the result  $O\left(\frac{\sigma}{\sqrt{bT}}+\frac{\tau_{max}^{2}\log T}{T}\right)$ for AFL without considering DP. 	
	\item We prove that the gradient norm can converge at the rate $O(1/T)$ to a ball under expectation, the radius of which is determined by the variances of random sampling and added noise. We further propose MAPA to adjust both the DP noise scales and learning rates dynamically to achieve a tighter and faster model convergence without complex parameters tuning.  
    \item
    We conducted extensive simulations and real-world edge-cloud testbed experiments\footnote{Source code available at https://github.com/IoTDATALab/MAPA.} to thoroughly evaluate MAPA's performance in terms of model utility, training speed, and robustness.
    During our evaluation, three types of ML models including logistic regression (LR), support vector machine (SVM), and convolutional neural network (CNN) were adopted.
    Experimental results demonstrate that, for AFL under DP, MAPA manages to guarantee high model utilities.
    Specifically, for CNN training on our real-world testbed, MAPA manages to achieve nearly the same model accuracy as that of centralized training without considering DP.
\end{enumerate}

The rest of this paper is structured as follows. Section \ref{sec:relatedwork} reviews the related work. Section \ref{sec:preliminary} presents the system models of AFL and gives the problem definition. Section \ref{sec:pre_dp} introduces our privacy model of differential privacy. Section \ref{sec:scenario_and_baseline} describes a baseline algorithm with DP and derives the analytical results on its model convergence. Section \ref{sec:algorithm_TAPA} proposes the main algorithms in details and Section \ref{section:Experiments} demonstrates the extensive experimental results. Lastly, we conclude this paper in Section \ref{sec:conclusion}.

\section{Related Work}
\label{sec:relatedwork}

Machine learning privacy has gradually become the crucial obstacle for the data-hungry ML applications \cite{mohassel2017secureml,bonawitz2017practical,papernot2016semi,papernot2018sok,abadi2016deep,geyer2017differentially,agarwal2018cpsgd,mcmahan2018learning,lu2019DPAFL,shokri2015privacy}. In spite of restricting raw data sharing, FL, as a new paradigm of ML, still suffers from various indirect privacy attacks existed in ML, such as membership inference attack \cite{shokri2017mia} and model inversion attack \cite{fredrikson2015inversion}. To enhance the privacy guarantee of FL, many different techniques have been leveraged to prevent the indirect leakage, such as secure multiparty computation \cite{mohassel2017secureml}, homomorphic encryption \cite{giacomelli2018privacy}, secret sharing \cite{bonawitz2017practical}, and differential privacy \cite{abadi2016deep}. However, most of the schemes like secure multiparty computation, homomorphic encryption, secret sharing rely on complicated encryption protocols and would incur unaffordable overheads for edge devices.

Due to the high effectiveness and efficiency, DP has been extensively applied in general machine learning \cite{abadi2016deep,bassily2014private,Chaudhuri2011dpempirical,mcmahan2018general,chanyaswad2018mvg} as well as federated learning algorithms\cite{du2018privacy,shokri2015privacy,agarwal2018cpsgd,geyer2017differentially,thakkar2019differentially,mcmahan2018learning}. 
When implementing DP in machine learning, Laplace or Gaussian mechanism is usually adopted to add properly calibrated noise according to the global sensitivity of gradient's norm, which, however, is difficult to estimate in many machine learning models, especially the deep learning.

For centralized machine learning, \cite{lecuyer2018connection} proposes to leverage the reparametrization trick from \cite{kingma2013auto} to estimate the optimal global sensitivity. Also, \cite{cisse2017parseval} presents the idea of conducting a projection after each gradient step to bound the global sensitivity. However, both \cite{lecuyer2018connection} and \cite{cisse2017parseval} incur great computational overhead in the optimization or projection. Recently, with a slight sacrifice of training utility, \cite{abadi2016deep} introduces to clip the gradient to bound the gradient sensitivity and propose the momentum account mechanism to accurately track the privacy budget. 
However, it remains unclear how to set the optimal clipping bound for achieving a good utility. 

For Federated learning, the similar idea of bounding the global sensitivity is adopted. For example, \cite{shokri2015privacy} samples a subset of gradients and truncate the gradients in the subset, thus reducing the communication cost as well as the variance of the noise. With the similar goal, \cite{agarwal2018cpsgd} designs Binomial mechanism, a discrete version of Gaussian mechanism, to transmit noisy and discretized gradient. Besides the sample-level DP considered in the above research, \cite{geyer2017differentially} proposes to provide client-level DP to hide the existence of participant edge servers and adopts the moment account technique proposed in \cite{abadi2016deep}. Furthermore, \cite{mcmahan2018learning} considers both sample-level and client-level for FedSGD and FedAvg respectively.

In all these works, to reduce the noise, the gradient is clipped by a fixed estimation, which would still incur an overdose of noise in the subsequent iterations since the gradient variance will generally decrease as the model converges. 
Besides, empirical clipping cannot be easily applicable to general ML algorithms. Recently, \cite{thakkar2019differentially} introduces a new adaptive clipping technique for SFL with user-level DP, which can realize adaptive parameter tuning. However, no theoretical analysis on model convergence is given, which means the clipped gradient may not guarantee the convergence or obtain any model utility.

In this paper, we propose an adaptive gradient clipping algorithm by analyzing the impact of DP on AFL model convergence, which ensures that the differentially private AFL model can converge to a high utility model without complicated parameters tuning.

\section{System Models and Problem Statement}
\label{sec:preliminary}
In this section, we introduce the system model of an asynchronous federated learning.

\subsection{Stochastic Optimization based Machine Learning}
Generally, the trained learning model can be defined as the following stochastic optimization problem
\begin{align}\label{eq:stochastic_optimization}
\min_{x \in \mathbb{R}^N}f(x):= \mathbb{E}_{\xi \in \mathcal{P}} F(x;\xi),
\end{align}
where $\xi$ is a random sample whose probability distribution $\mathcal{P}$ is supported on the set $\mathcal{D} \subseteq \mathbb{R}^N$ and $x$ is the global model weight. $F(\cdot,\xi)$ is convex differentiable for each $\xi\in\mathcal{D}$, so the expectation function $f(x)$ is also convex differentiable and $\nabla f(x)=\mathbb{E}_{\xi}[\nabla F(x,\xi)]$. 

\begin{assumption}\label{assumption:unbiased_smooth_variance}
	Assumptions for stochastic optimization.
	\begin{itemize}
		\item (Unbiased Gradient) The stochastic gradient $\nabla F(x,\xi)$ is bounded and unbiased, that is to say,
		\begin{align}\label{eq:bound}
		\|\nabla F(x,\xi)\|\leq G,~\nabla f(x)=\mathbb{E}_{\xi}\nabla F(x,\xi).
		\end{align}
		\item (Bounded Variance) The variance of stochastic gradient is bounded, that is, $\forall x\in\mathbb{R}^N$,
		\begin{align}
		\mathbb{E}_{\xi}[\|\nabla_x F(x,\xi)-\nabla f(x)\|_*^2]\leq \sigma^2.
		\end{align}
		\item (Lipschitz Gradient) The gradient function $\nabla F(\cdot)$ is Lipschitzian, that is to say,  $\forall x,y\in \mathbb{R}^N$,
		\begin{align}
		\|\nabla_x F(x,\xi)-\nabla_x F(y,\xi)\|_*\leq L\|x-y\|.
		\end{align}
	\end{itemize}
\end{assumption}
It should be noted that under these assumptions, $\nabla f(x)$ is Lipschitz continuous with the same constant $L$ \cite{xiao2010dual}.

\subsection{Asynchronous Update based Federated Learning}
\label{sec:pre_asyn}
As shown in Fig. \ref{figure:scenario_APML}, we consider an asynchronous update based federated learning, in which, a common machine learning model is trained via iterative collaborations among a Cloud server and $K$ edge servers. In particular, the Cloud server maintains a global model $x_{t}$ at the $t$-th iteration while each edge server maintains a delayed local model $x_{t-\tau(t,k)}$, where $\tau(t,k)\geq 0$ means the staleness of the $k$-th edge server compared to the current model $x_{t}$. The edge servers and the Cloud server perform the following collaborations during the learning process.
\begin{itemize}
	\item At first, models in the Cloud server and edge servers are initialized as the same $x_{1}$ and the number of iterations $t$ increased by one once the global model in the Cloud server is updated.
	\item Then, the $k$-th edge server at $t$-th iteration computes the gradient $g_{t-\tau(t,k)}$ on a data batch $\mathcal{B}_{k}$ with $b$ random samples $\{\xi_{t,i}\}_{i=1}^{b}$ of its local dataset $\mathcal{D}_{k}$ and sends $g_{t-\tau(t,k)}$ to the Cloud server, where $g_{t-\tau(t,k)}=\frac{1}{b}\sum_{\xi_{i}\in\mathcal{B}}\nabla F(x_{t-\tau(t,k)},\xi_{i})$.
	\item The Cloud server each time picks up a gradient $g_{t-\tau(t)}$ from the buffer $\{g_{t-\tau(t,k)}\}_{k=1}^{K}$ with the "first-in first-out" principle to update the global model from $x_t$ to $x_{t+1}$, which is immediately returned to the corresponding $k(t)$-th edge server for next local gradient computation. 
	\item This collaboration continues until the predefined number of iterations $T$ is satisfied.
\end{itemize}
The AFL architecture in our considered scenario is open and scalable. That means any new edge servers obey the protocols can join in the system and begins training by downloading the trained model from the Cloud server. Then, like the existing edge servers, they can compute the gradient independently and just communicates with the Cloud server.


\begin{assumption}\label{assumption:indepenence_boundeddelay}
	Assumptions for asynchronous update.
	\begin{itemize}
		\item (Independence) All random samples in $\{\xi_{t,i}\}$ are independent to each other, where $t=1,\cdots,T, i=1,\cdots b$;
		\item (Bounded delay) All delay variables $\tau(t,k)$ are bounded: $\max_{t,k}\tau(t,k)\leq \tau_{max}$, where $k=1,\cdots,K$.
	\end{itemize}
\end{assumption}
The independence assumption strictly holds if all edge servers selects samples with replacement. The assumption on bounded delay is commonly used in the  asynchronous algorithms \cite{recht2011hogwild,liu2015asynchronousconvergence,feyzmahdavian2016asynchronous,lian2015asynchronous}. Intuitively, the delay (or staleness) should not be too large to ensure the convergence.


\subsection{Adversary Model}
We focus on data privacy in machine learning and consider a practical federated learning scenario that both the Cloud server and distributed edge servers may be \textit{honest-but-curious}, which means they will honestly follow the protocol without modifying the interactive data but may be curious about and infer the private information of other participant edge servers. In particular, we assume that the untrustworthy Cloud server can infer the private information from the received gradient and some adversarial edge servers may infer the information through the received global models. This adversary model is quite practical in federated learning as all participating entities in the system may locate far from each other but have the knowledge of the training model and related protocols \cite{hitaj2017DeepunderGAN,melis2019exploiting}.

Therefore, in this paper, we aim to design an effective privacy-preserving mechanism for an asynchronous update based federated learning.
For convenience, main notations are listed in Table \ref{table:notations}.
\begin{table}[htbp]
	\caption{Notations}
	\label{table:notations}
	\begin{tabular}{|c|c|}
		\hline
		$\nabla F(x,\xi)$	& gradient computed on a sample $\xi$ \\
		\hline
		$\nabla f(x)$	&  unbiased estimation of $\nabla F(x,\xi)$\\
		\hline
		$g(x)$	& average gradient $1/b\sum_{i=1}^{b}\nabla F(x,\xi_{i})$ \\
		\hline
		$\tilde{g}(x)$	& noisy gradient $\tilde{g}=g(x)+\eta$ \\
		\hline
		$b,\eta$ & mini-batch size, random noise vector\\
		\hline
		$L,\sigma^{2}$	& Lipschitz smooth constant, variance of $\nabla F(x,\xi)$  \\
		\hline
		$\tau_{max},\Delta S$ & maximal delay, global sensitivity in DP\\
		\hline
		$\varepsilon_{k}$ & privacy level for the $k$-th edge server\\
		\hline
		$R,G$	& space radius $R$ and upper bound of $\|\nabla F(x,\xi)\|$ \\
		\hline
		$T,K$ & number of total iterations and edge servers\\
		\hline
		$\Delta_{0}$ &  maximal noise variance $\max_{k=1,\cdots,M}\{2\Delta S^{2}/\varepsilon_{k}^{2}\}$\\
		\hline
		$\Delta_{b}$ & notation denotes $\Delta_{b}=\sigma^{2}/b+\Delta_{0}$\\
		\hline
		$\gamma_{t}$ & the learning rate used in the $t$-th iteration\\
		\hline
	\end{tabular}
\end{table}

\section{Differential Privacy}
\label{sec:pre_dp}

DP is defined on the conception of the adjacent dataset \cite{dwork2014algorithmic}. By adding random noise, DP guarantees the probability of outputting any same result on two adjacent datasets is less than a given constant. In this article, we aim to guarantee the impact of any single sample will not affect the mini-batch stochastic gradient too much by injecting noise from a certain distribution.

\begin{defenition} (Differential Privacy)
	A randomized algorithm $\mathcal{A}$ is $(\varepsilon,\delta)$-DP if for two datasets $\mathcal{D}$, $\mathcal{D'}$ differing one sample, and for all $\omega$, in the output space  $\Omega$ of $\mathcal{A}$, it satisfies that
	\begin{align}
	\Pr[\mathcal{A}(D)=\omega]\leq e^{\varepsilon}\Pr[\mathcal{A}(D')=\omega]+\delta.
	\end{align}
\end{defenition}
The probability is flipped over the randomness of $\mathcal{A}$. The additive term $\delta$ allows for breaching $\varepsilon$-DP with the probability $\delta$. Here $\varepsilon$ denotes the protection level and smaller $\varepsilon$ means higher privacy preservation level.

DP can be usually achieved by adding a noise vector $\eta$ \cite{huang2015differentially,pathak2010multiparty} to the gradient. The norm of the noise vector $\eta$ has the density function as
$$h(\eta;\lambda)=1/(2\lambda)\exp(-\|\eta\|/\lambda)$$
where, $\lambda$ is the scale parameter decided by the privacy level $\varepsilon$ and the global sensitivity $\Delta S$ as $\lambda=\Delta S/\varepsilon$. 
\begin{defenition}(Global Sensitivity $\Delta S$)
	For any two mini-batches $\mathcal{B}$ and $\mathcal{B}'$, which differ in exactly one sample, the global sensitivity $\Delta S$ of gradients is defined as
	\begin{align*}
	\Delta S= \max_{t,\mathcal{B},\mathcal{B}'}\{\|g_{t}(\mathcal{B})-g_{t}(\mathcal{B}')\|\}.
	\end{align*}
\end{defenition}

\section{Baseline Algorithm with Differential Privacy}
\label{sec:scenario_and_baseline}
Before presenting our adaptive-clipping algorithm MAPA for AFL, we first propose a comparable straightforward DP algorithm for AFL based on the system model and analyze its convergence.

%
%

\subsection{AUDP: an Asynchronous Update Federated Learning algorithm with Differential Privacy}

According to the asynchronous federated learning framework listed in Section \ref{sec:pre_asyn}, we propose a baseline scheme, called Asynchronous Update with Differential Privacy (AUDP), to fulfill the privately asynchronous federated learning, in which all edge servers inject DP noise to perturb the gradients before uploading to the Cloud. The detailed collaborations among edge servers and the Cloud server are listed as follows.

On each edge server's side (e.g., the $k$-th edge server), the following steps are performed independently.
\begin{enumerate}
	\item Send the privacy budget $\varepsilon_{k}$ to the Cloud server;
	\item Pull down the current global model $x_{t}$ from the Cloud server;
	\item Compute a noisy gradient $\tilde{g}_{t}\leftarrow g_{t}+\eta_{t}$ by adding a random noise $\eta_{t}$ drawn from the distribution with the density function 
	\begin{align}\label{eq:noise_density}
	h(\eta,\varepsilon_{k})=\frac{\varepsilon_{k}}{2\Delta S}\exp\left(-\frac{\varepsilon_{k}\|\eta\|}{\Delta S}\right);
	\end{align} 
	\item Push $\tilde{g}_{t}$ back to the Cloud server;
\end{enumerate}
Meanwhile, the Cloud server performs the following steps.
\begin{enumerate}
	\item At the current $t$-th iteration, pick a stale gradient $\tilde{g}_{t-\tau(t,k)}$ delayed by $\tau(t,k)$ iterations provided by the $k(t)$-the edge server from the buffers, where $\tau(t,k)$\footnote{For simplicity, $\tau(t,k)$ is written as $\tau(t)$ later.} ranging from $0$ to the maximum delay $\tau_{max}$;
	\item Update the current global model $x_{t}$ using gradient descent method
	\begin{align*}
	x_{t+1}=x_{t}-\gamma_{t} \tilde{g}_{t-\tau(t)},
	\end{align*}
	where, $\gamma_{t}$ is the learning rate at $t$-th iteration and has relation to $\varepsilon_{k}$;
	\item Send $x_{t+1}$ to the $k(t)$-th edge server;
\end{enumerate}

The basic workflow of AUDP is also shown in Fig.~\ref{figure:scenario_APML}. For example, based on the global model $x_{2}$, edge server 3 computes a local gradient $g_{2}$ and sends a noisy gradient $\tilde{g}_{2}$ to the buffer in the Cloud server. When $\tilde{g}_{2}$ is picked up, the original model $x_{2}$ has been updated by 6 updates and becomes $x_{8}$ at now. So, the Cloud server has to use the stale gradient $\tilde{g}_{2}$ to update $x_{8}$ and sends the newly updated $x_{9}$ back to edge server 3 for the next local computing. Other edge servers perform a similar process without waiting for others.



\begin{figure}[tbp]
	\centering
	\includegraphics[width=0.8\linewidth]{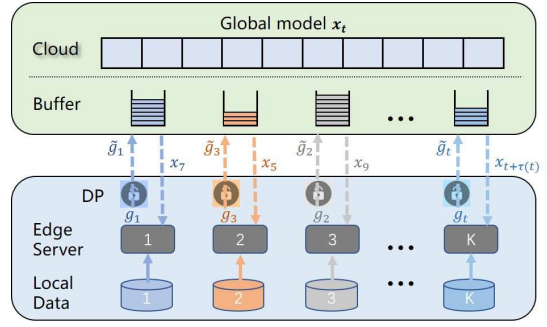}
	\caption{An secure asynchronous federated learning framework.} 
	\label{figure:scenario_APML}
\end{figure}

Now, we prove that the $t$-th iteration of AUDP satisfies $\varepsilon_{k(t)}$-DP.
\begin{theorem}\label{theorem:varepsilon_DP}
	Assume the upper bound of the gradients is $G$, i.e., $\|\nabla F(x,\xi)\|\leq G$ for all $x$ and $\xi$. If the global sensitivity is set as $\Delta S=2G/b$ and the noise is drawn from the distribution in Eq.\;(\ref{eq:noise_density}), then the $t$-th iteration of AUDP satisfies $\varepsilon_{k(t)}$-DP.
\end{theorem}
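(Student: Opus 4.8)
The plan is to verify the standard Laplace-mechanism-on-gradients argument adapted to the mini-batch setting. First I would fix an iteration $t$ and the corresponding edge server $k(t)$, and consider two adjacent mini-batches $\mathcal{B}$ and $\mathcal{B}'$ that differ in exactly one sample, say $\xi_j$ is replaced by $\xi_j'$. The quantity being released is the noisy gradient $\tilde g_t = g_t + \eta_t$ where $g_t(\mathcal{B}) = \frac{1}{b}\sum_{\xi_i\in\mathcal{B}}\nabla F(x_t,\xi_i)$, so the first step is to bound the $\ell_2$-distance $\|g_t(\mathcal{B}) - g_t(\mathcal{B}')\|$. Since only one term in the average changes, this difference equals $\frac{1}{b}\|\nabla F(x_t,\xi_j) - \nabla F(x_t,\xi_j')\|$, and by the triangle inequality together with the gradient bound $\|\nabla F(x,\xi)\|\le G$ from Assumption~\ref{assumption:unbiased_smooth_variance}, this is at most $\frac{2G}{b}$. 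Hence the global sensitivity satisfies $\Delta S \le 2G/b$, which justifies the choice $\Delta S = 2G/b$ in the statement.

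Next I would invoke the privacy guarantee of the Laplace-type mechanism. The noise $\eta_t$ is drawn with density $h(\eta,\varepsilon_k) = \frac{\varepsilon_k}{2\Delta S}\exp(-\varepsilon_k\|\eta\|/\Delta S)$, i.e., a multivariate distribution whose density depends on $\eta$ only through $\|\eta\|$, with scale parameter $\lambda = \Delta S/\varepsilon_k$. For any fixed output point $\omega$ in the range, the ratio of densities of $\tilde g_t$ under $\mathcal{B}$ versus $\mathcal{B}'$ is $\exp\!\big(\tfrac{\varepsilon_k}{\Delta S}(\|\omega - g_t(\mathcal{B}')\| - \|\omega - g_t(\mathcal{B})\|)\big)$; applying the reverse triangle inequality, $\big|\,\|\omega - g_t(\mathcal{B}')\| - \|\omega - g_t(\mathcal{B})\|\,\big| \le \|g_t(\mathcal{B}) - g_t(\mathcal{B}')\| \le \Delta S$, so the ratio is bounded above by $e^{\varepsilon_k}$. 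Integrating this pointwise bound over any measurable output set $\Omega$ gives $\Pr[\mathcal{A}(\mathcal{B})\in\Omega] \le e^{\varepsilon_k}\Pr[\mathcal{A}(\mathcal{B}')\in\Omega]$, which is $\varepsilon_{k(t)}$-DP (with $\delta = 0$). This establishes that the $t$-th iteration of AUDP is $\varepsilon_{k(t)}$-DP.

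The one subtlety I would be careful about — and which I expect to be the only real obstacle — is the post-processing step: the Cloud server does not merely output $\tilde g_t$, it uses it to form $x_{t+1} = x_t - \gamma_t\,\tilde g_{t-\tau(t)}$ and returns $x_{t+1}$. I would note that $x_t$ and $\gamma_t$ are, from the point of view of the sample being protected at iteration $t$, data-independent (or depend only on information already released in earlier iterations), so the map $\tilde g_t \mapsto x_{t+1}$ is a post-processing transformation; by the standard closure of differential privacy under post-processing, releasing $x_{t+1}$ is still $\varepsilon_{k(t)}$-DP with respect to the mini-batch used at iteration $t$. I would make explicit that the theorem concerns the per-iteration guarantee only — composition across iterations, and the fact that each sample may be reused in many mini-batches, are separate accounting questions handled elsewhere in the paper — so no sequential-composition argument is needed here. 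With the sensitivity bound, the Laplace-ratio bound, and the post-processing remark in place, the proof is complete.
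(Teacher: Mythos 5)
Your proposal is correct and follows essentially the same route as the paper's own proof: bound the sensitivity of the mini-batch average by $2G/b$ via the triangle inequality, then bound the ratio of the Laplace-type noise densities at any output by $e^{\varepsilon_{k(t)}}$ using the (reverse) triangle inequality. Your added remarks on integrating the pointwise ratio over output sets and on post-processing of $\tilde g_t$ into $x_{t+1}$ are sound refinements that the paper leaves implicit, but they do not change the underlying argument.
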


\begin{proof}
	For any two mini-batches differing one sample denoted as $\xi_{b}\in\mathcal{B}$ and $\xi_{b}'\in\mathcal{B}'$ without loss of the generality, because
	\begin{align*}
	&\max_{t,\mathcal{B},\mathcal{B}'}\{\|g_{t}(\mathcal{B})-g_{t}(\mathcal{B}')\|\}\\
	&=\max_{t,\mathcal{B},\mathcal{B}'}\left\{\|\nabla F(x,\xi_{b})-\nabla F(x,\xi_{b}')\|/b\right\}\leq 2G/b,
	\end{align*}
	so the global sensitivity $\Delta S=2G/b$. For any possibly noisy gradient $\nu$, we have
	\begin{align*}
	&\frac{\Pr\{g_{t}(\mathcal{B})+\eta=\nu\}}{\Pr\{g_{t}(\mathcal{B}')+\eta=\nu\}}
	=\frac{\exp(-\varepsilon_{k(t)}\|\nu-g_{t}(\mathcal{B})\|/\Delta S)}{\exp(-\varepsilon_{k(t)}\|\nu-g_{t}(\mathcal{B}')\|/\Delta S)}\\
	&\leq \exp\left(\frac{\varepsilon_{k(t)}\|g_{t}(\mathcal{B})-g_{t}(\mathcal{B}')\|}{\Delta S}\right)\leq \exp(\varepsilon_{k}).
	\end{align*}
	So, the $t$-th iteration of AUDP satisfies $\varepsilon_{k(t)}$-DP.
\end{proof}

\subsection{Convergence Analysis of AUDP }

Without the consideration of DP, the known order of the optimal convergence rate for convex function with smooth gradient in asynchronous update is $O(1/\sqrt{Tb})$ in terms of the iteration number $T$ \cite{recht2011hogwild,agarwal2011distributed,agarwal2018cpsgd,feyzmahdavian2016asynchronous}. Here, we extended the corresponding analytical result with the consideration of DP. In particular, the optimal convergence rate for AUDP has the order of $O(\sqrt{\Delta_{b}/T})$, where $\Delta_{b}=\sigma^{2}/b + \Delta_{0}$ and $\Delta_{0}=\max_{k=1,\cdots,M}\{2\Delta S^{2}/\varepsilon_{k}^{2}\}$.

The convergence of AUDP is shown in the following lemma. 

\begin{lemma}\label{lemma:convex_errorbound}
	Let Assumptions \ref{assumption:unbiased_smooth_variance} and \ref{assumption:indepenence_boundeddelay} hold. Then the output of AUDP satisfies the following result
	\begin{align*}
	& \sum_{t=1}^{T}\mathbb{E}f(x_{t+1})-f(x^*) \\
	&\leq RG\tau_{max} +\frac{L(\tau_{max}+1)^2}{2}\sum_{t=1}^{T}\gamma_{t}^2\mathbb{E}\|\tilde{g}_{t-\tau(t)}\|^{2}\\
	&\quad + \sum_{t=1}^{T}\gamma_{t}\mathbb{E}\|\tilde{g}_{t-\tau(t)}-\nabla f(x_{t-\tau(t)})\|^2 \\
	&\quad - \sum_{t=1}^{T}\frac{1}{2}(\gamma_{t} - L\gamma_{t}^2)\mathbb{E}\|\tilde{g}_{t-\tau(t)}\|^2\\
	&\quad +\sum_{t=1}^{T}\frac{1}{2\gamma_{t}}[\|x_{t}-x^*\|^2 - \|x_{t+1}-x^*\|^2].
	\end{align*}
\end{lemma}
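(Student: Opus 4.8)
The plan is to run the classical convexity-plus-smoothness argument for stochastic gradient descent, adapted so that it tolerates the bounded staleness $\tau_{max}$ and the extra zero-mean perturbation $\eta_t$ injected for differential privacy. First I would fix an iteration $t$ and combine two elementary facts from Assumption~\ref{assumption:unbiased_smooth_variance}: convexity of $f$ evaluated at the stale iterate, $f(x_{t-\tau(t)}) - f(x^*) \le \langle \nabla f(x_{t-\tau(t)}), x_{t-\tau(t)} - x^*\rangle$, and the $L$-smoothness descent inequality $f(x_{t+1}) \le f(x_{t-\tau(t)}) + \langle \nabla f(x_{t-\tau(t)}), x_{t+1} - x_{t-\tau(t)}\rangle + \frac{L}{2}\|x_{t+1}-x_{t-\tau(t)}\|^2$. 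Adding these cancels $f(x_{t-\tau(t)})$ and leaves $f(x_{t+1}) - f(x^*) \le \langle \nabla f(x_{t-\tau(t)}), x_{t+1} - x^*\rangle + \frac{L}{2}\|x_{t+1}-x_{t-\tau(t)}\|^2$, so all the work now reduces to the inner-product term and the smoothness remainder.

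Next I would open up the inner product using the update rule $x_{t+1} = x_t - \gamma_t\tilde{g}_{t-\tau(t)}$. Writing $x_{t+1} - x^* = (x_t - x^*) - \gamma_t\tilde{g}_{t-\tau(t)}$ and splitting $\nabla f(x_{t-\tau(t)}) = \tilde{g}_{t-\tau(t)} - (\tilde{g}_{t-\tau(t)} - \nabla f(x_{t-\tau(t)}))$ produces (i) a term $\langle \tilde{g}_{t-\tau(t)}, x_t - x^*\rangle$, which the three-point identity $2\gamma_t\langle \tilde{g}_{t-\tau(t)}, x_t-x^*\rangle = \|x_t-x^*\|^2 - \|x_{t+1}-x^*\|^2 + \gamma_t^2\|\tilde{g}_{t-\tau(t)}\|^2$ turns into the telescoping distance term $\frac{1}{2\gamma_t}(\|x_t-x^*\|^2-\|x_{t+1}-x^*\|^2)$ plus $\frac{\gamma_t}{2}\|\tilde{g}_{t-\tau(t)}\|^2$; (ii) a term $-\gamma_t\|\tilde{g}_{t-\tau(t)}\|^2$; and (iii)--(iv) two cross terms pairing $\tilde{g}_{t-\tau(t)} - \nabla f(x_{t-\tau(t)})$ with $x_t - x^*$ and with $\tilde{g}_{t-\tau(t)}$ respectively. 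Then I would take expectations: by Assumption~\ref{assumption:indepenence_boundeddelay} the fresh mini-batch samples and the privacy noise entering $\tilde{g}_{t-\tau(t)}$ are independent of everything that determines $x_{t-\tau(t)}$ and $x_t$ — the stale gradient is applied only at iteration $t$, never before — and the density in Eq.\;(\ref{eq:noise_density}) is symmetric, so $\mathbb{E}[\tilde{g}_{t-\tau(t)} - \nabla f(x_{t-\tau(t)}) \mid x_{t-\tau(t)}, x_t] = 0$. Hence the cross term with $x_t-x^*$ vanishes, and the one with $\tilde{g}_{t-\tau(t)}$ collapses to $\gamma_t\mathbb{E}\|\tilde{g}_{t-\tau(t)} - \nabla f(x_{t-\tau(t)})\|^2$, which is exactly the noise/variance term of the statement.

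For the smoothness remainder I would unroll $x_{t+1} - x_{t-\tau(t)} = -\sum_{j=t-\tau(t)}^{t}\gamma_j\tilde{g}_{j-\tau(j)}$, a sum of at most $\tau_{max}+1$ vectors, and apply $\|\sum_i a_i\|^2 \le (\tau_{max}+1)\sum_i\|a_i\|^2$; summing over $t$ and using that each index $j$ lies in at most $\tau_{max}+1$ of these windows (a double-counting step) converts $\frac{L}{2}\sum_t\|x_{t+1}-x_{t-\tau(t)}\|^2$ into $\frac{L(\tau_{max}+1)^2}{2}\sum_t\gamma_t^2\mathbb{E}\|\tilde{g}_{t-\tau(t)}\|^2$. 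Summing the per-iteration bound over $t=1,\dots,T$ and regrouping then gives the stated inequality; the residual $RG\tau_{max}$ absorbs the $O(\tau_{max})$ boundary contributions created by the index misalignment $t\mapsto t-\tau(t)$ near the start (and end) of the run, each bounded crudely by a gradient norm $\|\nabla f\|\le G$ (from Assumption~\ref{assumption:unbiased_smooth_variance}) times a distance $\|x_t-x^*\|\le R$, and the extra $+\frac{L}{2}\gamma_t^2\|\tilde{g}_{t-\tau(t)}\|^2$ inside $-\frac{1}{2}(\gamma_t-L\gamma_t^2)\|\tilde{g}_{t-\tau(t)}\|^2$ comes from bounding the smoothness step in this window form rather than the tight per-step form. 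I expect the main obstacle to be the expectation step: one has to be careful about which $\sigma$-algebra the stale, privacy-perturbed gradient is conditionally unbiased with respect to, so that the cross terms pairing $\tilde{g}_{t-\tau(t)}-\nabla f(x_{t-\tau(t)})$ with the past-measurable quantity $x_t-x^*$ genuinely drop out — this is the single place where the asynchrony and the DP noise interact, and everything else is the combinatorial accounting of the staleness windows.
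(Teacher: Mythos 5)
Your proposal is correct, but it takes a genuinely different route from the paper. The paper anchors the convexity-plus-smoothness step at the \emph{fresh} iterate, writing $\mathbb{E}f(x_{t+1})-f(x^*)\leq \mathbb{E}\langle\nabla f(x_t),x_{t+1}-x^*\rangle+\frac{L}{2}\mathbb{E}\|x_{t+1}-x_t\|^2$, which creates a gradient-mismatch term $\langle\nabla f(x_t)-\nabla f(x_{t-\tau(t)}),x_{t+1}-x^*\rangle$; this is handled by a separate auxiliary lemma based on the Bregman-divergence four-term identity, whose telescoping of $D_f(x^*,x_t)-D_f(x^*,x_{t-\tau(t)})$ produces the $RG\tau_{max}$ boundary term (and requires $\|\nabla f\|\leq G$, $\|x_t-x^*\|\leq R$), while the bound $D_f(x_{t+1},x_{t-\tau(t)})\leq\frac{L}{2}\|x_{t+1}-x_{t-\tau(t)}\|^2$ plus the same window-unrolling/double-counting you use gives the $\frac{L(\tau_{max}+1)^2}{2}\sum_t\gamma_t^2\mathbb{E}\|\tilde g_{t-\tau(t)}\|^2$ term. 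You instead anchor both convexity and smoothness at the \emph{stale} iterate $x_{t-\tau(t)}$, so the inner product already carries $\nabla f(x_{t-\tau(t)})$, no mismatch term or Bregman lemma is needed, and the staleness cost enters only through $\frac{L}{2}\|x_{t+1}-x_{t-\tau(t)}\|^2$; your treatment of the remaining pieces (three-point identity on $\langle\tilde g_{t-\tau(t)},x_{t+1}-x^*\rangle$, conditional unbiasedness of $\tilde g_{t-\tau(t)}$ given $x_{t-\tau(t)}$ and $x_t$ to kill the cross term, and $\gamma_t\mathbb{E}\|\tilde g_{t-\tau(t)}-\nabla f(x_{t-\tau(t)})\|^2$ for the variance term) coincides with the paper's computation of its $T_2$ and $T_3$, and your independence justification is exactly the implicit assumption the paper also relies on, since the stale gradient's mini-batch and DP noise are consumed only at iteration $t$. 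What each approach buys: your decomposition is more elementary and actually yields a strictly tighter bound --- no $RG\tau_{max}$ term, a cleaner $-\frac{\gamma_t}{2}\mathbb{E}\|\tilde g_{t-\tau(t)}\|^2$ in place of $-\frac{1}{2}(\gamma_t-L\gamma_t^2)\mathbb{E}\|\tilde g_{t-\tau(t)}\|^2$, and no need for the bounds $G$ and $R$ at this stage --- and the stated lemma then follows a fortiori because the terms you omit are non-negative; just say this explicitly rather than attributing the $RG\tau_{max}$ slack to boundary effects, since in your derivation it is simply not needed (in the paper it has a concrete origin in the Bregman telescoping). The paper's route, by contrast, keeps the per-step smoothness remainder $\frac{L\gamma_t^2}{2}\|\tilde g_{t-\tau(t)}\|^2$ tight (which is what later produces the exact $\gamma_t-L\gamma_t^2$ coefficient exploited in the remarks), at the price of the extra assumptions and the $RG\tau_{max}$ additive constant.
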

\begin{proof}
	See Appendix \ref{proof:lemma_convex_errorbound}.
\end{proof}

When $\gamma_{t}\in(0,1/L)$, The term $- \sum_{t=1}^{T}\frac{1}{2}(\gamma_{t} - L\gamma_{t}^2)\mathbb{E}\|\tilde{g}_{t-\tau(t)}\|^2$ can be removed. Noticing that
\begin{align}\label{eq:eq1}
\mathbb{E}\|\tilde{g}_{t-\tau(t)}\|^{2} &=\mathbb{E}\|\tilde{g}_{t-\tau(t)}-\nabla f(x_{t-\tau(t)})\|^{2}\\
&\quad + \mathbb{E}\|\nabla f(x_{t-\tau(t)})\|^{2}\notag
\end{align}
and
\begin{align}\label{eq:eq2}
&\mathbb{E}\|\tilde{g}_{t-\tau(t)}-\nabla f(x_{t-\tau(t)})\|^{2}\\
&=\mathbb{E}\|g_{t-\tau(t)}-\nabla f(x_{t-\tau(t)})\|^{2}+\mathbb{E}\|\eta_{t}\|^{2}\\\notag
&\leq \sigma^{2}/b + 2\Delta S^{2}/\varepsilon_{k(t)}^{2}\leq \Delta_{b},\notag
\end{align}
we can obtain the following theorem by substituting Eqs.~(\ref{eq:eq1}) and (\ref{eq:eq2}) into Lemma \ref{lemma:convex_errorbound}.

\begin{theorem}\label{theorem:convex_errorbound}
	Assume that Assumptions \ref{assumption:unbiased_smooth_variance} and \ref{assumption:indepenence_boundeddelay} hold. Let $\|\nabla f(x)\|\leq G$,  $\|x_{t}-x^{*}\|\leq R$ and $x_{ave}(T)=1/T\sum_{t=1}^{T}x_{t}$. If the learning rate $\gamma_{t}$ satisfies
	\begin{align}\label{eq:convex_errorbound_varingrate}
	\gamma_{t}^{-1}= L(\tau_{max}+1)+\sqrt{\Delta_{b}+1}\sqrt{t}
	\end{align}
	then, the average error of AUDP under expectation satisfies
	\begin{align*}
	&\mathbb{E}f(x_{ave}(T))-f(x^{*})\leq\frac{1}{T}\sum_{t=1}^{T}\mathbb{E}f(x_{t})-f(x^{*})\\
	&\leq\frac{RG\tau_{max}}{T} + \frac{L(\Delta_{b}+G^{2})}{2(\Delta_{b}+1)}\frac{(\tau_{max}+1)^{2}\log T}{T}\\
	&\quad +\frac{(4+R^{2})\sqrt{\Delta_{b}+1}+R^{2}/\gamma_{1}}{2\sqrt{T}},
	\end{align*}
\end{theorem}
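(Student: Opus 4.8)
The plan is to start from Lemma~\ref{lemma:convex_errorbound} and bound each of the five terms on the right-hand side under the prescribed learning-rate schedule in Eq.~(\ref{eq:convex_errorbound_varingrate}). First I would handle the term $-\tfrac12\sum_t(\gamma_t-L\gamma_t^2)\mathbb{E}\|\tilde g_{t-\tau(t)}\|^2$: since $\gamma_t^{-1} \ge L(\tau_{max}+1) \ge L$, we have $\gamma_t \le 1/L$, so $\gamma_t - L\gamma_t^2 \ge 0$ and this entire term is $\le 0$ and can be dropped. Next, substituting Eq.~(\ref{eq:eq1}) into the second term and combining with the third term, the cross-variance contributions $\gamma_t\mathbb{E}\|\tilde g-\nabla f\|^2$ appear with coefficient $\gamma_t + \tfrac{L(\tau_{max}+1)^2}{2}\gamma_t^2$, while the $\mathbb{E}\|\nabla f\|^2$ contribution carries coefficient $\tfrac{L(\tau_{max}+1)^2}{2}\gamma_t^2$; using Eq.~(\ref{eq:eq2}) to bound $\mathbb{E}\|\tilde g-\nabla f\|^2 \le \Delta_b$ and the hypothesis $\|\nabla f(x)\|\le G$ to bound $\mathbb{E}\|\nabla f\|^2\le G^2$, everything reduces to bounding $\sum_t \gamma_t$ and $\sum_t \gamma_t^2$.

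For these two sums I would use the explicit schedule. From $\gamma_t^{-1} = L(\tau_{max}+1) + \sqrt{\Delta_b+1}\sqrt{t}$ we get $\gamma_t \le 1/(\sqrt{\Delta_b+1}\sqrt{t})$, so $\sum_{t=1}^T\gamma_t^2 \le \tfrac{1}{\Delta_b+1}\sum_{t=1}^T \tfrac1t \le \tfrac{1+\log T}{\Delta_b+1}$; the combination $\tfrac{L(\tau_{max}+1)^2}{2}\Delta_b \sum\gamma_t^2 + \tfrac{L(\tau_{max}+1)^2}{2}G^2\sum\gamma_t^2$ then yields the $\tfrac{L(\Delta_b+G^2)}{2(\Delta_b+1)}\cdot\tfrac{(\tau_{max}+1)^2\log T}{T}$ term after dividing by $T$ (absorbing the $O(1)$ into the $\log T$ or treating it as part of the leading constant). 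For the remaining $\gamma_t\Delta_b$ contribution I would bound $\Delta_b/\sqrt{\Delta_b+1}\cdot\sum_t 1/\sqrt t \le \sqrt{\Delta_b+1}\cdot 2\sqrt T$, contributing to the $O(1/\sqrt T)$ term. The telescoping-like term $\sum_t \tfrac{1}{2\gamma_t}[\|x_t-x^*\|^2-\|x_{t+1}-x^*\|^2]$ is handled by Abel summation: rewrite it as $\tfrac{1}{2\gamma_1}\|x_1-x^*\|^2 - \tfrac{1}{2\gamma_T}\|x_{T+1}-x^*\|^2 + \tfrac12\sum_{t=2}^T(\tfrac{1}{\gamma_t}-\tfrac{1}{\gamma_{t-1}})\|x_t-x^*\|^2$; since $\gamma_t^{-1}-\gamma_{t-1}^{-1} = \sqrt{\Delta_b+1}(\sqrt t - \sqrt{t-1}) \le \sqrt{\Delta_b+1}/(2\sqrt{t-1})$ wait — more simply, $\sum_{t=2}^T(\sqrt t-\sqrt{t-1}) = \sqrt T - 1 \le \sqrt T$, so with $\|x_t-x^*\|\le R$ this term is bounded by $\tfrac{R^2}{2\gamma_1} + \tfrac{R^2\sqrt{\Delta_b+1}\sqrt T}{2}$, which after division by $T$ gives the $\tfrac{R^2\sqrt{\Delta_b+1} + R^2/\gamma_1}{2\sqrt T}$ pieces. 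Finally the first term $RG\tau_{max}$ divides directly to give $RG\tau_{max}/T$, and convexity of $f$ gives $f(x_{ave}(T)) \le \tfrac1T\sum_t f(x_t)$ by Jensen, establishing the first inequality.

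The main obstacle I anticipate is the careful bookkeeping in collecting coefficients of $\mathbb{E}\|\tilde g - \nabla f\|^2$ versus $\mathbb{E}\|\nabla f\|^2$ after the substitution — in particular ensuring that the constant $4$ in the numerator $(4+R^2)\sqrt{\Delta_b+1}$ is tracked correctly. This $4$ should arise as $2 + 2$ from two separate $\sum_t 1/\sqrt t \le 2\sqrt T$ estimates (one from the $\gamma_t\Delta_b$ term, one from the $\tfrac{L(\tau_{max}+1)^2}{2}\gamma_t^2$ term combined with the $L(\tau_{max}+1)$ summand in $\gamma_t^{-1}$ — that is, writing $\gamma_t = 1/(L(\tau_{max}+1)+\sqrt{\Delta_b+1}\sqrt t)$ and splitting, $\tfrac{L(\tau_{max}+1)^2}{2}\gamma_t^2 \le \tfrac{(\tau_{max}+1)}{2}\gamma_t \cdot \tfrac{L(\tau_{max}+1)}{L(\tau_{max}+1)+\sqrt{\Delta_b+1}\sqrt t} \le \tfrac12\gamma_t$, pushing that contribution into the $\sqrt t$-sum rather than the $\log T$-sum). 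I would need to be meticulous about which parts land in the $\log T/T$ bucket versus the $1/\sqrt T$ bucket, since the schedule's additive structure $L(\tau_{max}+1) + \sqrt{\Delta_b+1}\sqrt t$ is precisely engineered so that the $\tau_{max}$-dependent slow term decays like $\log T/T$ while the noise/variance term decays like $1/\sqrt T$.
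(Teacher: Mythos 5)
Your proposal is correct and takes essentially the same route as the paper's proof: drop the nonpositive $-\tfrac12(\gamma_t-L\gamma_t^2)\mathbb{E}\|\tilde g_{t-\tau(t)}\|^2$ term since $\gamma_t\le 1/L$, split $\mathbb{E}\|\tilde g_{t-\tau(t)}\|^2$ into variance plus $\mathbb{E}\|\nabla f\|^2$ and bound these by $\Delta_b$ and $G^2$, use $\sum_t\gamma_t^2\lesssim \log T/(\Delta_b+1)$ and $\sum_t\gamma_t\le 2\sqrt{T}/\sqrt{\Delta_b+1}$, control the telescoping term by exactly the Abel-summation argument with $\|x_t-x^*\|\le R$ that underlies the paper's Eq.~(\ref{eq:convex_error_t3}), and finish with Jensen. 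One small correction to your final paragraph: the constant $4$ comes solely from $\tfrac{\Delta_b}{T}\sum_t\gamma_t\le \tfrac{2\sqrt{\Delta_b+1}}{\sqrt T}=\tfrac{4\sqrt{\Delta_b+1}}{2\sqrt T}$, exactly as your second paragraph already computes; the proposed ``$2+2$'' splitting is unnecessary and its key step is false, since $\tfrac{L(\tau_{max}+1)^2}{2}\gamma_t^2\le\tfrac{\tau_{max}+1}{2}\gamma_t$ rather than $\tfrac12\gamma_t$, and following that route would produce a $\tau_{max}$- and $G^2$-dependent $1/\sqrt T$ term that does not appear in the stated bound.
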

\begin{proof}
	See Appendix \ref{proof:theorem_convex_errorbound}.
\end{proof}

Theorem \ref{theorem:convex_errorbound} claims that AUDP can converge even when the gradient is out-of-the-date and perturbed by noises. From the proofs, this result holds for any noise distribution with zero mean and bounded variance. This is consistent with \cite{mania2017perturbed,chaturapruek2015asynchronous}, which regards the stale gradient as a perturbation of the current gradient. In Theorem \ref{theorem:convex_errorbound}, $\Delta_{b}=\sigma^{2}/b+\Delta_{0}$ reflects the error caused by the randomness in both batch sampling and noise for privacy preserving.

\begin{remark}
	Fix other parameters, we can observe that, to achieve $\varepsilon$-DP, the error bound of AUDP has the order of $O(\Delta S^{2}/\varepsilon^{2})$, which can be derived from $\sqrt{(\Delta_{b}+1)/{T}}$. This result is consistent with \cite{huang2015differentially}, which shows, the higher global sensitivity and privacy level require much more (polynomial order) iterations to achieve the same error bound.
\end{remark}
\begin{remark}
	Without consideration of DP (i.e., $\Delta_{b}=\sigma^{2}/b$), the average error of Theorem \ref{theorem:convex_errorbound} is simplified as
	\begin{align*}
	O\left(\frac{RG\tau_{max}}{T}+\frac{LG^{2}(\tau_{max}+1)^{2}\log T}{T}+ \frac{R^{2}\sigma}{\sqrt{bT}}\right).
	\end{align*}
	Therefore, the convergence rate achieves $O(1/\sqrt{T})$ as long as $\tau_{max}=O(T^{1/4})$, which is known to be the best achievable rate of convex stochastic optimization \cite{bubeck2015convex}. This means that the penalty in convergence rate due to the delay $\tau(t)$ is asymptotically negligible.
	
	Furthermore, the $\log T$ factor in the last second term is not present when $\gamma_{t}$ is set as
	\begin{align}\label{eq:convex_errorbound_varingrate2}
	\gamma_{t}^{-1}=L((\tau_{max}+1)^{2}+1)+\sqrt{\Delta_{b}+1}\sqrt{t},
	\end{align}
	which satisfies $\gamma_{t}-L\gamma_{t}^{2}-L(\tau_{max}+1)^{2}\leq 0.$
	In such case, the result becomes
	\begin{align*}
	O\left(\frac{RG\tau_{max}}{T}+\frac{R\sigma}{\sqrt{bT}}\right),
	\end{align*}
	which is better than $O(\frac{LR^{2}(\tau_{max}+1)^{2}}{T}+\frac{R\sigma}{\sqrt{bT}})$ at the factor $(\tau_{max}+1)^{2}$ (Theorem 2 of \cite{feyzmahdavian2016asynchronous}).
\end{remark}
\begin{remark}
	 Stale gradient can accelerate the training process if it is not too old. In the analysis of Theorem \ref{theorem:convex_errorbound}, the term
	 \begin{align*}
	 -1/2\sum_{t=1}^{T}(\gamma_{t}-L\gamma_{t}^{2})\mathbb{E}\|\tilde{g}_{t-\tau_{(t)}}\|^{2}
	 \end{align*}
	 originally in Lemma~\ref{lemma:convex_errorbound} is neglected for simplicity, which, however, can be used to eliminate part of other terms to reduce the error bound if $\|\tilde{g}_{t-\tau_{(t)}}\|$ has a lower bound.
	 
	 In fact, the lower bound can be commonly hold in the beginning of learning when the model is far away from the optimum. But if the lower bound still holds when the model is close enough to the optimum, the stale gradient will then harm the convergence. This means that too large staleness is not allowed in the asynchronous update (Assumption \ref{assumption:indepenence_boundeddelay}). The observation that a stale gradient may speed up the training is also consistent with \cite{mitliagkas2016asynchrony}.
\end{remark}

\section{Multi Stage Adjustable Private Algorithm for Asynchronous Federated Learning}
\label{sec:algorithm_TAPA}

In this section, we theoretically analyze how to estimate the global sensitivity and improve the model utility of the baseline algorithm AUDP. Subsequently, we propose the multi stage adjustable private algorithm (MAPA) to train general models by automatically adjusting the learning rate and the global sensitivity to achieve a better trade-off between model utility and privacy protection.

\subsection{Basic Idea}

In AUDP, an unsolved problem is how to estimate the parameter $G$ in Eq.\;(\ref{eq:bound}), which is the upper bound of gradients norm $\|\nabla F(x,\xi)\|$ and determines the noise scale $\lambda=\Delta S/\varepsilon=2G/b\varepsilon$. However, due to the complicated trained model $x$ and the randomness of sampling $\xi$, it is impossible to obtain an accurate value of $G$ while training. Therefore, to limit the noise, many existing work proposed to clip the gradient using an fixed bound $\bar{G}$ and calibrate the privacy noise scale as $2\bar{G}/b\varepsilon$. Nonetheless, this does not consider the fact that the gradients norm decreases with the training process and will lead to either an overestimated or underestimated estimation, as shown in Fig. \ref{fig:basic_idea} (a)-(c). For example, if $\bar{G}$ is larger than $G$, the global sensitivity $\Delta S=2\bar{G}/b$ will incur too more noise to the gradients, leading to a poor model accuracy (Fig.~\ref{fig:basic_idea} (b)). If $\bar{G}$ is much smaller than $G$, clipping may destroy the unbiasedness of the gradient estimate, also leading to a poor model accuracy (Fig.~\ref{fig:basic_idea} (c)). 
Although an adaptive clipping method is proposed in \cite{thakkar2019differentially}, it remains unclear how to set the learning rates based on the introduced noises to ensure the model convergence, making its adaptive method meaningless when the training is not convergent. 

To this end, we theoretically analyze the convergence of AFL with DP and study the relationship between the learning rate and AFL model convergence under DP. Inspired by the relationship, we propose an adaptive clipping method to improve the model accuracy of AUDP by changing the learning rates to ensure the gradients norm decreases below an expected level after some iterations. After reaching the expected level, we adjust the learning rate once again to make the gradient norm further converge. According to different learning rates, the training process is divided into different stages (Fig.~\ref{fig:basic_idea} (d)). By suppressing the gradients norm stage-wise, we can reduce the noises and improve the model utility while still providing the sufficient privacy protection. 
\begin{figure*}
	\centering
	\includegraphics[width=0.9\linewidth]{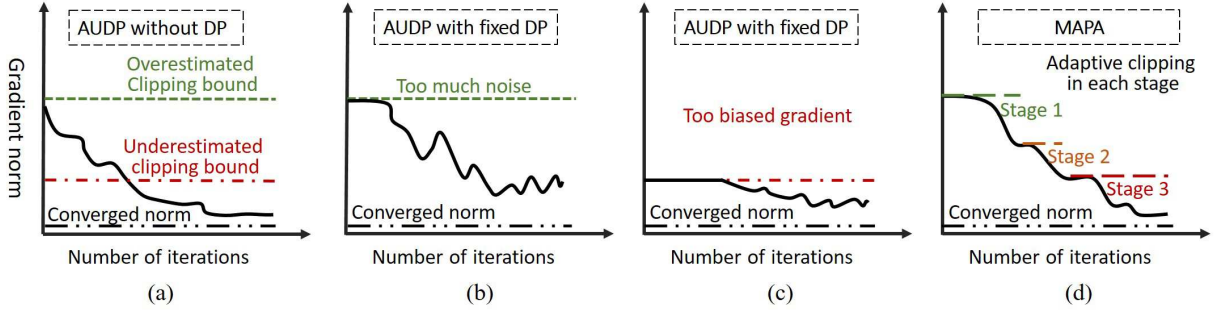}
	\caption{Illustration of multi stage adjustable DP mechanism.}
	\label{fig:basic_idea}
\end{figure*}

\subsection{Adaptive Gradient Bound Estimation}
We first show how to estimate the global sensitivity $\Delta S$ at the beginning. 

\begin{theorem}\label{theorem:gloal_sensitivity_cheb}
	For any failure probability $0<\delta<1$, if the global sensitivity $\Delta S$ satisfying
	\begin{align}\label{eq:global_sensitivity_cheb}
	\left(1-{4\sigma^2}/{(b^2\Delta S^2)}\right)^2\geq 1-\delta,
	\end{align}
	then the $t$-th iteration of AUDP satisfies $(\varepsilon_{k(t)},\delta)$-DP, where $k(t)$ means the noisy gradient is received from the $k(t)$-th edge server.
\end{theorem}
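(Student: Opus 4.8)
\textbf{Proof proposal for Theorem \ref{theorem:gloal_sensitivity_cheb}.}

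The plan is to reduce the $(\varepsilon_{k(t)},\delta)$-DP claim to the pure $\varepsilon_{k(t)}$-DP statement of Theorem \ref{theorem:varepsilon_DP} by showing that, with probability at least $1-\delta$ over the random draw of the mini-batch samples, the realized pair of gradients on two adjacent mini-batches $\mathcal{B}$ and $\mathcal{B}'$ is no farther apart than the prescribed sensitivity $\Delta S$. Concretely, recall that the difference of the two average gradients is $\frac{1}{b}(\nabla F(x,\xi_b)-\nabla F(x,\xi_b'))$, so the event we must control is $\|\nabla F(x,\xi_b)-\nabla F(x,\xi_b')\| \le b\,\Delta S$. The first step is to bound the second moment of a single stochastic gradient: from the Bounded Variance assumption and the unbiasedness in Assumption \ref{assumption:unbiased_smooth_variance}, one has $\mathbb{E}_\xi\|\nabla F(x,\xi)-\nabla f(x)\|^2 \le \sigma^2$, and hence $\mathbb{E}\|\nabla F(x,\xi_b)-\nabla F(x,\xi_b')\|^2 \le 4\sigma^2$ (expanding the square, using independence of $\xi_b,\xi_b'$ and that each centered term has variance at most $\sigma^2$, so the cross term vanishes and the bound is the sum of the two individual bounds, i.e.\ $2\sigma^2+2\sigma^2$).

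The second step is a Chebyshev/Markov-type tail bound. Applying Markov's inequality to the nonnegative random variable $\|\nabla F(x,\xi_b)-\nabla F(x,\xi_b')\|^2$,
\begin{align*}
\Pr\!\left[\|\nabla F(x,\xi_b)-\nabla F(x,\xi_b')\| > b\,\Delta S\right] \;\le\; \frac{4\sigma^2}{b^2\Delta S^2},
\end{align*}
so the complementary "good" event holds with probability at least $1-4\sigma^2/(b^2\Delta S^2)$. Since the bad event has to be avoided for \emph{both} adjacent directions (or, depending on how one accounts for the two data sets, one needs the clean event to hold when comparing $\mathcal{B}$ against $\mathcal{B}'$ symmetrically), the probability that the sensitivity bound holds in the relevant sense is at least $\left(1-4\sigma^2/(b^2\Delta S^2)\right)^2$, which by hypothesis \eqref{eq:global_sensitivity_cheb} is at least $1-\delta$. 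On this clean event the Laplace-type mechanism with scale $\Delta S/\varepsilon_{k(t)}$ gives exactly the pure $\varepsilon_{k(t)}$ ratio bound as in the proof of Theorem \ref{theorem:varepsilon_DP}; on the complementary event of probability at most $\delta$ we make no guarantee, which is precisely the slack that the additive $\delta$ term in the definition of $(\varepsilon,\delta)$-DP absorbs. Combining the two cases yields $\Pr[\mathcal{A}(\mathcal{B})=\omega] \le e^{\varepsilon_{k(t)}}\Pr[\mathcal{A}(\mathcal{B}')=\omega] + \delta$ for every output $\omega$, which is the claim.

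The main obstacle I anticipate is making the "clean event" argument fully rigorous in the $(\varepsilon,\delta)$ framework: the randomness of the mini-batch sampling and the randomness of the injected noise $\eta$ are of different natures, and one must be careful that conditioning on the sampling outcome does not disturb the calibration of the noise, and that integrating the pointwise density ratio bound over $\omega$ against the mixture (clean event $\cup$ bad event) correctly produces the additive $\delta$ rather than a multiplicative factor. A secondary subtlety is the appearance of the \emph{square} $(1-4\sigma^2/(b^2\Delta S^2))^2$ rather than a single factor — this presumably comes from needing the sensitivity bound to survive swapping the roles of $\mathcal{B}$ and $\mathcal{B}'$ (two Markov applications, union-bounded or treated as a product of independent-looking events); I would double-check whether a plain union bound giving $1-8\sigma^2/(b^2\Delta S^2)$ would suffice and whether the squared form is the intended, slightly looser, self-contained statement. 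Everything else — the moment computation and the Markov step — is routine.
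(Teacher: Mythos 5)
Your proposal is correct in substance, and its conversion of the high-probability sensitivity bound into $(\varepsilon_{k(t)},\delta)$-DP via Theorem \ref{theorem:varepsilon_DP} is exactly the paper's final step, but your tail bound takes a genuinely different route. The paper never bounds $\mathbb{E}\|\nabla F(x,\xi_b)-\nabla F(x,\xi_b')\|^2$ directly: it splits $\|\nabla F(x,\xi_b)-\nabla F(x,\xi_b')\|$ by the triangle inequality through $\nabla f(x)$, requires each centered deviation to be at most $b\Delta S/2$, applies Chebyshev to each (each event has probability at least $1-4\sigma^2/(b^2\Delta S^2)$), and multiplies the two probabilities using independence of the differing samples --- that product, not any symmetry over ``both adjacent directions,'' is the origin of the square in \eqref{eq:global_sensitivity_cheb}. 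Your single Markov application to the squared difference gives the stronger one-factor bound $1-4\sigma^2/(b^2\Delta S^2)$, which (in the meaningful regime $4\sigma^2/(b^2\Delta S^2)\le 1$) dominates $\left(1-4\sigma^2/(b^2\Delta S^2)\right)^2$ and hence still yields probability at least $1-\delta$ under the stated hypothesis; so your argument closes, and is in fact tighter, while the paper's squared form is simply what its two-event decomposition naturally produces. You should therefore drop the hand-waving attempt to recover the square (the union-bound alternative $1-8\sigma^2/(b^2\Delta S^2)$ you mention is also not what the paper does) and instead note explicitly that your one-factor bound implies the squared one. Two small repairs: your moment computation is internally inconsistent --- if the cross term vanishes by independence the bound is $2\sigma^2$, whereas $4\sigma^2$ is what the independence-free inequality $\|a-b\|^2\le 2\|a\|^2+2\|b\|^2$ gives; either constant suffices for Markov, but state one of them cleanly. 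Finally, the ``clean event'' subtlety you flag (that failing with probability $\delta$ over the sampling randomness yields the additive $\delta$ in the DP definition) is treated no more rigorously in the paper, which simply asserts that $\varepsilon$-DP with probability $1-\delta$ means $(\varepsilon,\delta)$-DP, so your level of care there matches the source.
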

\begin{proof}
	For any two adjacent mini-batches differing the last sample, we have
	\begin{align*}
	&\Pr\{\|g_{t}(\mathcal{B})-g_{t}(\mathcal{B}')\|\leq \Delta S\}\\
	& = \Pr \left\{ \mathbb{E}\|\nabla F(x,\xi_{n})-\nabla F(x,\xi_{n}')\|\leq b\Delta S \right\}\\
	& \geq \Pr \{ \mathbb{E}\|\nabla F(x,\xi_{n})-\nabla f(x)\|  \\
	&\quad + \mathbb{E}\|\nabla f(x)-\nabla F(x,\xi_{n}')\|\leq b\Delta S \}\\
	& \geq \Pr \{ \mathbb{E}\|\nabla F(x,\xi_{n})-\nabla f(x)\|\leq b\Delta S/2\}\cdot\\
	& \quad \Pr \{ \mathbb{E}\|\nabla F(x,\xi_{n}')-\nabla f(x)\|\leq b\Delta S/2\}\\
	& \geq \left(1-{4\sigma^2}/{(b^2\Delta S^2)}\right)^2.
	\end{align*}
	So, according to Theorem \ref{theorem:varepsilon_DP}, if the sensitivity satisfies Eq.\;(\ref{eq:global_sensitivity_cheb}), the output of AUDP is $\varepsilon_{k(t)}$-DP with probability $1-\delta$. In other words, AUDP guarantees $(\varepsilon_{k(t)},\delta)$-DP.
\end{proof}

The Cloud server can set different $\Delta S$ to satisfy different requirement (i.e., the failure probability $\delta$) of edge servers based on Theorem \ref{theorem:gloal_sensitivity_cheb}. However, $\Delta S$ may be quite larger than the actual global sensitivity and will introduce predominant noise to gradients, possibly leading to the failure of model convergence. Therefore, to begin with a large global sensitivity $\Delta S$, we should adjust and update $\Delta S$ dynamically to ensure the model convergence while guaranteeing the privacy. In particular, considering that gradient converges with the convergence of model, we first analyze the convergence of the gradient. Theorem \ref{theorem:errorbound_gradient} shows that we can adjust the learning rate to ensure the convergence of the gradient norm.



\begin{theorem}\label{theorem:errorbound_gradient}
	Assume that Assumptions \ref{assumption:unbiased_smooth_variance} and \ref{assumption:indepenence_boundeddelay} hold. If the learning rate $\gamma_{t}$ is a constant $\gamma$ satisfying
	\begin{equation}\label{eq:stepsize_constant}
	\gamma^{-1} \geq 2L(\tau_{max}+1),
	\end{equation}
	then the output of AUDP satisfies the following result
	\begin{align}
	&\min_{t\in\{1,\cdots,T\}}\mathbb{E}\|\nabla f(x_{t})\|^{2}\leq \frac{1}{T}\sum_{t=1}^{T}\mathbb{E}\|\nabla f(x_{t})\|^{2}\notag\\
	&\leq \frac{2(f(x_{1})-f(x^{*}))}{T\gamma}+ 2\Delta_{b}L\gamma.\label{eq:errorbound_gradient}
	\end{align}
\end{theorem}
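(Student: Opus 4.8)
The plan is to mimic the standard descent-lemma argument for non-convex smooth stochastic optimization, but carrying the staleness term through carefully. First I would start from the $L$-smoothness of $f$ (which holds with the same constant $L$ by Assumption~\ref{assumption:unbiased_smooth_variance}, as noted after it) applied to the update $x_{t+1}=x_t-\gamma\,\tilde g_{t-\tau(t)}$, giving
\begin{align*}
f(x_{t+1})\leq f(x_t)-\gamma\langle\nabla f(x_t),\tilde g_{t-\tau(t)}\rangle+\tfrac{L\gamma^2}{2}\|\tilde g_{t-\tau(t)}\|^2.
\end{align*}
Taking expectation and using that $\eta_t$ and the batch sampling at step $t-\tau(t)$ are zero-mean and independent of $x_{t-\tau(t)}$, the cross term becomes $-\gamma\langle\nabla f(x_t),\nabla f(x_{t-\tau(t)})\rangle$, which I would split as $-\tfrac{\gamma}{2}\|\nabla f(x_t)\|^2-\tfrac{\gamma}{2}\|\nabla f(x_{t-\tau(t)})\|^2+\tfrac{\gamma}{2}\|\nabla f(x_t)-\nabla f(x_{t-\tau(t)})\|^2$ via the polarization identity. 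This isolates the desired $-\tfrac{\gamma}{2}\|\nabla f(x_t)\|^2$ term.

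Next I would control the two error terms. For the quadratic noise term, $\mathbb{E}\|\tilde g_{t-\tau(t)}\|^2=\mathbb{E}\|\nabla f(x_{t-\tau(t)})\|^2+\mathbb{E}\|\tilde g_{t-\tau(t)}-\nabla f(x_{t-\tau(t)})\|^2\le \mathbb{E}\|\nabla f(x_{t-\tau(t)})\|^2+\Delta_b$ exactly as in Eqs.~(\ref{eq:eq1})–(\ref{eq:eq2}). For the gradient-drift term, Lipschitzness gives $\|\nabla f(x_t)-\nabla f(x_{t-\tau(t)})\|\le L\|x_t-x_{t-\tau(t)}\|=L\|\sum_{j=t-\tau(t)}^{t-1}\gamma\,\tilde g_{j-\tau(j)}\|\le L\gamma\sum_{j}\|\tilde g_{j-\tau(j)}\|$, and squaring with Cauchy–Schwarz over at most $\tau_{max}$ summands yields $\|\nabla f(x_t)-\nabla f(x_{t-\tau(t)})\|^2\le L^2\gamma^2\tau_{max}\sum_{j=t-\tau(t)}^{t-1}\|\tilde g_{j-\tau(j)}\|^2$. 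Summing over $t=1,\dots,T$, each index $\|\tilde g_{j-\tau(j)}\|^2$ is counted at most $\tau_{max}$ times, so $\sum_{t=1}^T\mathbb{E}\|\nabla f(x_t)-\nabla f(x_{t-\tau(t)})\|^2\le L^2\gamma^2\tau_{max}^2\sum_{t=1}^T\mathbb{E}\|\tilde g_{t-\tau(t)}\|^2$.

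Then I would assemble everything: telescoping $\sum_t(f(x_t)-f(x_{t+1}))\le f(x_1)-f(x^*)$, collecting all $\mathbb{E}\|\nabla f(x_{t-\tau(t)})\|^2$ and $\mathbb{E}\|\tilde g_{t-\tau(t)}\|^2$ contributions, and re-indexing the stale sums to ordinary sums over $t=1,\dots,T$ (up to harmless boundary terms, using $\tilde g$ with negative index set to the natural initial value or simply dropping the finitely many extra nonnegative terms on the favorable side). The coefficient of $\sum_t\mathbb{E}\|\nabla f(x_t)\|^2$ comes out to something like $\tfrac{\gamma}{2}-\tfrac{L\gamma^2}{2}(1+\tau_{max})-$ (drift contribution $\sim L^2\gamma^3\tau_{max}^2$), and the condition $\gamma^{-1}\ge 2L(\tau_{max}+1)$ is exactly what makes this coefficient at least $\tfrac{\gamma}{4}$ (or some fixed positive fraction) while sending the $\|\tilde g\|^2$ coefficient non-positive so those terms can be dropped. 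Dividing through by $T\cdot(\text{coefficient})$ and bounding the residual $\Delta_b$ terms gives $\tfrac1T\sum_t\mathbb{E}\|\nabla f(x_t)\|^2\le \tfrac{2(f(x_1)-f(x^*))}{T\gamma}+2\Delta_b L\gamma$, and $\min_t\le$ average is immediate.

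The main obstacle I expect is the bookkeeping in the double sum: making sure the stale-index reindexing is done cleanly so that the drift term's contribution to the $\|\tilde g\|^2$ coefficient is genuinely $O(L^2\gamma^3\tau_{max}^2)$ and verifying that $\gamma^{-1}\ge 2L(\tau_{max}+1)$ really suffices to kill both the $\|\tilde g\|^2$ coefficient and leave the $\|\nabla f(x_t)\|^2$ coefficient bounded below by $\gamma/4$ — the constants have to line up precisely to land the stated bound with the clean factors $2$ and $2$. Handling the $\tau_{max}$ boundary terms at the start of the sum (indices $t-\tau(t)<1$) without generating extra error terms is a minor but easy-to-botch point; I would absorb them using nonnegativity or the common convention $x_t=x_1$ for $t\le 0$.
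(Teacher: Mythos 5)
Your overall skeleton is the same as the paper's (descent lemma, polarization identity to isolate $-\tfrac{\gamma}{2}\|\nabla f(x_t)\|^2$ and $-\tfrac{\gamma}{2}\|\nabla f(x_{t-\tau(t)})\|^2$, the bound $\mathbb{E}\|\tilde g_{t-\tau(t)}\|^2\le\Delta_b+\mathbb{E}\|\nabla f(x_{t-\tau(t)})\|^2$, telescoping), but there is a genuine gap in how you control the drift term, and it is precisely the point you flagged as ``bookkeeping'': the constants do not line up. By applying Cauchy--Schwarz directly to $\bigl\|\sum_{j=t-\tau(t)}^{t-1}\gamma\,\tilde g_{j-\tau(j)}\bigr\|^2$ you get a factor $\tau_{max}$ per step and another factor $\tau_{max}$ from the re-indexing, so the drift contributes $\tfrac{L^2\gamma^3\tau_{max}^2}{2}\sum_t\mathbb{E}\|\tilde g_{t-\tau(t)}\|^2$, and hence a noise residual of order $\Delta_b\bigl(L\gamma+L^2\gamma^2\tau_{max}^2\bigr)$ after dividing by $T\gamma/2$. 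Under the stated condition $\gamma^{-1}\ge 2L(\tau_{max}+1)$ one only has $L^2\gamma^2\tau_{max}^2\le L\gamma\cdot\tfrac{\tau_{max}^2}{2(\tau_{max}+1)}$, so your route yields a noise term of order $\Delta_b L\gamma\,\tau_{max}$ rather than $2\Delta_b L\gamma$; already for $\tau_{max}\ge 3$ it exceeds the stated bound, and making it work would force the stronger step size $\gamma^{-1}\gtrsim L\tau_{max}^2$, which is not what the theorem assumes.

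The missing idea (and what the paper does) is to split the displacement \emph{before} applying Cauchy--Schwarz: write $\tilde g_{j-\tau(j)}=\bigl(\tilde g_{j-\tau(j)}-\nabla f(x_{j-\tau(j)})\bigr)+\nabla f(x_{j-\tau(j)})$ inside $x_t-x_{t-\tau(t)}$. The noise/sampling part is zero-mean and independent across iterations, so its cross terms vanish and $\mathbb{E}\bigl\|\sum_j\gamma\bigl(\tilde g_{j-\tau(j)}-\nabla f(x_{j-\tau(j)})\bigr)\bigr\|^2\le\Delta_b\,\tau_{max}\gamma^2$ --- only one factor of $\tau_{max}$, giving a drift-noise residual $O(L^2\gamma^3\tau_{max}\Delta_b)$ per step, which \emph{is} dominated by $L\gamma^2\Delta_b$ since $2L\gamma\tau_{max}\le 1$; Cauchy--Schwarz (with its $\tau_{max}$ penalty) is applied only to the deterministic gradient part, whose resulting $\tau_{max}\sum_j\gamma^2\|\nabla f(x_{j-\tau(j)})\|^2$ terms are absorbed by the negative $-\tfrac{\gamma}{2}\mathbb{E}\|\nabla f(x_{t-\tau(t)})\|^2$ terms (this is also why you should not erode the $-\tfrac{\gamma}{2}\|\nabla f(x_t)\|^2$ coefficient down to $\gamma/4$: all positive contributions attach to the delayed gradient, leaving the clean factor $2$ in $\tfrac{2(f(x_1)-f(x^*))}{T\gamma}$). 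With that decomposition the condition $\gamma^{-1}\ge 2L(\tau_{max}+1)$ suffices and the stated constants $2$ and $2\Delta_b L\gamma$ come out exactly.
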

\begin{proof}
	See Appendix \ref{appendix:proofs_errorbound_gradient}.
\end{proof}

Theorem \ref{theorem:errorbound_gradient} shows that AUDP algorithm can converge to a ball at the rate $O(1/T)$ with a constant learning rate. Therefore, the average norm of gradient must have a upper bound relate to $\Delta_{b}$ after sufficient iterations. Recall $\Delta_{b}=\sigma^{2}/b+\Delta_{0}$, i.e., the radius of the ball consists of two parts: sampling variance $\sigma^{2}/b$ and noise variance $\Delta_{0}$. Due to Theorem \ref{theorem:gloal_sensitivity_cheb}, $\Delta_{0}$ is inversely proportional to $b$. Meanwhile, sampling variance $\sigma^{2}/b$ is also inversely proportional to $b$. Therefore, we can increase the mini-batch size to reduce the radius to control the upper bound.

In the following, we illustrate how to use Theorem \ref{theorem:errorbound_gradient} to set the learning rate to reduce the global sensitivity gradually. Let the learning rate be
\begin{align*}
\gamma=1/(2PL(\tau_{max}+1)),
\end{align*}
where $P$ is an undetermined coefficient and $P\geq 1$ satisfies Eq.(\ref{eq:stepsize_constant}). Then, the right hand side of Eq.\;(\ref{eq:errorbound_gradient}) becomes
\begin{align*}
\frac{4PL(\tau_{max}+1)(f(x_{1})-f(x^{*}))}{T}+\frac{\Delta_{b}}{P(\tau_{max}+1)}.
\end{align*}
Let the first term be less than $\frac{\Delta_{b}}{P(\tau_{max}+1)}$, we can derive that
\begin{align}\label{eq:multi_stage_T0}
T\geq T_{0}=\frac{4P^{2}L(\tau_{max}+1)^{2}(f(x_{1})-f(x^{*}))}{\Delta_{b}}.
\end{align}

Then the right hand side of Eq.\;(\ref{eq:errorbound_gradient}) becomes $\frac{2\Delta_{b}}{P(\tau_{max}+1)}$. Therefore, the upper bound of the gradient's norm is estimated as $\sqrt{2\Delta_{b}/(P(\tau_{max}+1))}$ and the new global sensitivity after $T_{0}$ iterations is estimated as $2\sqrt{2\Delta_{b}/(P(\tau_{max}+1))}/b$,  according to Theorem \ref{theorem:varepsilon_DP}. Denote the initial estimation by Theorem \ref{theorem:gloal_sensitivity_cheb} as $\Delta S$ and the new estimation as $\Delta S'$. Note that our purpose is to reduce the global sensitivity gradually. Therefore, making the new estimation less than the initial estimation, i.e.,
\begin{align*}
\Delta S'\leq \theta \Delta S,
\end{align*}
where $\theta\in(0,1)$ is used to control the reduction ratio. We further derive that
\begin{align}\label{eq:multi_stage_P}
P\geq \frac{8\Delta_{b}}{(\tau_{max}+1)b^{2}\Delta S^{2}\theta^{2}}.
\end{align}

Therefore, if we use the above $P$ to set $\gamma$, $\Delta S$ is reduced to $\theta\Delta S$. To avoid the randomnesses of sampling and noise, we use $\sqrt{2\Delta_{b}/(P(\tau_{max}+1))}$ to clip the gradient to ensure $\Delta S_{2}$ is the new global sensitivity in the following training after $T_{0}$ iterations. We can repeat this process to gradually reduce the global sensitivity while ensuring model convergence. 

\subsection{Multi Stage Adjustable Private Algorithm (MAPA)}
With the above analysis, we propose the Multi-Stage Adjustable Private Algorithm (MAPA) to adjust the global sensitivity and the learning rate dynamically according to the varying gradient during the training process to achieve a better model utility without complicated parameter tuning.
The formal description of MAPA is shown in Algorithm \ref{algorithm:MAPA}. We give the explanations as follows.
\begin{itemize}
	\item In the initialization phase ($t=1$), all edge servers send their privacy budget $\varepsilon_{k}$ ($k=1,...,K$) to the Cloud server, which then identifies the minimal privacy budget $\varepsilon_{0}$ and initializes the model $x_t$ and $\Delta{S}$ according to Theorem \ref{theorem:gloal_sensitivity_cheb}. (Line 1 on the edge server and Lines 1$\sim$4 on the Cloud server)
	\item The process on the Cloud server is divided into different stages. From the beginning, the Cloud server runs in the first stage. In each stage, the Cloud server computes the intermediate parameter $P$, the learning rate $\gamma$, and the needed iteration number $T_s$ for the current stage. (Lines 6$\sim$8 on the Cloud server) 
	\item Once the training begins, each edge server pulls down the model $x_t$ and $\Delta{S}$ from the Cloud server, and computes the gradient $g_t$ on the local mini-batch. Then, it clips and perturbs the gradient as $\tilde{g}_t$, which is sent to the Cloud server with privacy protection. Since the edge servers are heterogeneous in computation and communication, they would generally complete these procedures independently in different time. (Lines 3$\sim$8 on the edge server)
	\item In each stage, once the Cloud server receives a stale gradient $\tilde{g}_{t-\tau(t)}$ from any edge server $k(t)$, they will update the model $x_t$ immediately and sends the updated model $x_t$ and the current global sensitivity $\Delta_{S}$ to the corresponding edge sever $k(t)$. The process repeats until the model is updated by $T_s$ times, which means the current stage finishes and the Cloud server will turn into the next stage. (Lines 10$\sim$14 on the Cloud server)
	\item Once the Cloud server finishes the training the current stage, it will set the global sensitivity goal to be reduced as $\Delta_{S}=\theta\Delta_{S}$ and computes the variance $\Delta_{b}$, then turns into the next stage. (Lines 15$\sim$16 on the Cloud server)
	\item After the model updated by sufficient iterations (i.e., $t \geq T$), the Cloud server finishes the training and broadcasts the \textit{Halt} command to all edge servers. (Line 18 on the Cloud server)
\end{itemize}


\begin{algorithm}
	\label{algorithm:MAPA}
	\caption{Multi Stage Adjustable Private Algorithm (MAPA)}
	\LinesNumbered
	\KwIn{number of edge servers $K$ and iterations $T$, mini-batch size $b$, reduction ratio $\theta$, privacy level $\varepsilon_{k}$, and probability $\delta$.}
	\KwOut{final model $x_{T}$.}
	\tcp{\textbf{($k$-th) Edge Server Side}}	
		Send $\varepsilon_{k}$ to the Cloud server;\\	
		\While{not \textit{Halt}}	
		{	
		Pull down $x_{t}$ and $\Delta S$ from the Cloud server;\\
		Compute the gradient $g_{t}(\mathcal{B}_{k})$ with $|\mathcal{B}_{k}|=b$;\\
		Clip the gradient as $g_{t}=g_{t}/\max(1,\frac{\|g_{t}\|_2}{b\Delta S/2})$;\\
		Draw a noise $\eta_{t}$ according to Eq. (\ref{eq:noise_density});\\
		Compute the noisy gradient  $\tilde{g}_{t}=g_{t}+\eta_{t}$;\\
		Send $\tilde{g}_{t}$ to the Cloud server;\\
	}
	\tcp{\textbf{The Cloud Server Side}}
	\setcounter{AlgoLine}{0}
	Receive all $\varepsilon_{k}$ from edge servers;\\
	Set $\varepsilon_{0}=\min\{{\varepsilon_{1},...,\varepsilon_{K}}\}$;\\
	$t=1$; \tcp*[h]{total iteration count}\\
	Initialize $x_t$ and $\Delta S$ (Theorem \ref{theorem:gloal_sensitivity_cheb});\\		
	\While{$t \leq T$}
	{
		Compute $P$ according to Eq.(\ref{eq:multi_stage_P});\\
		Set $\gamma^{-1}$=$2PL(\tau_{max}+1)$;\\ 
		Compute $T_{s}$ according to Eq.(\ref{eq:multi_stage_T0});\\
		$t_{s}$=1; \tcp*[h]{stage iteration count}\\
		\While{Receiving $\tilde{g}_{t-\tau(t)}$ and $t_{s}\leq T_{s}$}  
		{
		Update $x_{t}$=$x_{t}-\gamma\tilde{g}_{t-\tau(t)}$;\\
		Send $x_{t}$, $\Delta S$ to the updating edge server;\\
		$t_{s}$=$t_{s}$+1,
		$t$=$t$+$1$; \\ 
		}
		 Set $\Delta S$=$\theta\Delta S$;\\
			Compute $\Delta_{b}$=$\sigma^{2}/b+2\Delta S^{2}/\varepsilon_{0}^{2}$;\\
	 }
 	Send \textit{Halt} command to edge servers;\\
	\Return $x_{t}=x_{T}$.
\end{algorithm}

\begin{remark}
	MAPA is differentially private. Because we use $b\Delta S/2$ to clip the gradient, so the global sensitivity is $\Delta S$. Therefore, the $t$-th iteration in MAPA is $\varepsilon_{k(t)}$-DP. We don't consider the privacy of judgment $t_{s}\leq T_{s}$ here. Indeed, this can be guaranteed by the sparse vector technique \cite{dwork2014algorithmic}.
	
	We omit the discussion of the total privacy cost in this paper. Because the privacy budget is fixed in each iteration, the total budget is an accumulation of individual privacy costs in all iterations. By using the simple composition theorem, the total budget is $\sum_{t=1}^{T}\varepsilon_{k}(t)$, which increases linearly with the number of iterations. If we use the advanced composition theorem \cite{dwork2014algorithmic} or moment account for Gaussian mechanism \cite{abadi2016deep}, then it becomes a sub-linear function.
\end{remark}

\section{Evaluation}
\label{section:Experiments}
In this section, we conducted extensive experimental studies to validate the efficiency and effectiveness of MAPA.

\subsection{Experimental Methodology}
\subsubsection{Simulation and Testbed Experiment Implementations}
For a thorough evaluation, MAPA was implemented in both Matlab and Python for simulations and testbed experiments respectively. Codes are available in github.com~\cite{mapa2019}.
Specifically, we encapsulated MAPA's Python implementations in docker containers\footnote{https://www.docker.com/} for the edge servers and the Cloud server respectively.
To verify MAPA's performance in practical AFL scenarios with different scales, different numbers (from 5 to 20) of container-based edge servers were deployed on a local workstation (with a 10-core CPU and 128 GB memory).
The container-based Cloud server was deployed on a virtual machine (with a 24-core CPU and 256 GB memory) of the Alibaba Cloud\footnote{https://www.alibabacloud.com/product/ecs}. 
Communications between each edge server and the Cloud server were based on Eclipse Mosquitto\footnote{https://hub.docker.com/$\_$/eclipse-mosquitto} through the Internet. 


To set up the staleness in AFL, we adopted the cyclic delayed method \cite {agarwal2011distributed} for simulations, where the maximum delay of edge-cloud communications equals the total number of edge servers.
For testbed experiments, the actual staleness caused by heterogeneous delays between different edge servers and the Cloud server was adopted.

\subsubsection{Learning Models.} For generality, we applied MAPA to three machine learning models: Logistic Regression (LR) for a 2-way classifier; Support Vector Machine (SVM) and Convolutional Neural Network (CNN) for a 10-way classifier. It should be noted that although our theoretical results are derived based on differentiable convex functions (for LR), we will show that MAPA is also applicable to non-differentiable (for SVM) and non-convex (for CNN) loss functions. In particular, CNN consists of five layers (two convolutional layers, two pooling layers, and one full connection layer), noise is only added to the gradient of the first convolutional layer, which still guarantees differential privacy for whole CNN model due to the post-processing property of DP \cite{dwork2014algorithmic}.

\subsubsection{Datasets.} We adopted two commonly-used image datasets USPS and MNIST in our evaluations. USPS contains 9,298 gray-scale images with 256 features (7,291 images for training and 2,007 images for testing). MNIST contains 70,000 gray-scale images with 784 features (60,000 for training and 10,000 for testing).

\subsubsection{Comparison Algorithms and Parameter Settings.} For comprehensive evaluations, we compared MAPA (Algorithm \ref{algorithm:MAPA}) with the baseline algorithm AUDP to show the utility improvement. Besides, we also compared MAPA with the state-of-the-art asynchronous learning algorithm, the asynchronous Stochastic Gradient Descent Algorithms (ASGD) \cite{agarwal2011distributed,feyzmahdavian2016asynchronous} in terms of fast convergence speed. Also, the standard centralized Stochastic Gradient Descent algorithm without privacy protection, denoted as CSGD, is also compared for reference. 

The compared algorithms with their detailed parameters settings, such as learning rates and global sensitivities, are all listed in Table \ref{table: key parameters}. For all algorithms, the regularized parameter was set as $\lambda=0.0001$. Without a particular explanation, the number of edge servers $K$ was set as $5$, and the mini-batch size was set as $12$. 
Additionally, $\theta$ was set as $0.5$ in MAPA.

\begin{table*}[htbp]\centering
	\caption{Comparison Algorithms and Parameters}
	\label{table: key parameters}
	\begin{center}
		\begin{tabular}{ccccc}
			\hline
			Algorithm&Description&learning rate ($\gamma_{t}^{-1}$)&global sensitivity ($\Delta S$)\\
			\hline
			CSGD&Centralized stochastic gradient descent \cite{bubeck2015convex}&$\gamma_{t}^{-1}$=$L+\sqrt{t+1}\cdot\sigma/(R\sqrt{b})$&N/A\\
			MAPA& Multi stage adjustable private algorithm &\tabincell{l}{Stage $s+1$: $\gamma^{-1}=2PL(\tau_{max}+1)$,\\ where $P$=$\max\left\{\frac{8\Delta_{b}}{(\tau_{max}+1)b^{2}\Delta S_{s}^{2}\theta^{2}},1\right\}$}&\tabincell{l}{Initial value $\Delta S_{0}$: by  Eq.(\ref{eq:global_sensitivity_cheb})\\Stage $s+1$: $\Delta S_{s+1}=2\sqrt{\Delta_{s}}/b$}\\
			AUDP&Asynchronous update with differential privacy  & $\gamma_{t}^{-1}=L(\tau_{max}+1)+\sqrt{\Delta_{b}+1}\sqrt{t}$&Determined by actual model\\			
			ASGD& Asynchronous stochastic gradient descent \cite{feyzmahdavian2016asynchronous} &$\gamma_{t}^{-1}$=$L(\tau_{max}+1)^2+\frac{\sqrt{t+1}\cdot\sigma}{R\sqrt{b}}$&N/A\\			
			\hline
		\end{tabular}
	\end{center}
\end{table*}

\subsection{Simulation Results}
\label{subsection: ex-dp}
In this section, we conducted MATLAB simulation for our proposed MAPA to demonstrate its effectiveness of privacy preserving, validate its trade-off between the model utility and privacy, as well as the efficiency in model convergence.


\subsubsection{Demonstration of Privacy Protection}

This subsection demonstrates the privacy-preserving effects and adaptive clipping bounds effects in the training process of MAPA. 

To show the privacy-preserving effect, two models, LR and SVM\footnote{For simplicity, we omitted the demonstration results for CNN.}, were trained on MNIST and the privacy budget in each iteration of MAPA was set as 0.01, 0.1 and 1 respectively. The iteration number ranges from 2000 to 14,000. To measure the privacy-preserving effects, we adopted the inferring method in \cite{phong2018privacy} to recover the digital images from the gradients during the iterations. Fig. \ref{figure:DP_protect_gradient} illustrates the inferred digital images under different levels of differential privacy. As shown in both LR and SVM, when the privacy is higher (i.e., $\varepsilon=0.01$), the inferred images are totally blurred compared with the original image, which shows MAPA can be resilient to the inference attack; when the privacy is lower (i.e., $\varepsilon=1$), some inferred images can be approximately restored, which also shows the privacy protection degrades with the increase of privacy budget $\varepsilon$. Therefore, with proper choice of privacy budget, MAPA can effectively control the privacy protection for the AFL system.

To show the adaptive bound clipping effect, LR was trained on USPS for 100 edge servers and the privacy budget in each iteration of AUDP and MAPA was set as $0.1$. Fig. \ref{figure:multi_stage_adjustable} demonstrates how the gradient norm varies with the iteration number. In particular, Fig. \ref{figure:ASGD_gradient} shows the general gradient evolution of ASGD without DP, where the learning rate was set as  $\gamma_{t}^{-1}=L(\tau_{max}+1)^2+\frac{\sqrt{t+1}\cdot\sigma}{R\sqrt{b}}$. Fig. \ref{figure:AUDP_gradient} illustrates the clipped gradients for AUDP with three different clipping bounds, 15, 3 and 0.2. As we can see, either too high or too low clipping bound would cause utility loss. Instead, a good model utility can be achieved when the clipping bound is set appropriate. However, this is hard to estimate before training. Fig. \ref{figure:MAPA_gradient} draws the results for MAPA using different initial clipping bounds 200, 100 and 10, respectively. As shown, MAPA can adaptively adjust the global sensitivity dynamically in the training process and obtain nearly the same converged model utility as AUDP, regardless of the initial estimation of the global sensitivity. 




\begin{figure*}[!htbp]
	\centering
	\subfigure[LR on MNIST]{
		\includegraphics[width=0.45\linewidth]{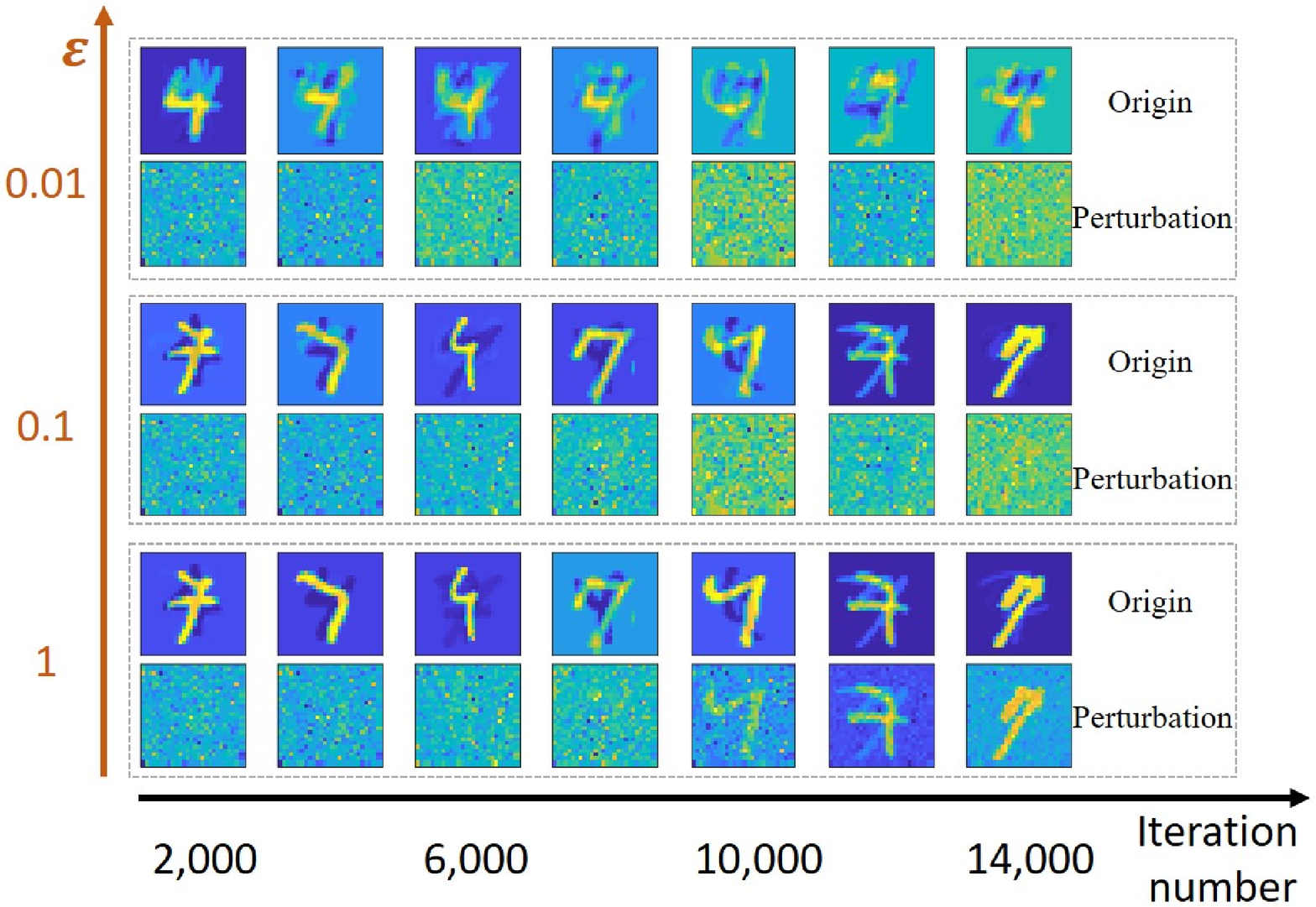}
		\label{figure:LR_gradient_information}
	}
	\subfigure[SVM on MNIST]{
		\includegraphics[width=0.45\linewidth]{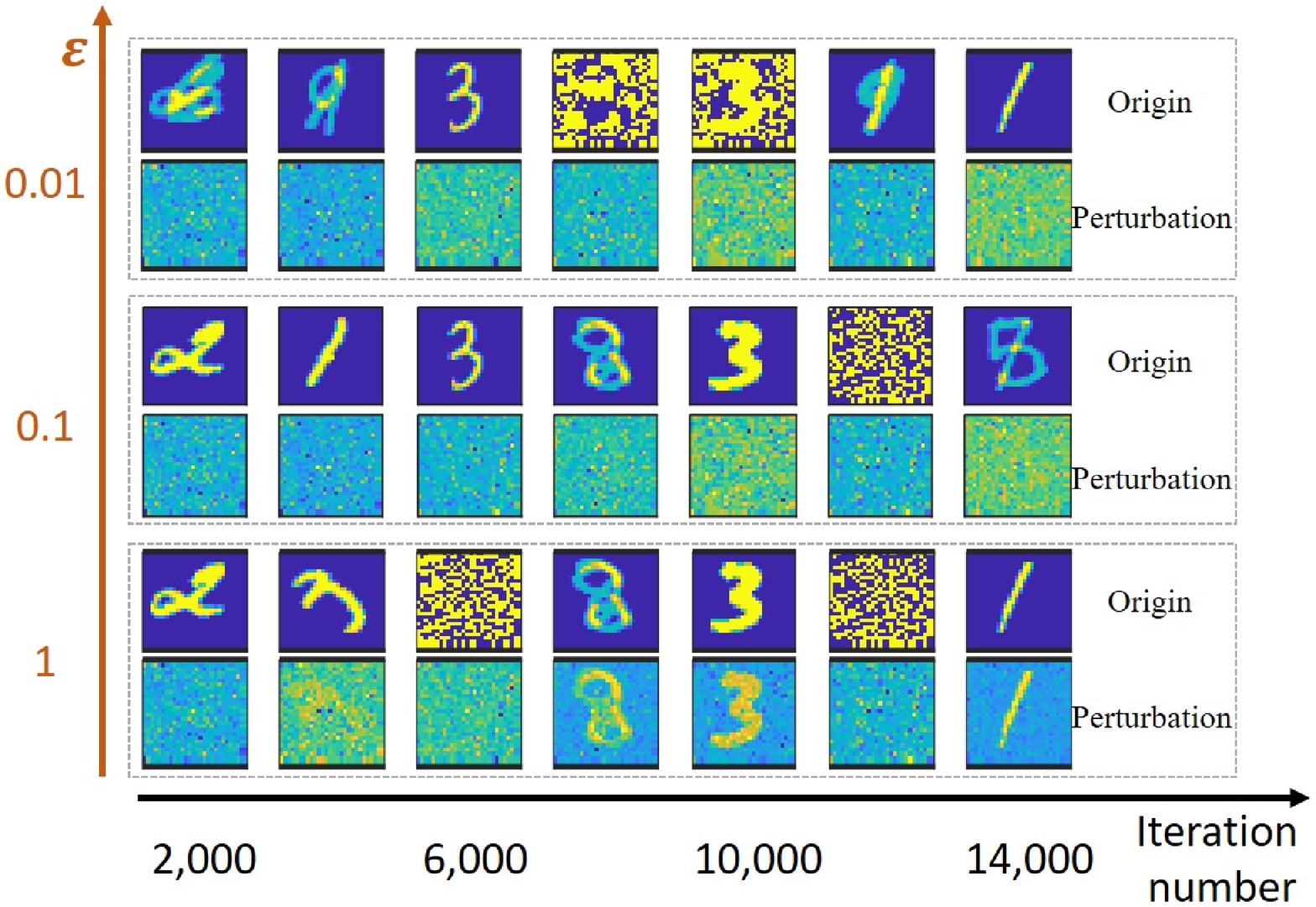}
		\label{figure:SVM_gradient_information}
	}
	\caption{Inference results under different privacy levels.}
	\label{figure:DP_protect_gradient}
\end{figure*}

\begin{figure*}[!htbp]
	\centering
	\subfigure[AUDP without DP]{
		\includegraphics[width=5.65cm]{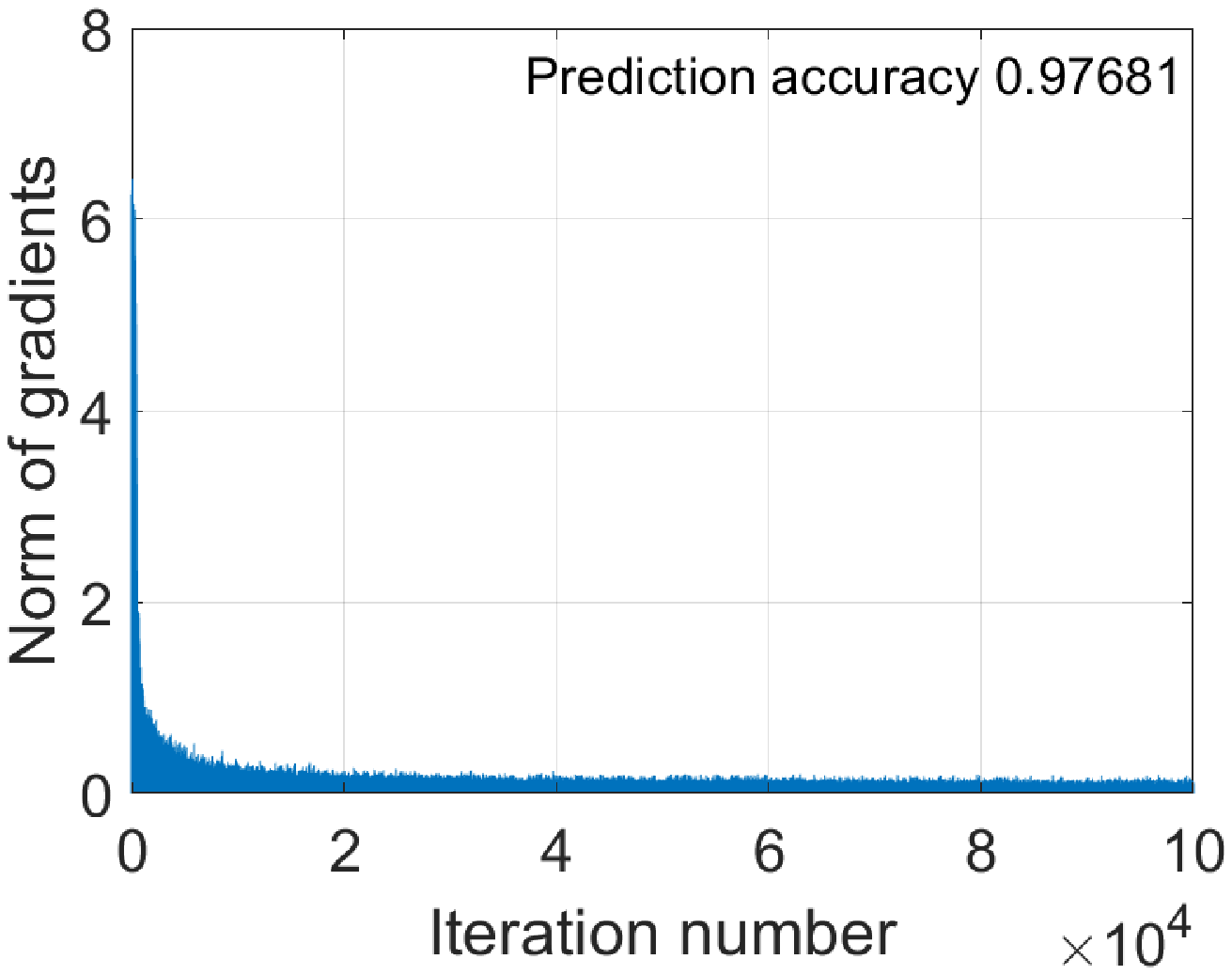}
		\label{figure:ASGD_gradient}
	}
	\subfigure[AUDP with different fixed clipping bounds]{
		\includegraphics[width=5.65cm]{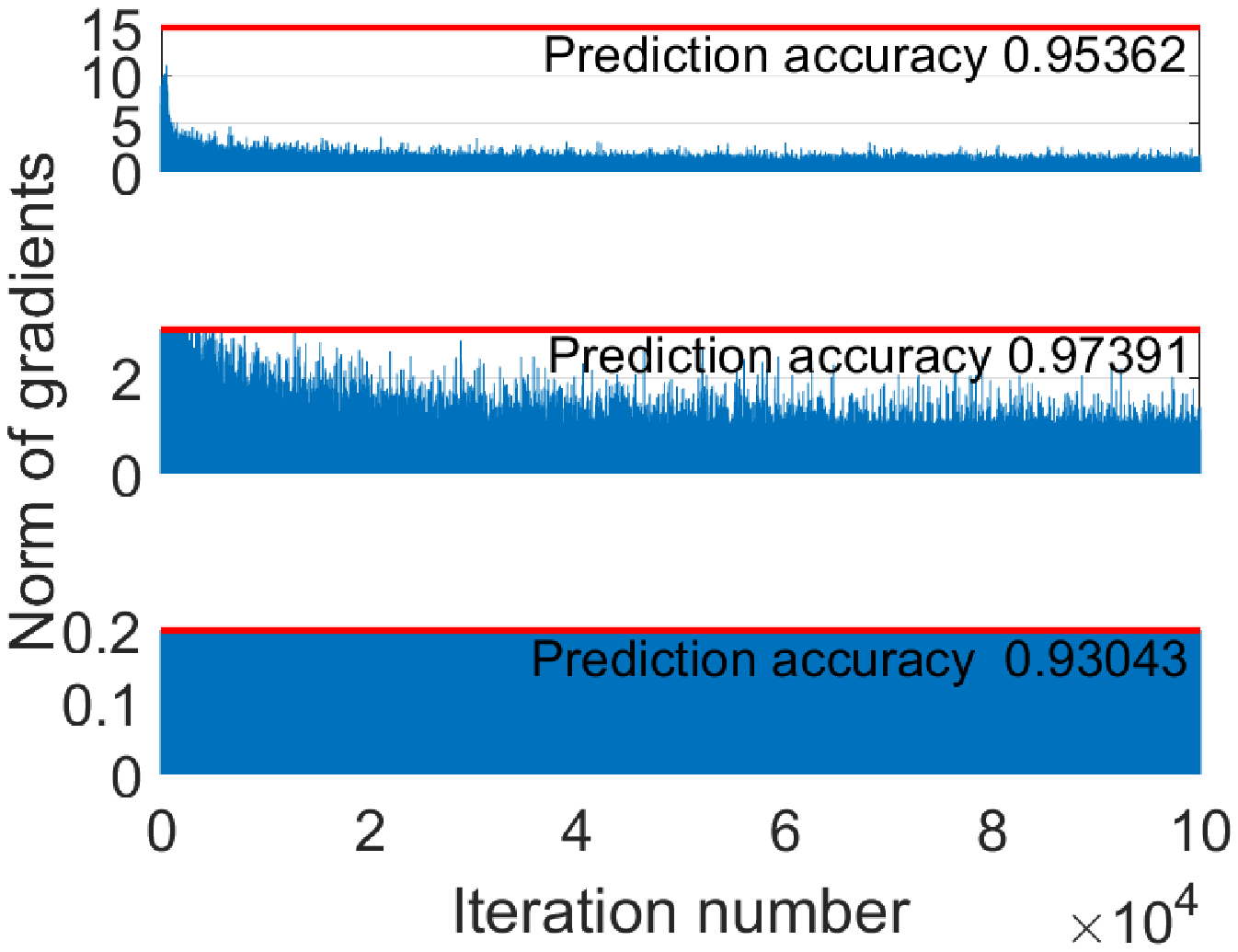}
		\label{figure:AUDP_gradient}
	}
	\subfigure[MAPA with different initial clipping bounds]{
		\includegraphics[width=5.65cm]{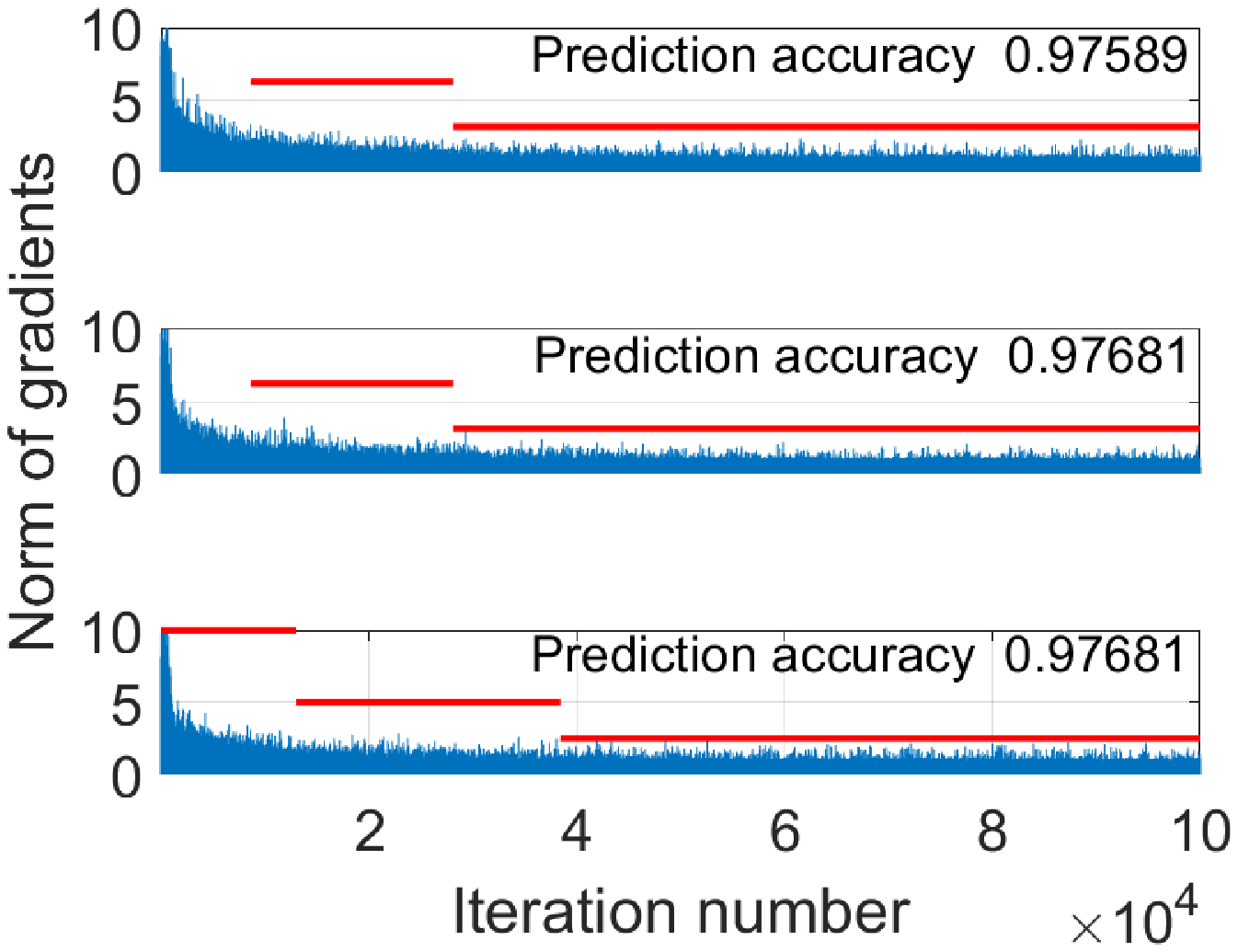}
		\label{figure:MAPA_gradient}
	}
	\caption{Inference results between AUDP and MAPA with different clipping bounds.}
	\label{figure:multi_stage_adjustable}
\end{figure*}

\subsubsection{Model Accuracy vs. Privacy Guarantee}
\label{section: simulations for TAPA}
In this subsection, we study the impacts of different privacy levels on the model utility. In particular, we simulated an edge-cloud FL system with five edge servers, where three models LR, SVM and CNN were trained for a given number of iterations (i.e., 15,000 for LR, 10,000 for SVM and 25,000 for CNN) on training datasets with the privacy budget in each iteration ranging from 0.1 to 0.5. Then the average prediction accuracy on testing datasets is collected.

Fig. \ref{figure:comparison_noise} compares the model accuracy of MAPA with the baseline algorithm AUDP under different levels of privacy. The results on both non-private algorithms CSGD and ASGD are also compared for reference. As we can see, firstly, both the prediction accuracy of privacy-preserving algorithms MAPA and AUDP increase with the differential privacy budget $\varepsilon$, which shows the genuine trade-off between the model accuracy and the privacy guarantee. 

Secondly, MAPA can effectively improve the prediction accuracy of AUDP in all sub-figures for  different $\varepsilon$ and the improvement is more significant for small privacy regimes. Especially, the maximal improvement can reach 20\% in Fig. \ref{figure:nl_mnist_cnn} and even 100\% in Fig. \ref{figure:nl_usps_cnn}. This shows that MAPA can achieve a better trade-off by effectively reducing the noise needed for privacy guarantee. 

Thirdly, MAPA can achieve a similar prediction accuracy as the non-private ASGD in all subplots with the increase of privacy budget. Particularly, for LR, the prediction accuracy of MAPA is even higher than ASGD. That is because the prediction accuracy of LR is mostly decided by the initiation phase and is very sensitive to the learning rate. Meanwhile, MAPA has a larger learning rate than ASGD at the beginning phase, leading to higher accuracy. In summary, MAPA can achieve much higher model utility with a sufficient different privacy guarantee. 


\begin{figure*}[!htbp]
	\centering
	\subfigure[LR on MNIST]{
		\includegraphics[width=5.7cm]{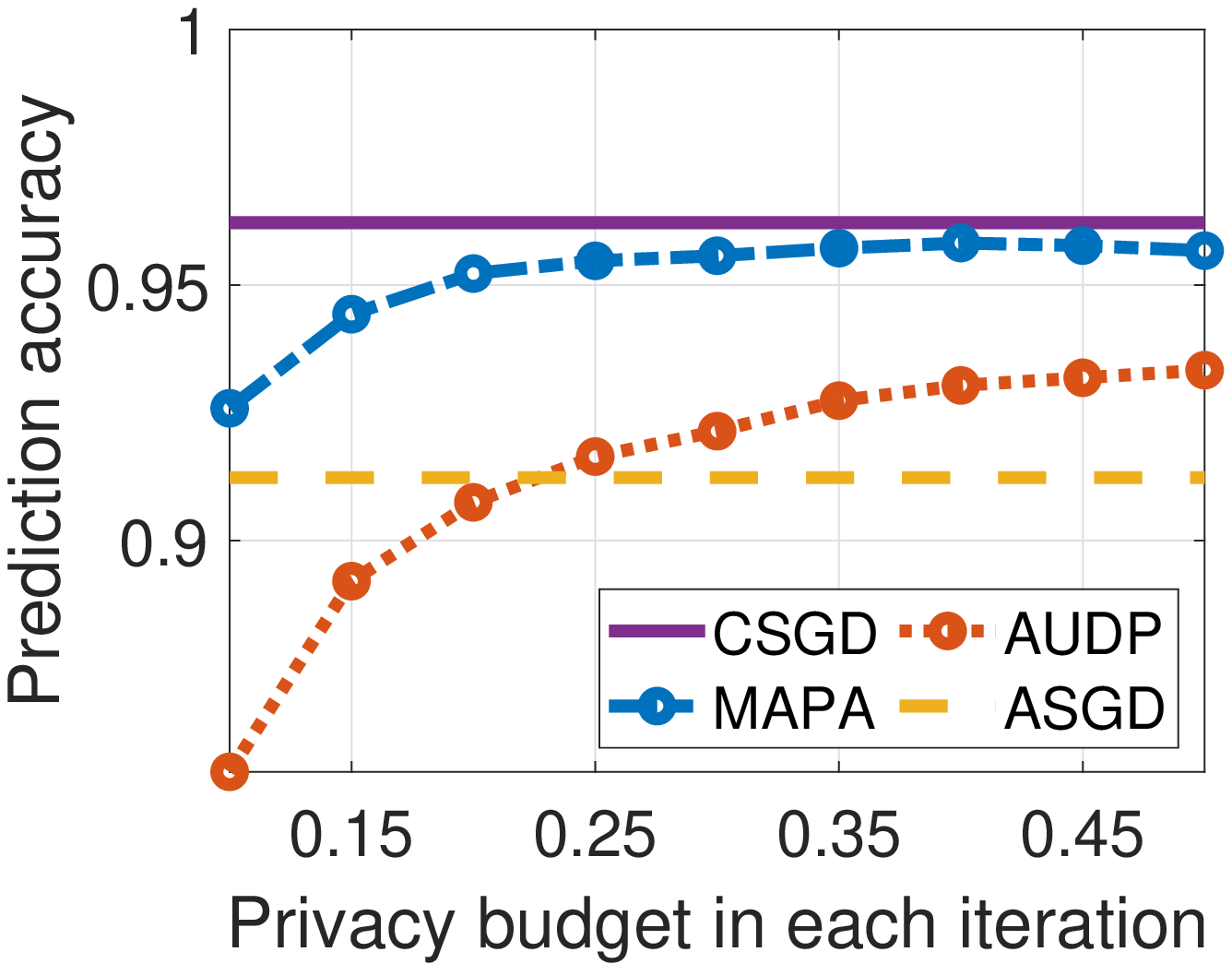}
		\label{figure:nl_mnist_lg}
	}
     \subfigure[SVM on MNIST]{
	    \includegraphics[width=5.7cm]{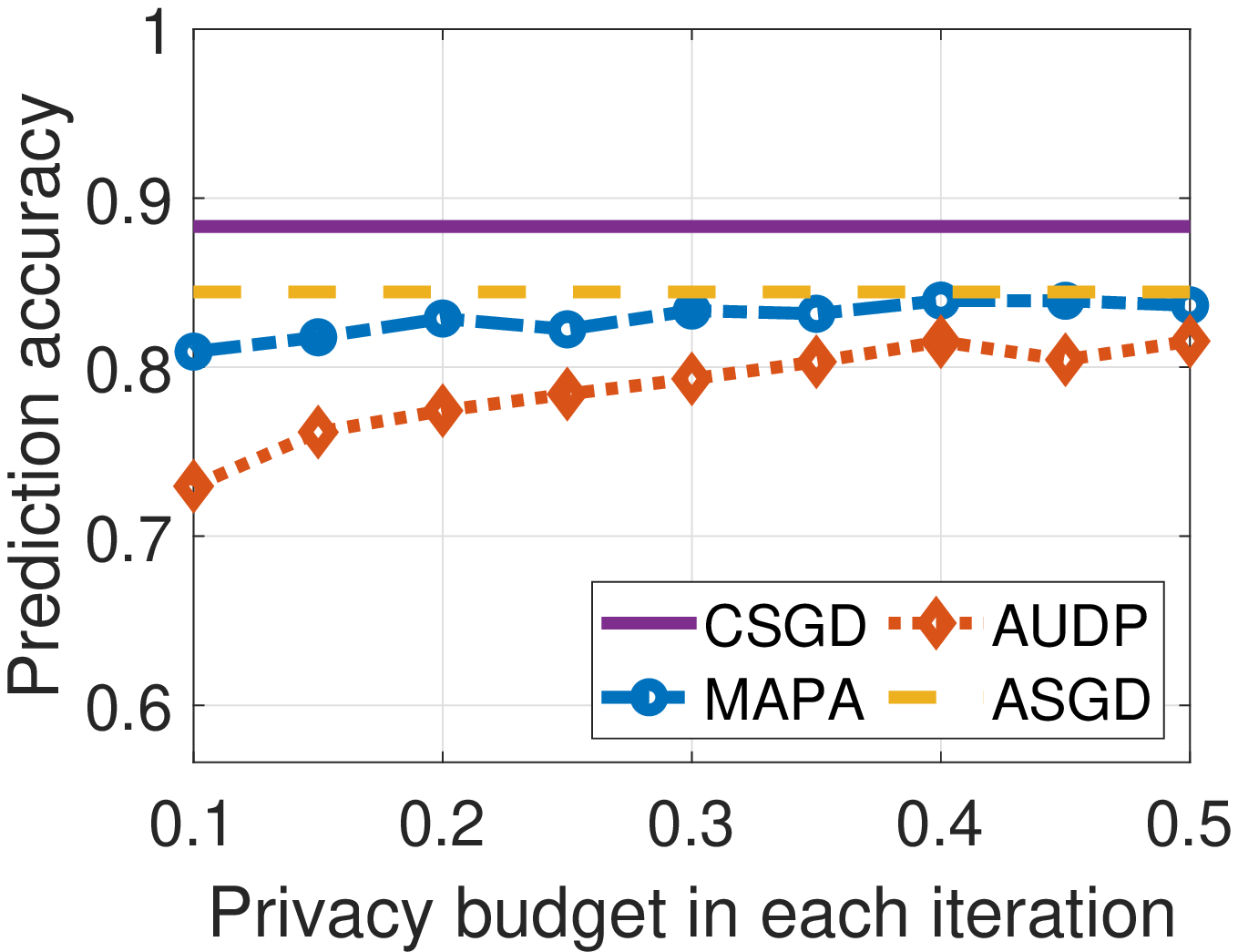}
	    \label{figure:nl_mnist_svm}
    }
     \subfigure[CNN on MNIST]{
     	\includegraphics[width=5.7cm]{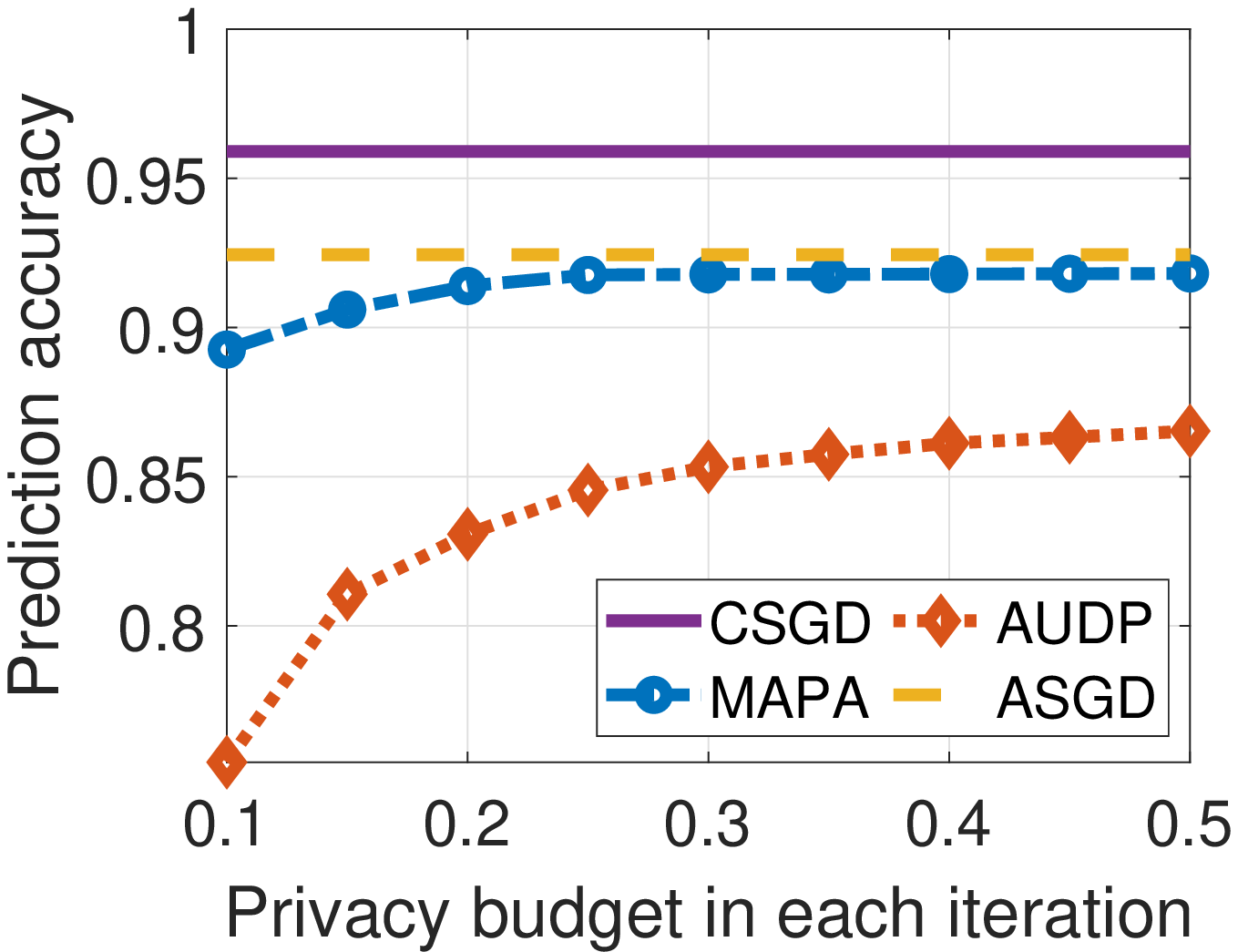}
    	\label{figure:nl_mnist_cnn}
    }
	\subfigure[LR on USPS]{
		\includegraphics[width=5.7cm]{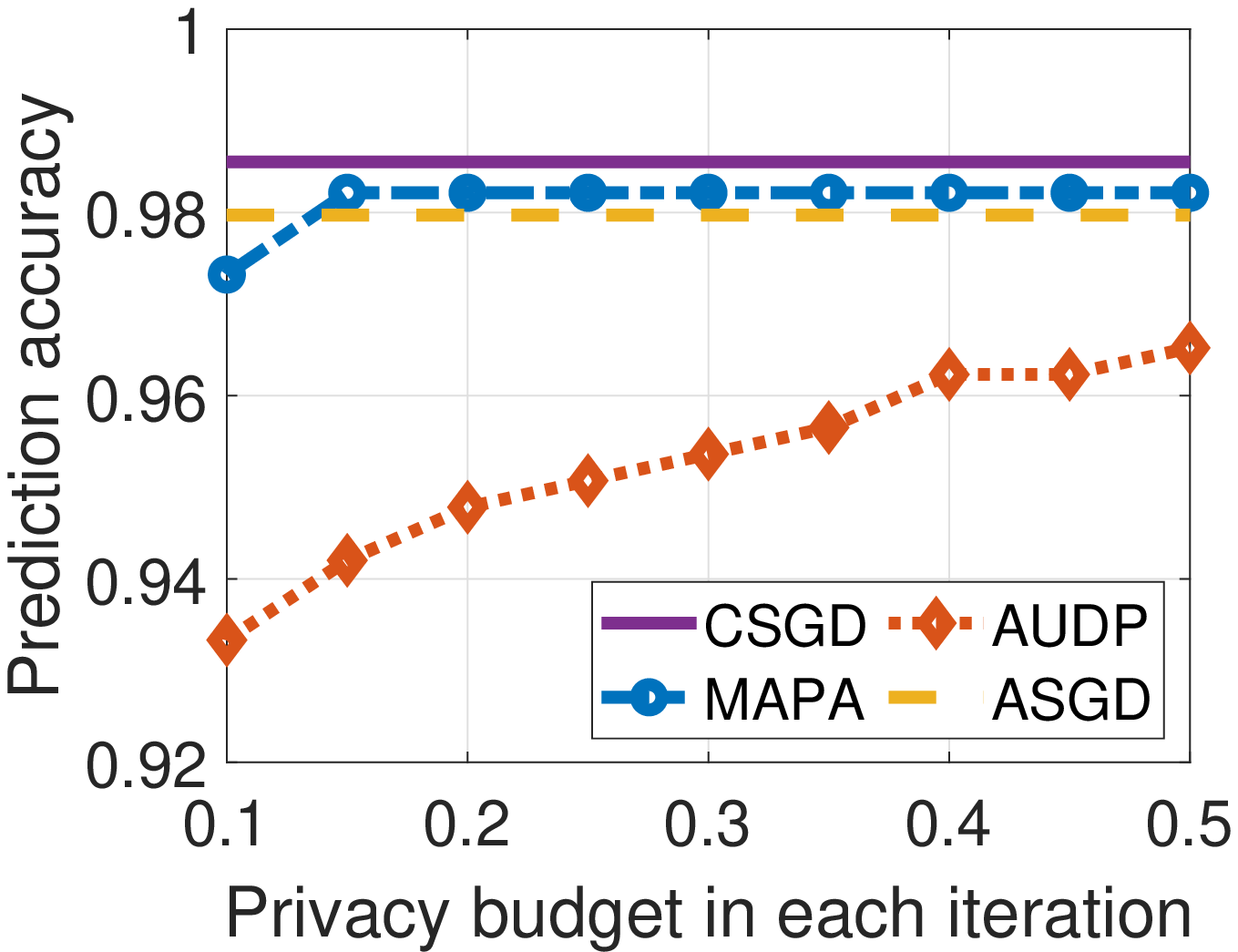}
		\label{figure:nl_usps_lg}
	}	
	\subfigure[SVM on USPS]{
		\includegraphics[width=5.7cm]{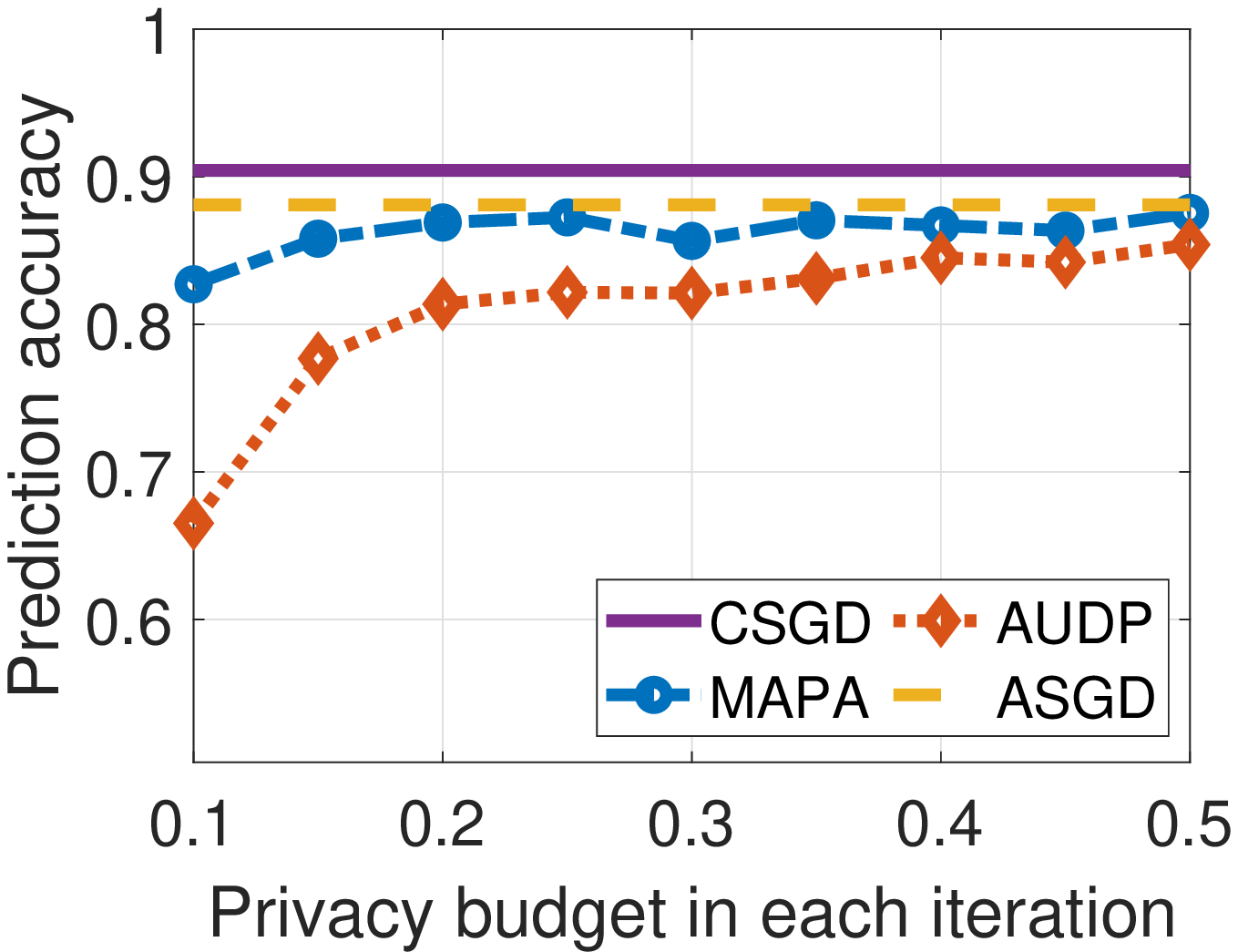}
		\label{figure:nl_usps_svm}
	}	
	\subfigure[CNN on USPS]{
		\includegraphics[width=5.7cm]{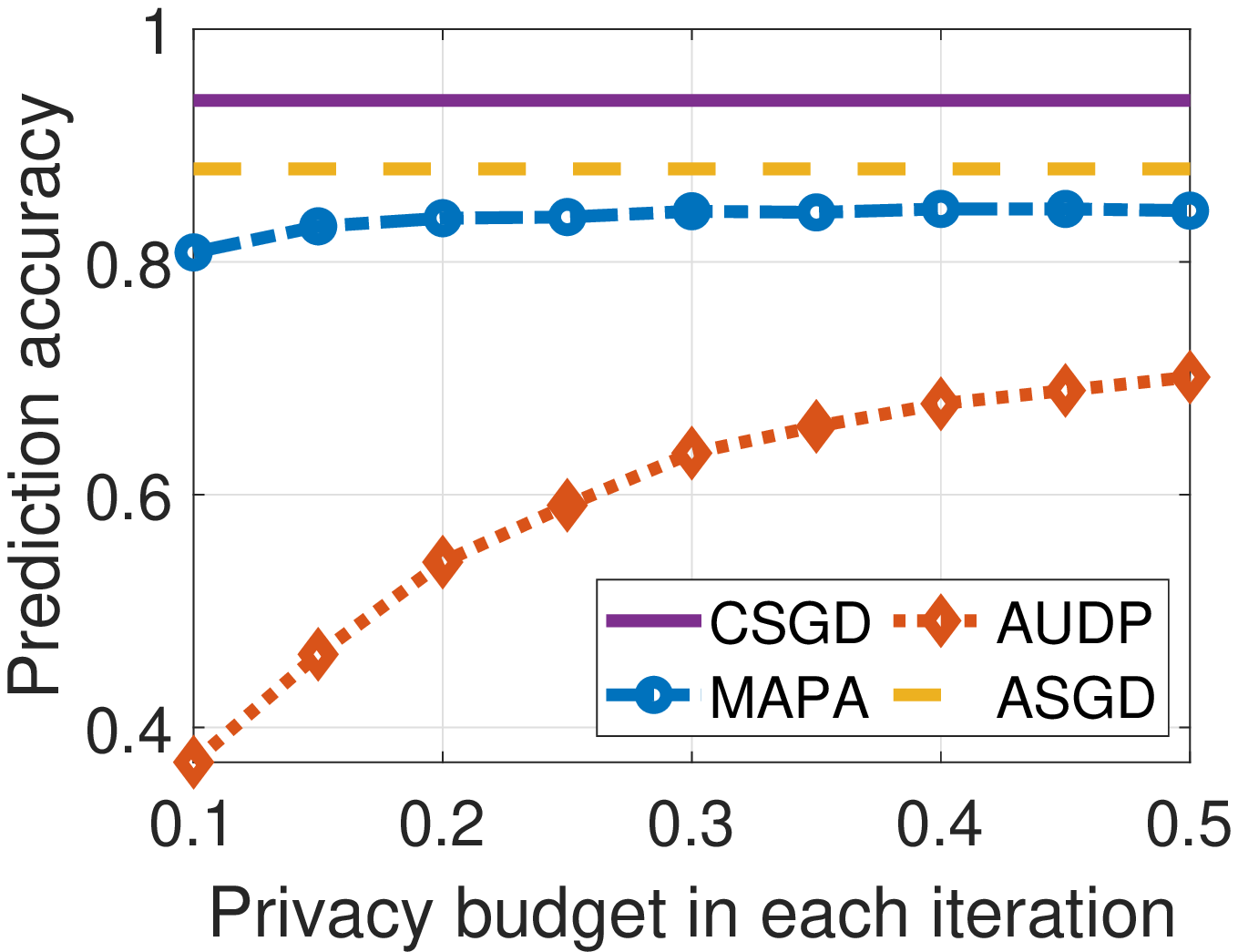}
		\label{figure:nl_usps_cnn}
	}
	\caption{Prediction accuracy vs. privacy budget $\varepsilon$.}
	\label{figure:comparison_noise}
\end{figure*}

\begin{figure*}[!htbp]
	\centering
	\subfigure[LR on MNIST (0.4)]{
		\includegraphics[width=5.7cm]{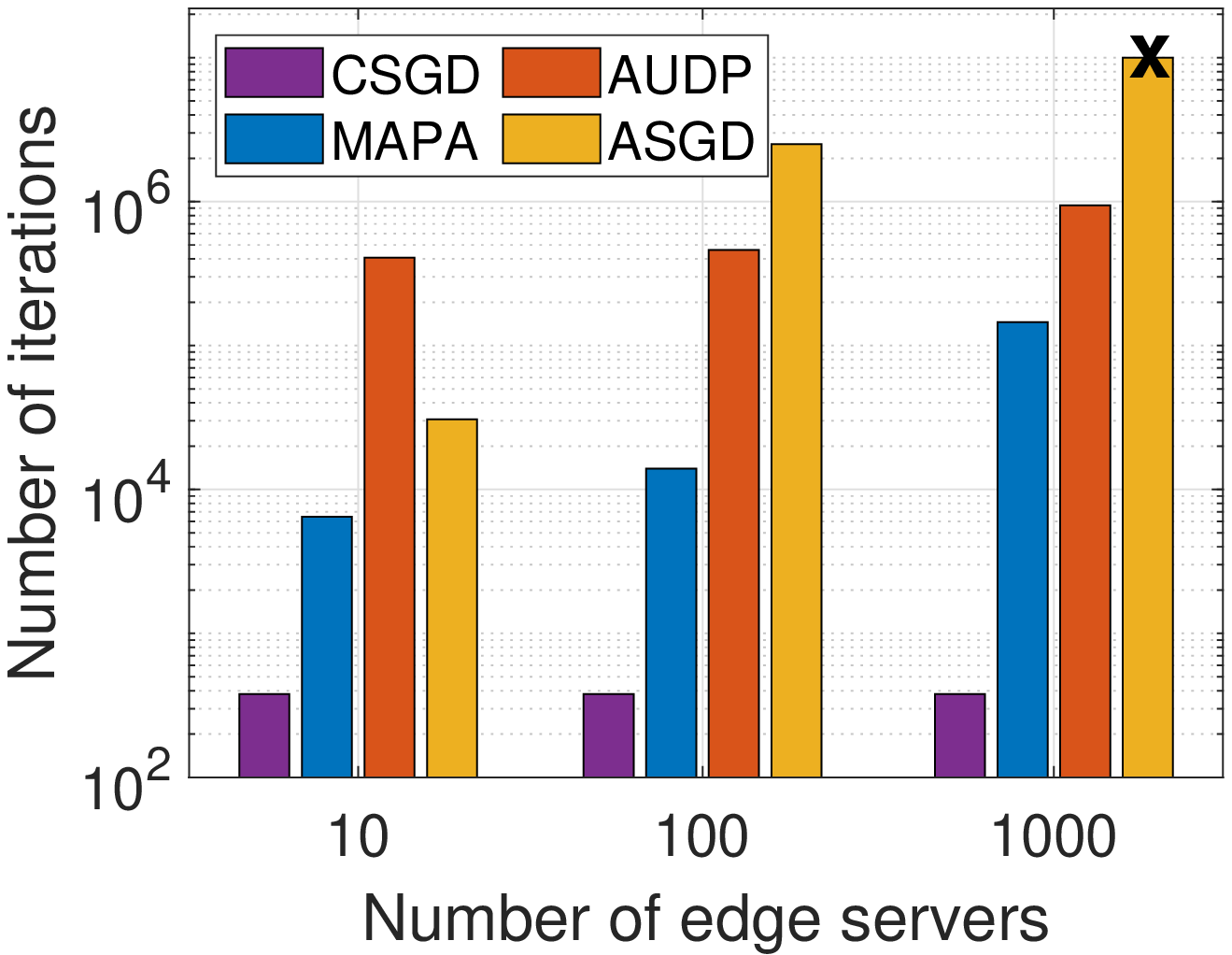}
		\label{figure:LG_Acc_edge_TAPA_TSA_SOA_MNIST}
	}
	\subfigure[SVM on MNIST (0.1)]{
		\includegraphics[width=5.7cm]{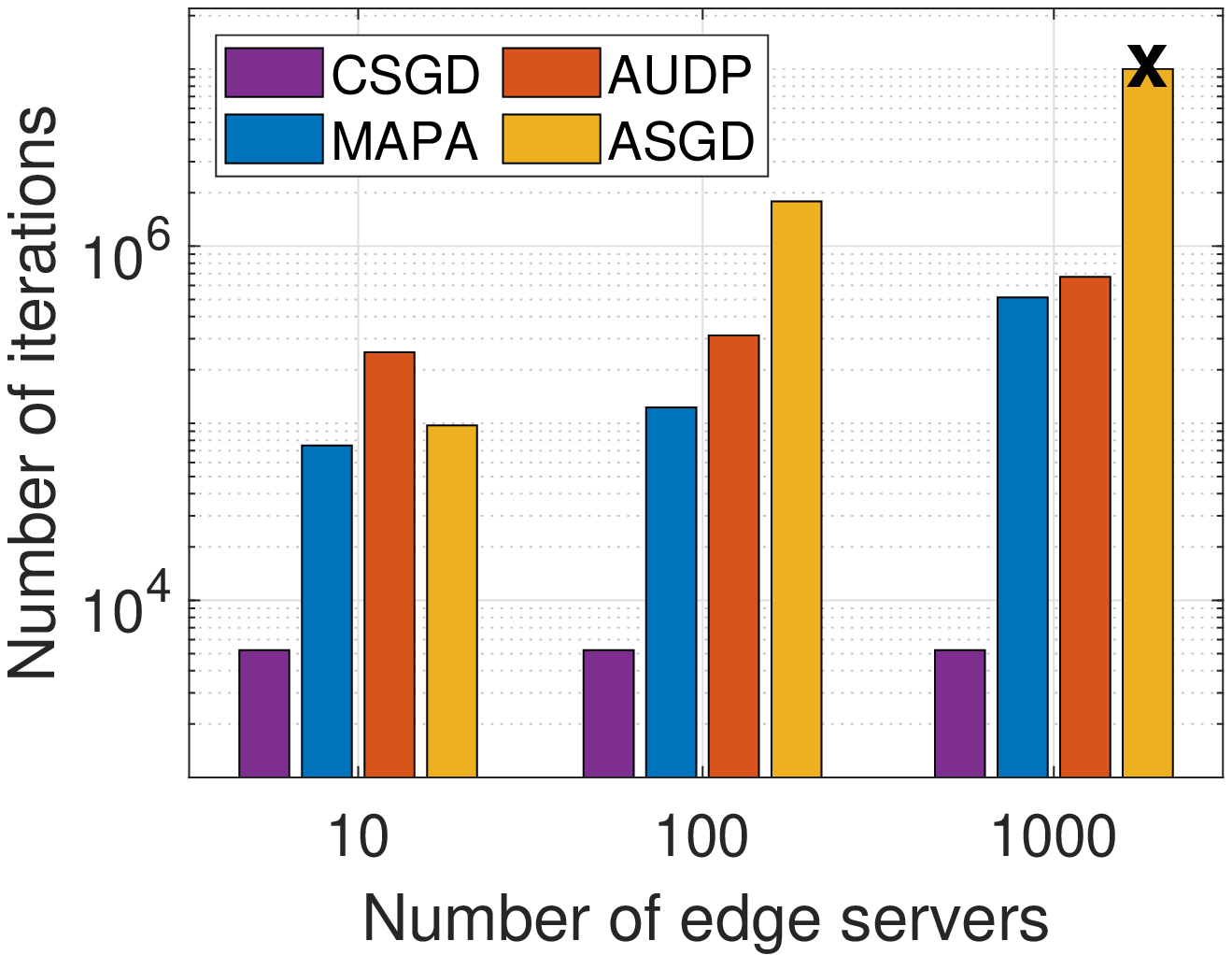}
		\label{figure:SVM_Acc_TAPA_TSA_SOA_MN}
	}
	\subfigure[CNN on MNIST (0.1)]{
		\includegraphics[width=5.7cm]{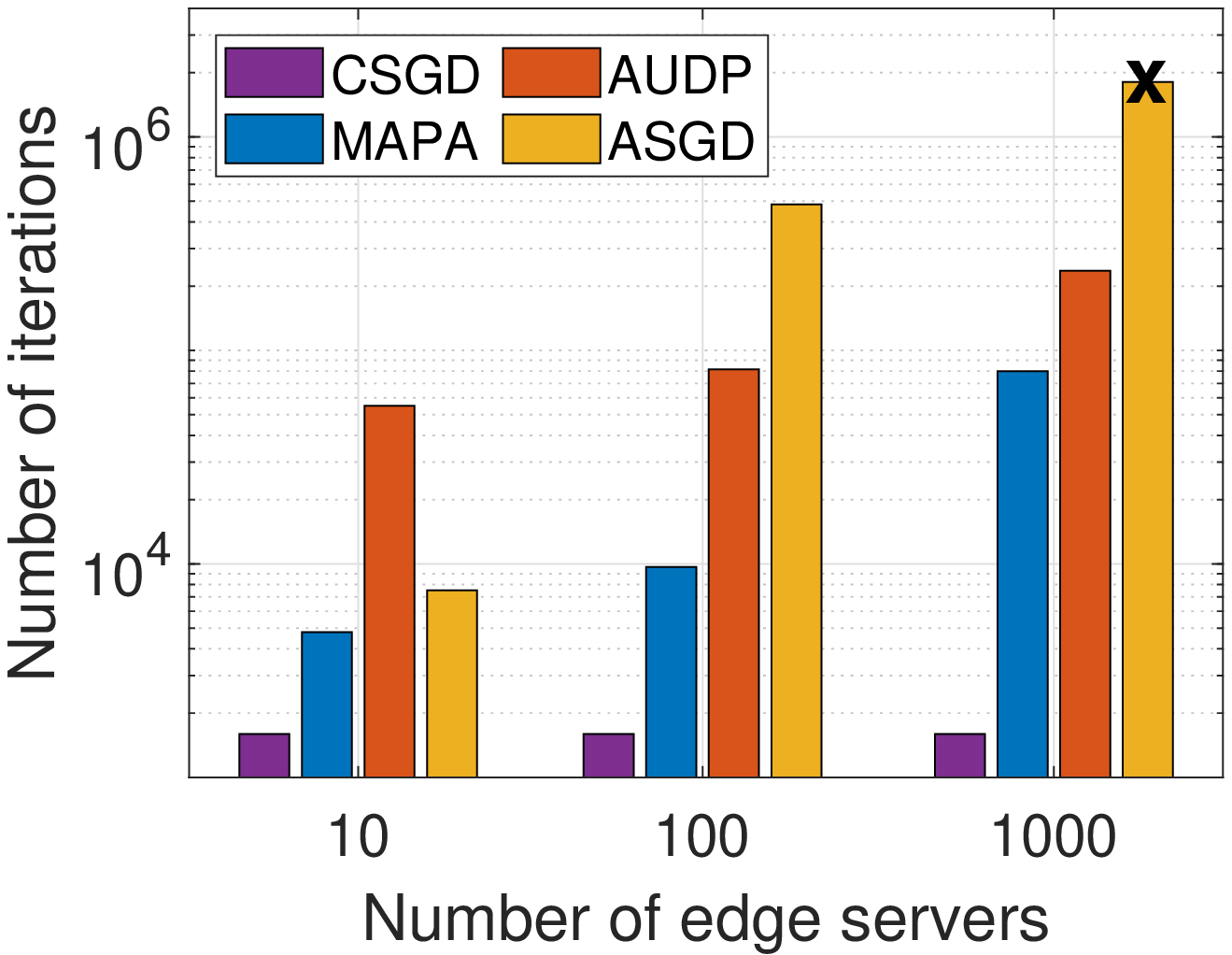}
		\label{figure:edgenumber-mnist-cnn_MNIST}
	}
	\subfigure[LR on USPS (0.2)]{
		\includegraphics[width=5.7cm]{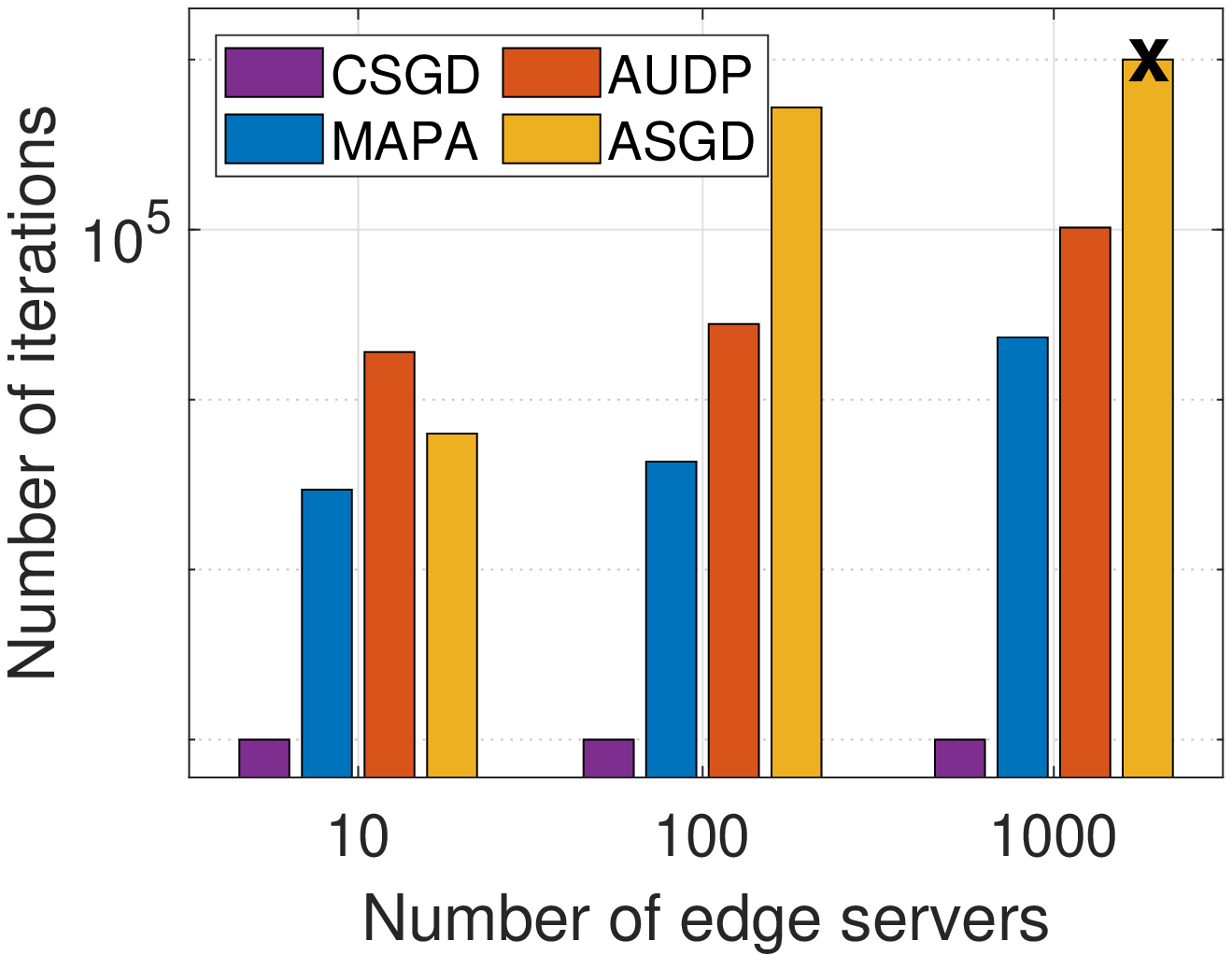}
		\label{figure:LG_Acc_edge_TAPA_TSA_SOA_USPS}
	}
	\subfigure[SVM on USPS (0.05)]{
		\includegraphics[width=5.7cm]{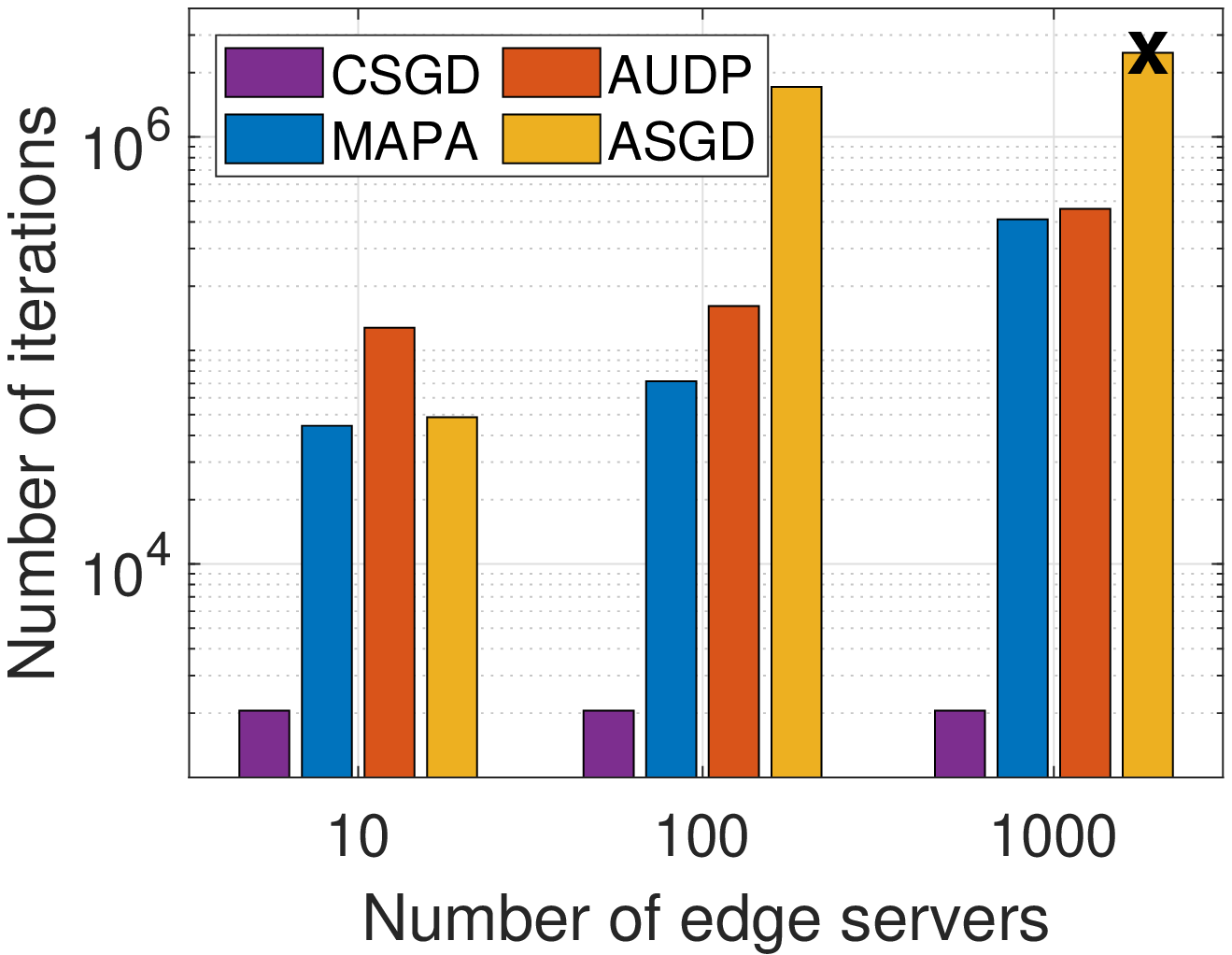}
		\label{figure:SVM_Acc_TAPA_TSA_SOA}
	}
	\subfigure[CNN on USPS (0.1)]{
		\includegraphics[width=5.7cm]{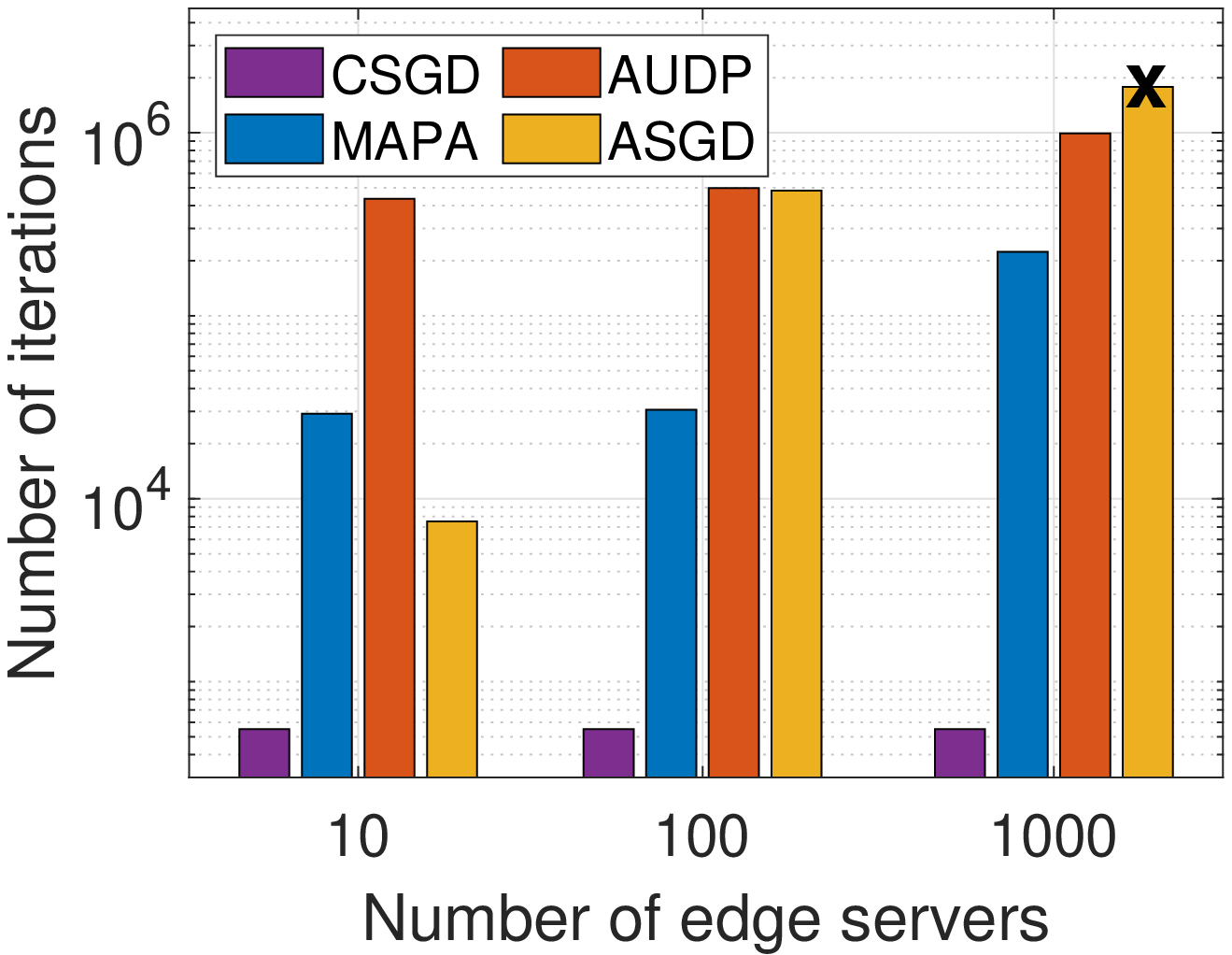}
		\label{figure:edgenumber-mnist-cnn}
	}
	\caption{Number of iterations for convergence vs. the number of edge servers.}
	\label{figure:comparison_different_edges}
\end{figure*}

\subsubsection{Model Convergence vs. Edge Staleness}
In this subsection, we study the impact of edge staleness on the model convergence efficiency. We simulated three learning models (LR, SVM, and CNN) on the edge-cloud collaborative FL with different numbers of edge servers, e.g., $K=10, 100, 1000$, respectively. In all simulations, the privacy budget in each iteration is $\varepsilon$=0.1 for MAPA and AUDP, then the average number of iterations for sufficient convergence (e.g., the average loss of 5 successive iterations is less than a given threshold) of all algorithms were reported. 

Fig. \ref{figure:comparison_different_edges} shows the iteration number of MAPA in comparison with both the private algorithm AUDP and non-private algorithms ASGD and CSGD under the different number of edge servers, which also represents different levels of edge staleness. 
As we can see, firstly, the number of iterations for all asynchronous algorithms, MAPA, AUDP and ASGD, increases with the number of edge servers $K$. This is because that, as $K$ increases, the gradients used in SGD are generally staler and contain very limited information, which therefore requires more iterations for convergence. The algorithm CSGD is performed on the central Cloud without collaborations with the edges and requires much fewer iterations. 

Secondly, MAPA achieves a faster convergence speed than AUDP. When $K$=10 and 100, MAPA can save 1-2 amplitudes of the number of iterations. For example, when $K$=10 in Fig. \ref{figure:LG_Acc_edge_TAPA_TSA_SOA_MNIST}, 2 amplitude saving is achieved. The reason is that the adjustable noise scale and learning rate together can ensure the model converges at the rate $O(1/T)$ (Theorem \ref{theorem:errorbound_gradient}) in each stage. 

Thirdly, MAPA achieves a faster convergence speed than ASGD and saved about 2 amplitude when $K$=100 and 1000. The reason is that a linear decaying learning rate with respect to $K$ (i.e., the $\tau_{max}$) is used in MAPA, but in ASGD, a second power polynomial decaying learning rate is designed to alleviate the effects of the staleness. However, as $K$ increases, the quickly decaying learning not only alleviates the staleness but also the useful information too much, leading to a long training process. In summary, MAPA can effectively tackle the edge staleness problem and have a better convergence efficiency for AFL.



\subsection{Testbed Experiment Results}
\label{subsection:ex-docker}
In this section, we verify the practical performance of MAPA based on real-world testbed experiments, as a complement to the simulations. Furthermore, the impacts of learning parameters on the practical performance of MAPA were validated. For simplicity, only the results of CNN model on the MNIST dataset are reported.

\subsubsection{Model Utility}
We implemented MAPA to train a CNN model in the testbed AFL system with the different number of edge servers $K$ as $5, 10, 15$, and $20$, respectively. The average prediction accuracy of trained models under different iterations on the edge servers are reported and drawn in Fig.~\ref{figure:docker_mnist_cnn_edge}.   

As shown, the prediction accuracy of MAPA is higher than AUDP in all cases. Also, with the increase of edge number, MAPA can even effectively outperform the non-private ASGD. These observations are consistent with the simulation results and validate the utility improvement of MAPA in practical systems.
Secondly, both MAPA and AUDP can obtain almost the same prediction accuracy as CSGD for CNN model training. That shows, adding proper noise will not significantly impact the model utility of CNN. As pointed out in \cite{dwork2014algorithmic}, appropriate random noises play the role of the regularization in machine learning and can enhance the robustness of the trained model. 

\subsubsection{Impacts of Parameters}
In this subsection, we demonstrated the impact of learning parameter on the model utility of MAPA in real-world testbed AFL system. When considering the impact of an individual parameter, others were fixed as default value, i.e., $\varepsilon=0.1, b=12, \sigma=30, L=10, \delta=10^{-3}, \theta=1/2$. 

Fig. \ref{figure:docker_mnist_cnn_para} shows the prediction accuracy of the trained model with MAPA concerning different parameters. We can have the following observations. 
Firstly, Figs. \ref{figure:Docker_CNN_MNIST_sigma} and \ref{figure:Docker_CNN_MNIST_delta} show that MAPA is robust to both $\sigma$ and $\delta$. That is, the estimation of the sample variance and the setting of probability loss are not crucial for convergence. 
Secondly, batch size and smooth constant have a little impact on prediction accuracy. For example, in Figs. \ref{figure:Docker_CNN_MNIST_batch} and \ref{figure:Docker_CNN_MNIST_L}, using a larger mini-batch size $b$ and smaller smooth constant $L$ can achieve a faster speed at the beginning, but will finally trend to the same accuracy at the given iterations. 
Thirdly, MAPA is sensitive to not only the privacy level but also the reduction ratio. In Fig. \ref{figure:Docker_CNN_MNIST_theta}, it is observed that a larger reduction ratio will lead to  lower model accuracy. The reason is that the learning rate will be adjusted too small for sufficiently achieving the larger reduction ratio (according to Theorem \ref{theorem:errorbound_gradient}), leading to much more iterations.


\begin{figure}[!htbp]
	\centering
	\subfigure[$K$ = 5]{
		\includegraphics[width=0.47\linewidth]{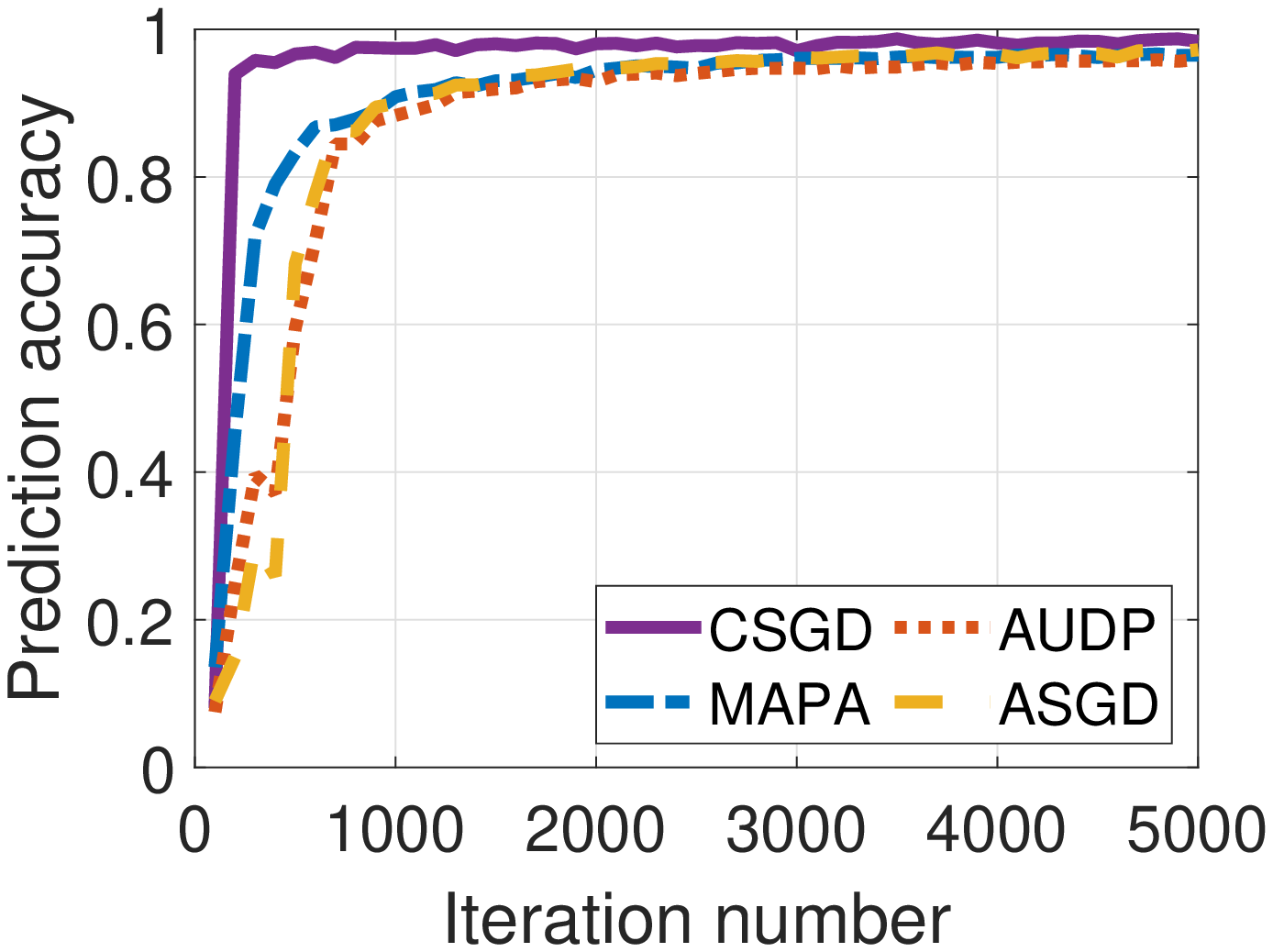}
		\label{figure:Docker_CNN_MNIST_edge05}
	}
	\subfigure[$K$ = 10]{
		\includegraphics[width=0.47\linewidth]{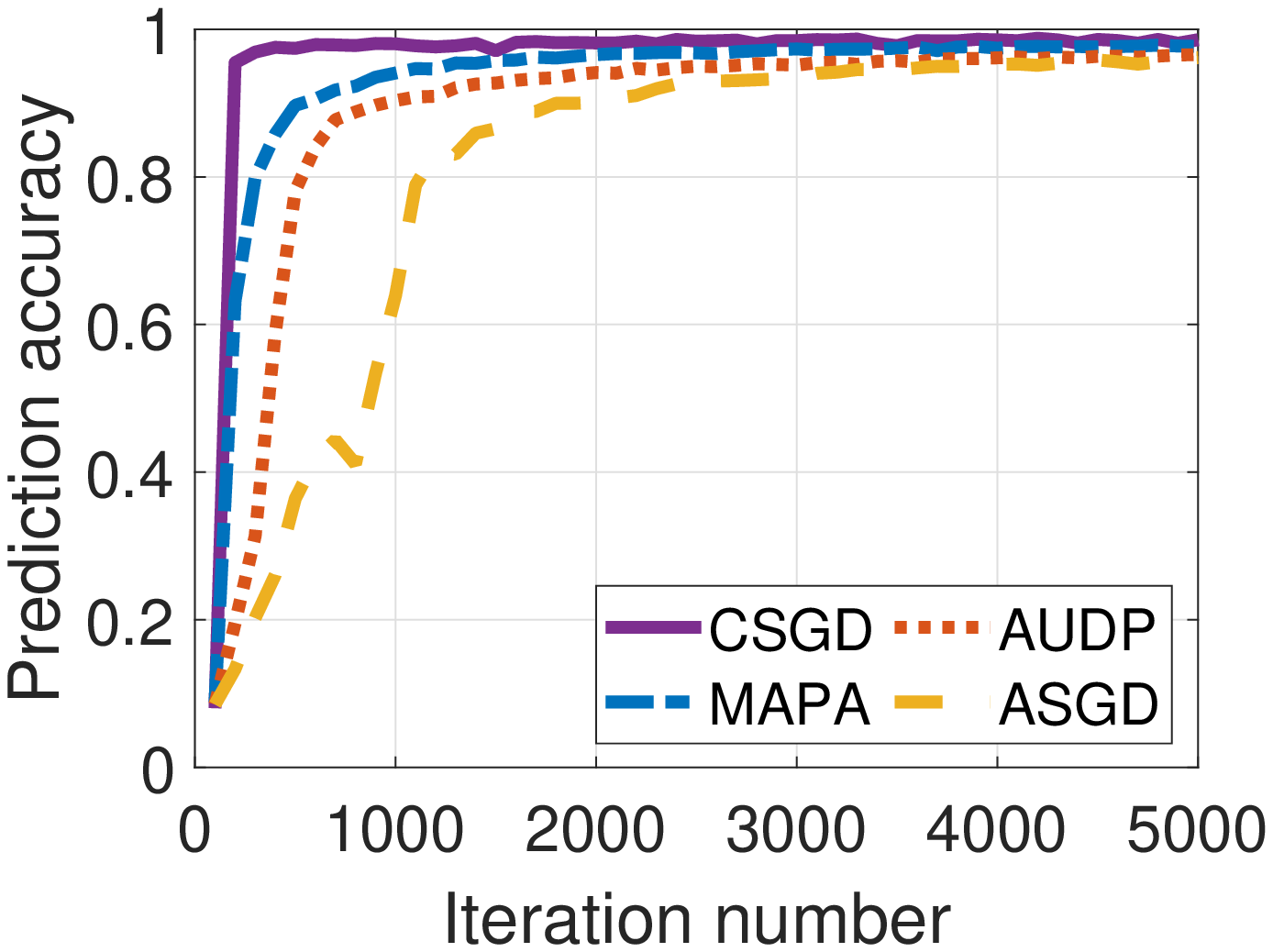}
		\label{figure:Docker_CNN_MNIST_edge0}
	}\\
	\subfigure[$K$ = 15]{
		\includegraphics[width=0.47\linewidth]{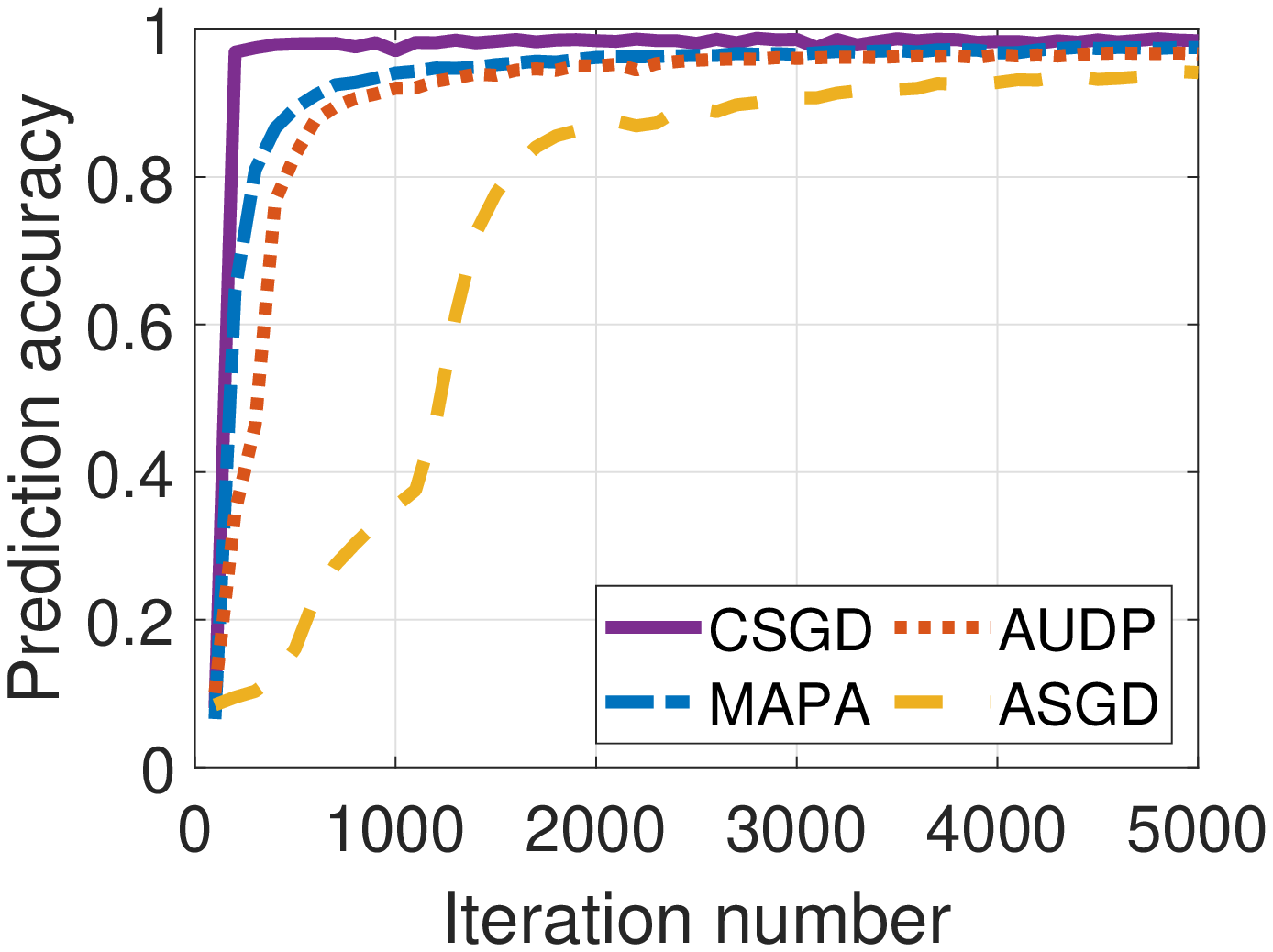}
		\label{figure:Docker_CNN_MNIST_edge15}
	}
	\subfigure[$K$ = 20]{
		\includegraphics[width=0.47\linewidth]{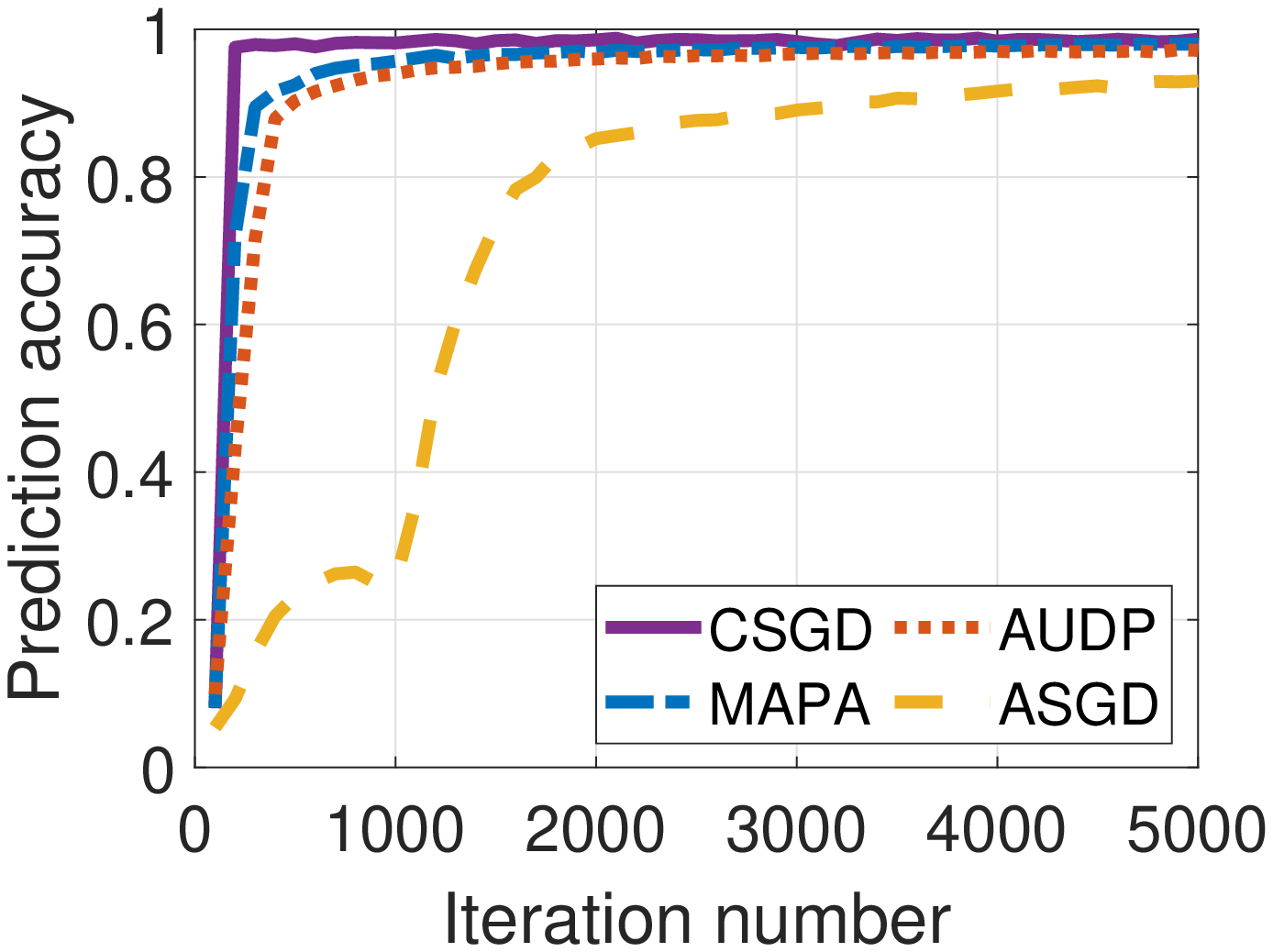}
		\label{figure:Docker_CNN_MNIST_edge20}
	}
	\caption{Prediction accuracy under different number of edge servers.}
	\label{figure:docker_mnist_cnn_edge}
\end{figure}

\begin{figure}[!htbp]
	\centering
	\subfigure[Privacy level]{
		\includegraphics[width=0.47\linewidth]{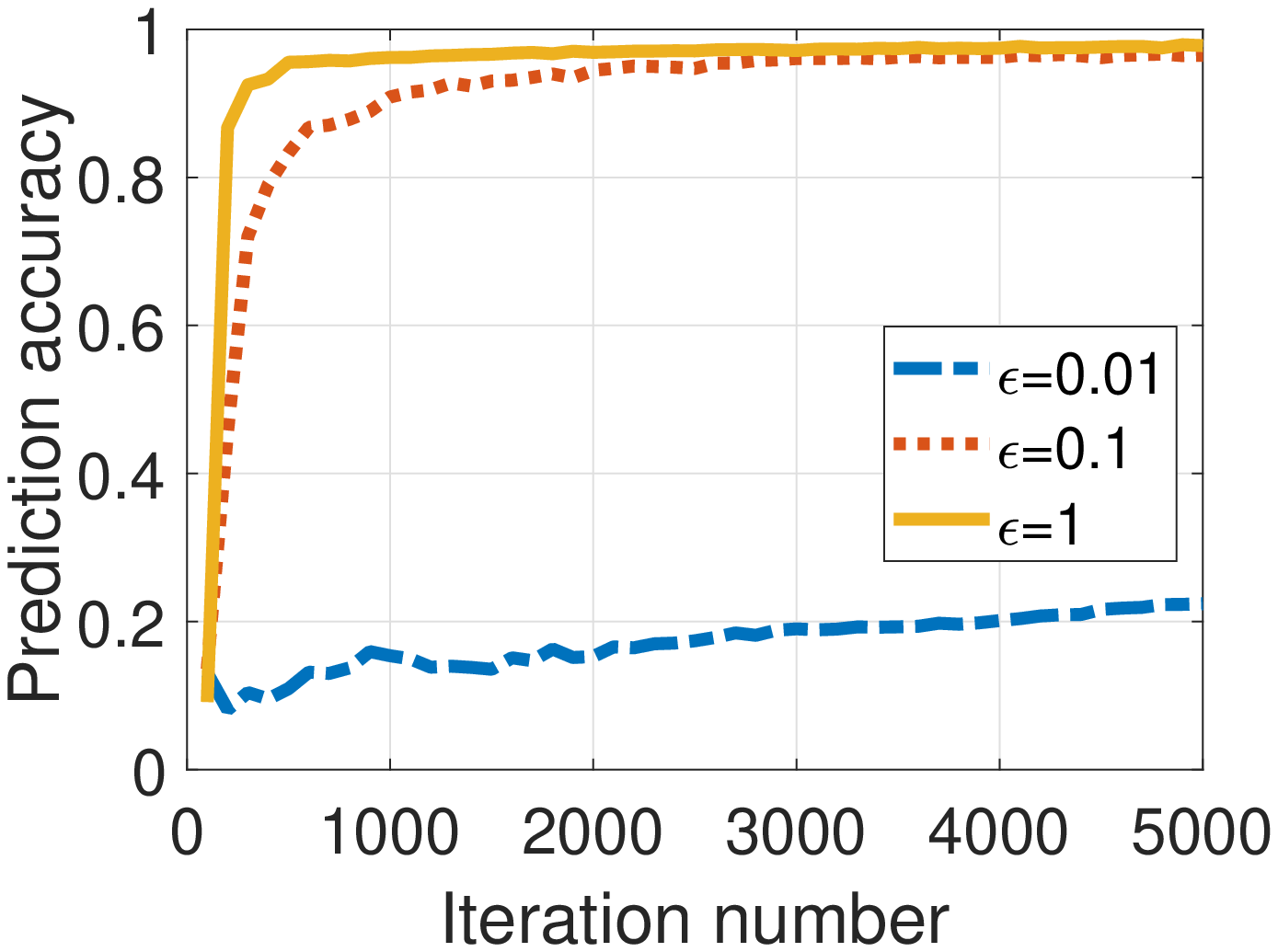}
		\label{figure:Docker_CNN_MNIST_noise}
	}
	\subfigure[Batch size]{
		\includegraphics[width=0.47\linewidth]{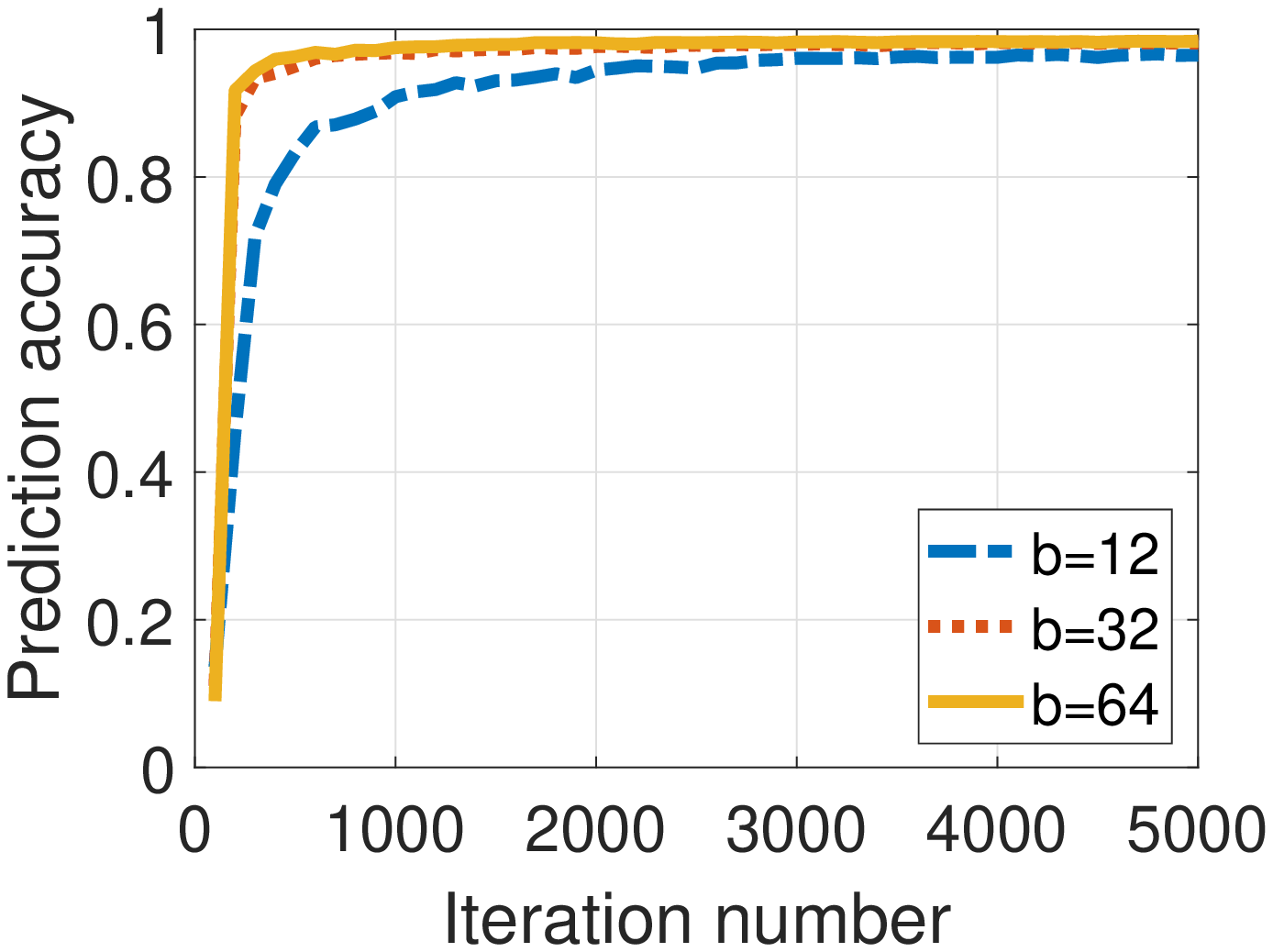}
		\label{figure:Docker_CNN_MNIST_batch}
	}\\
	\subfigure[Sample variance]{
		\includegraphics[width=0.47\linewidth]{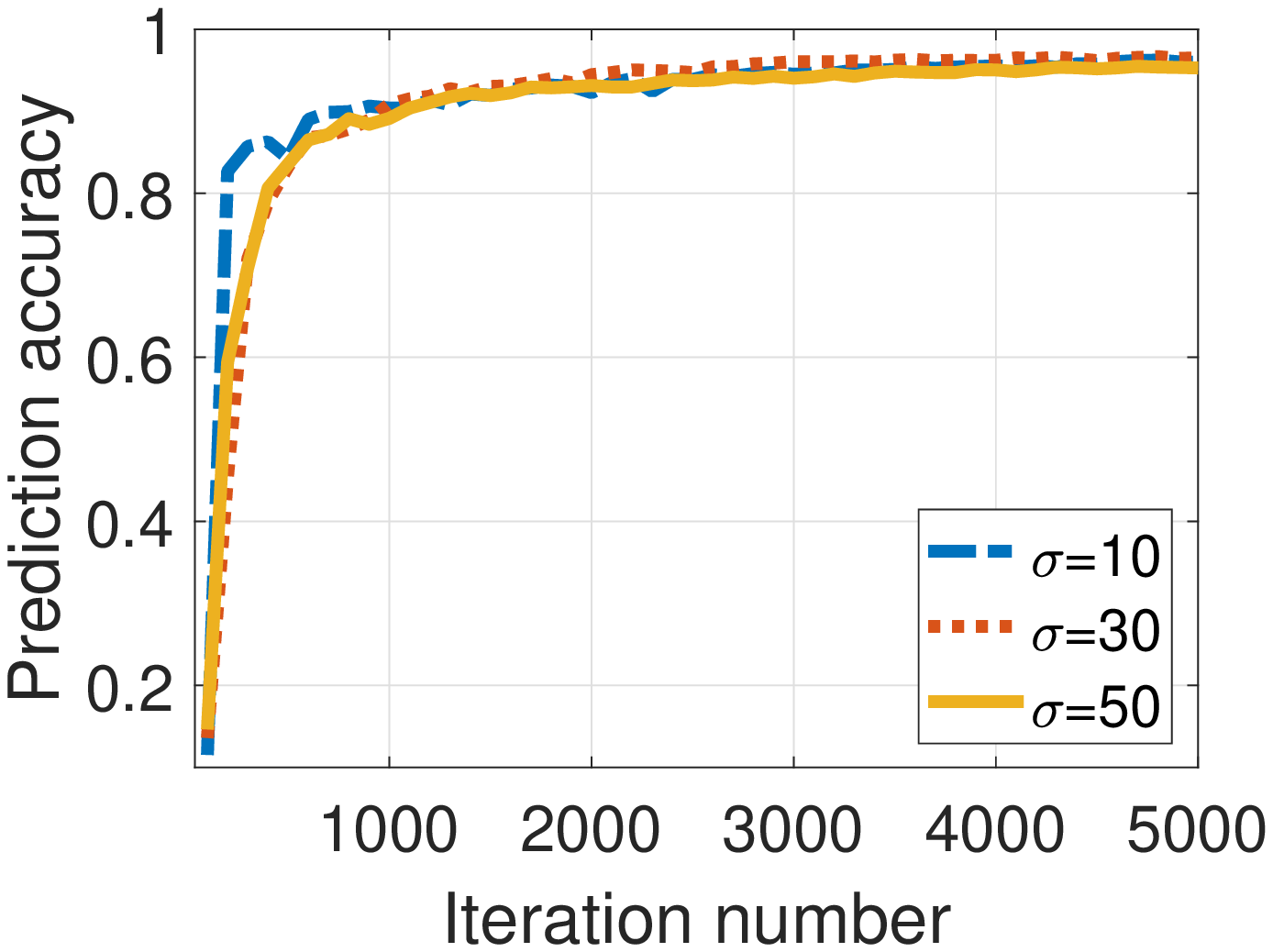}
		\label{figure:Docker_CNN_MNIST_sigma}
	}
	\subfigure[Lipschitz smooth constant]{
		\includegraphics[width=0.47\linewidth]{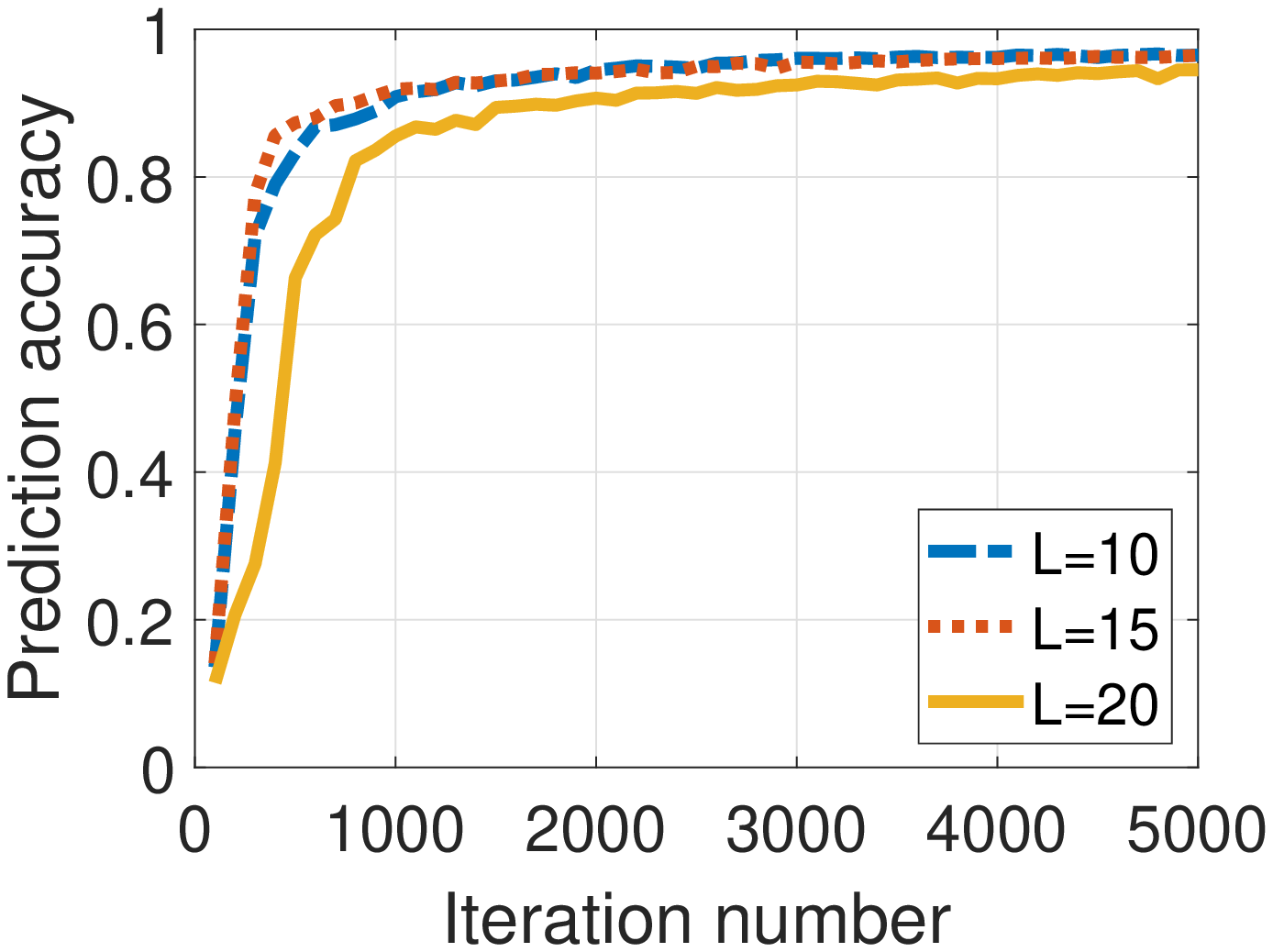}
		\label{figure:Docker_CNN_MNIST_L}
	}
	\subfigure[Probability loss]{
		\includegraphics[width=0.47\linewidth]{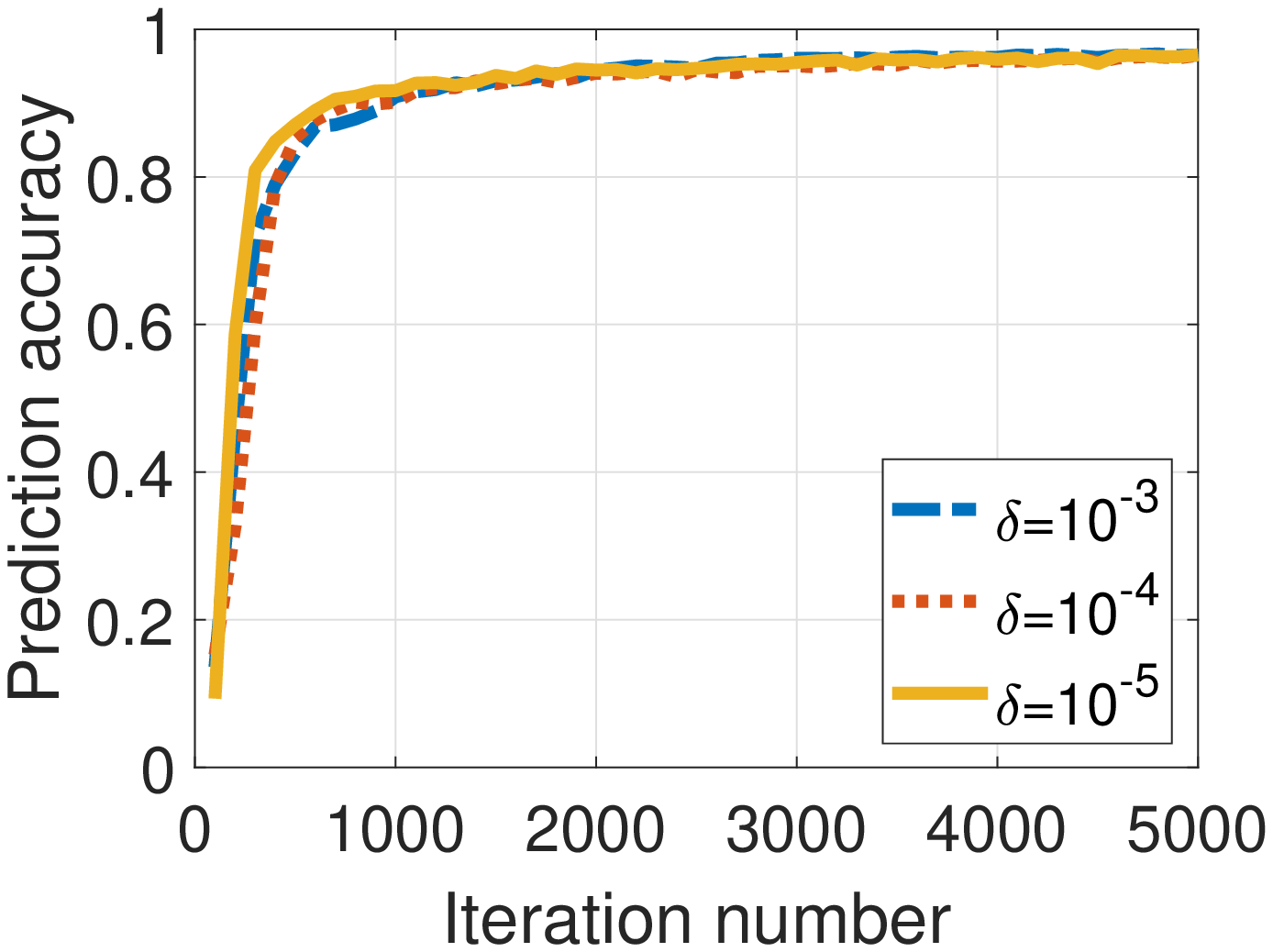}
		\label{figure:Docker_CNN_MNIST_delta}
	}
	\subfigure[Reduction ratio]{
		\includegraphics[width=0.47\linewidth]{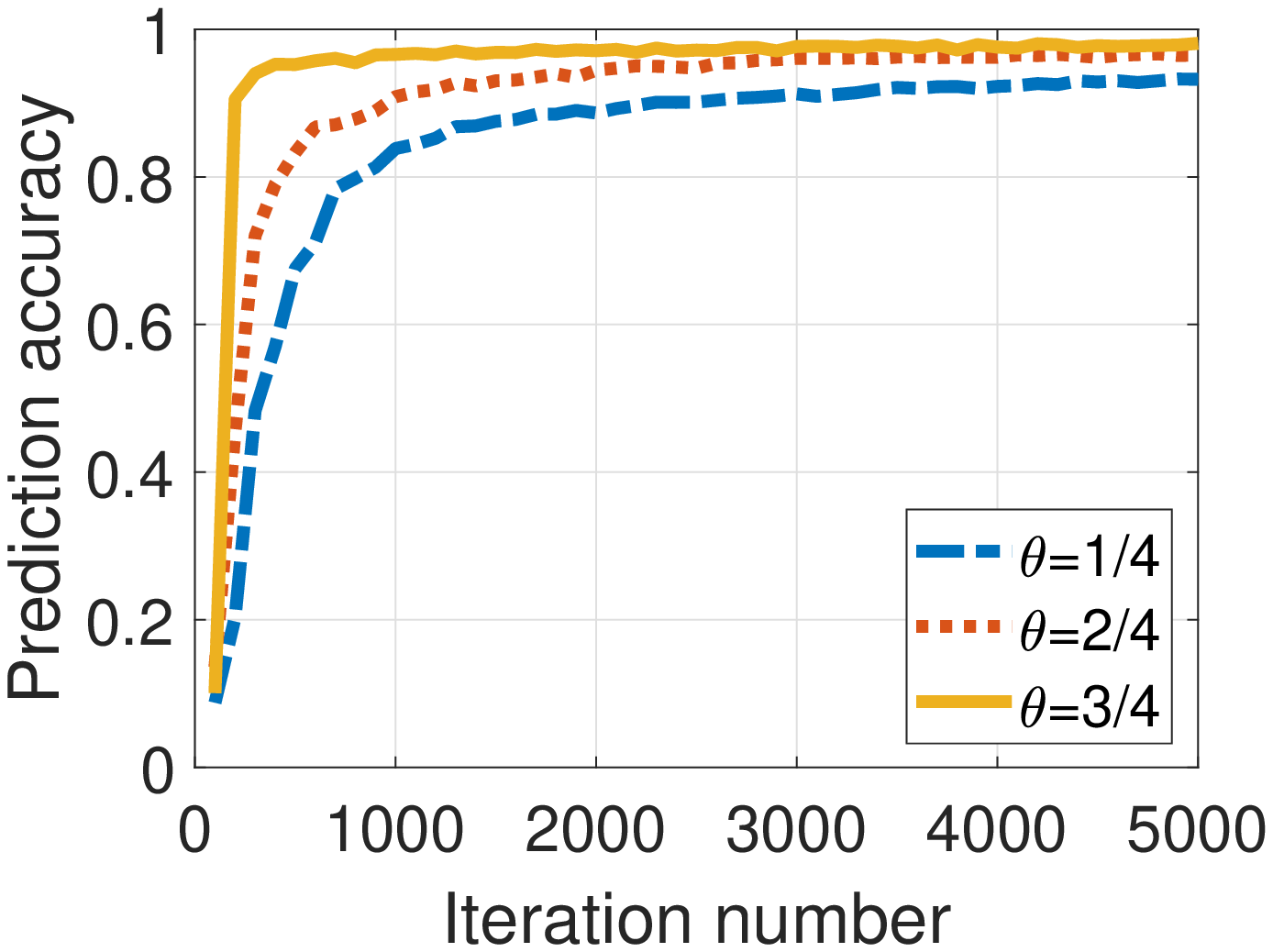}
		\label{figure:Docker_CNN_MNIST_theta}
	}
	\caption{Prediction accuracy with respect to different parameters.}
	\label{figure:docker_mnist_cnn_para}
\end{figure}
\vspace{-0.1cm}
\section{Conclusion}
\label{sec:conclusion}
This paper presents the first study on Asynchronous edge-cloud collaboration based Federated Learning (AFL) with differential privacy. Based on a baseline algorithm, we first theoretically analyzed the impact of differential privacy on the convergence of AFL. To enhance the learning utility, we then propose a Multi-Stage Adjustable Private Algorithm (MAPA) for AFL, which can adaptively clip the gradient sensitivity to reduce the privacy-preserving noise, thus achieving high model accuracy without complicated parameter tuning. We applied our proposed algorithms to several machine learning models, and validated their performance via both Matlab simulations and real-world testbed experiments. The experimental results show that, in comparison with the state-of-the-art AFL algorithms, MAPA can achieve not only much better trade-off between the model utility and privacy guarantee but also much higher convergence efficiency.

\bibliographystyle{IEEEtran}
\bibliography{mybibfile}

\begin{IEEEbiography}[{\includegraphics[width=1in,height=1.25in, clip,keepaspectratio]{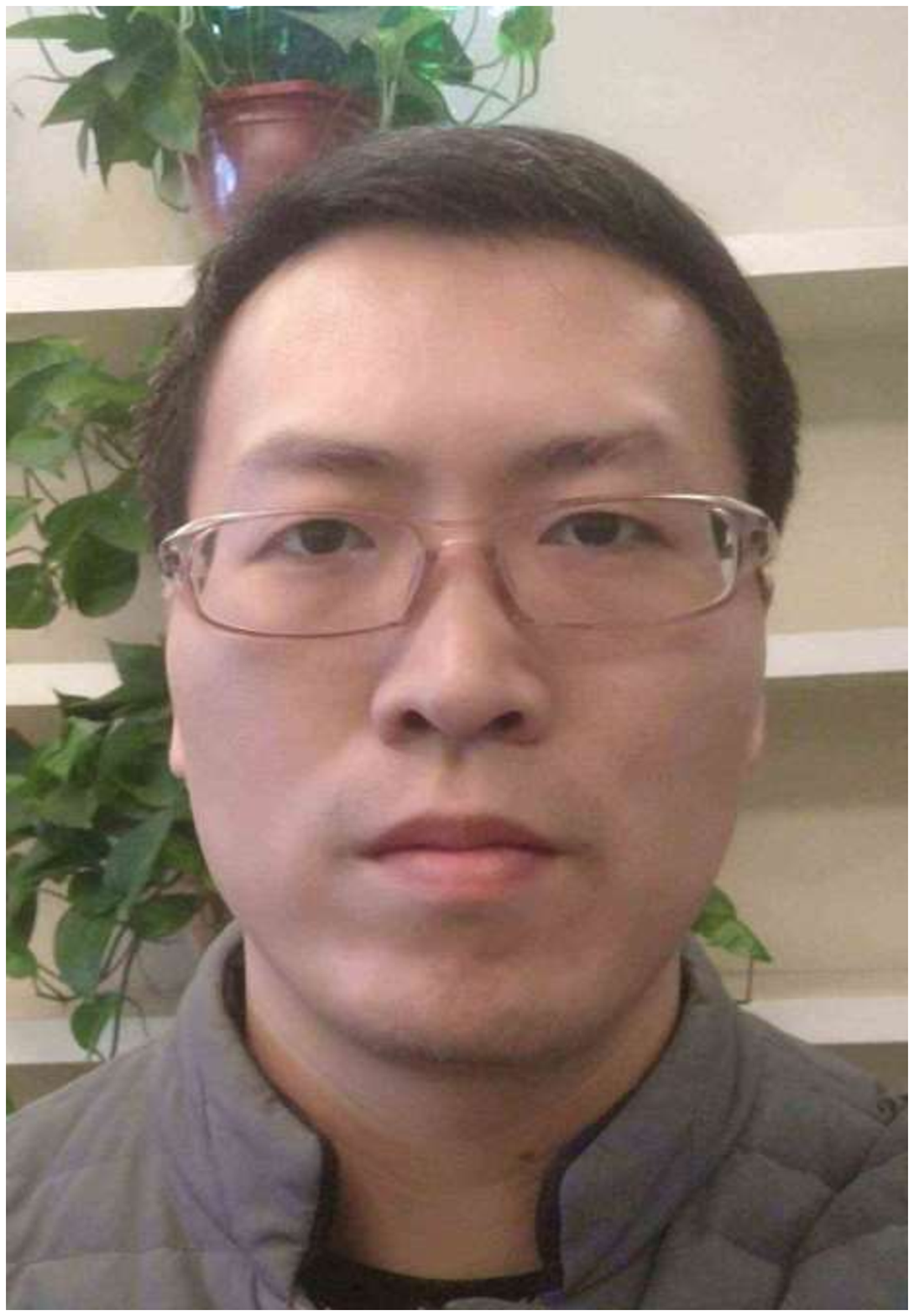}}] {Yanan Li} received his in Bachelor and Master degree from Henan Normal University of China in 2004 and 2007, respectively. He is currently working towards the PhD degree in the School of Mathematics and Statistics at Xi'an Jiaotong University. Before that, he worked as a lecturer in Henan Polytechnic University from 2007 to 2017. His research interests include differential privacy, federated learning, and edge computing.
\end{IEEEbiography}

\begin{IEEEbiography}[{\includegraphics[width=1in,height=1.25in, clip,keepaspectratio]{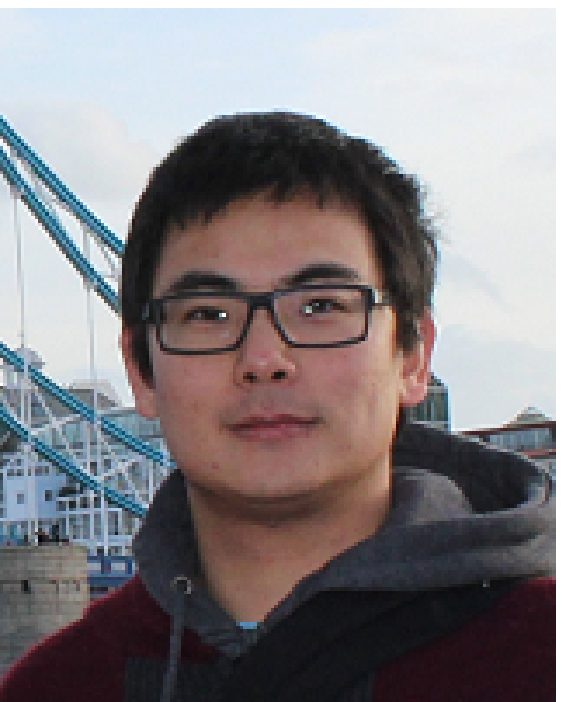}}] {Shusen Yang} received his PhD in Computing from Imperial College London in 2014. He is currently a professor in the Institute of Information and System Science at Xi'an Jiaotong University (XJTU). Before joining XJTU, Shusen worked as a Lecturer (Assistant Professor) at University of Liverpool from 2015 to 2016, and a Research Associate at Intel Collaborative Research Institute ICRI from 2013 to 2014. His research interests include mobile networks, networks with human in the loop, data-driven networked systems and edge computing. Shusen achieves  ``1000 Young Talents Program'' award, and holds an honorary research fellow at Imperial College London. Shusen is a senior member of IEEE and a member of ACM.
\end{IEEEbiography}

\begin{IEEEbiography}[{\includegraphics[width=1in,height=1.25in,clip,keepaspectratio]{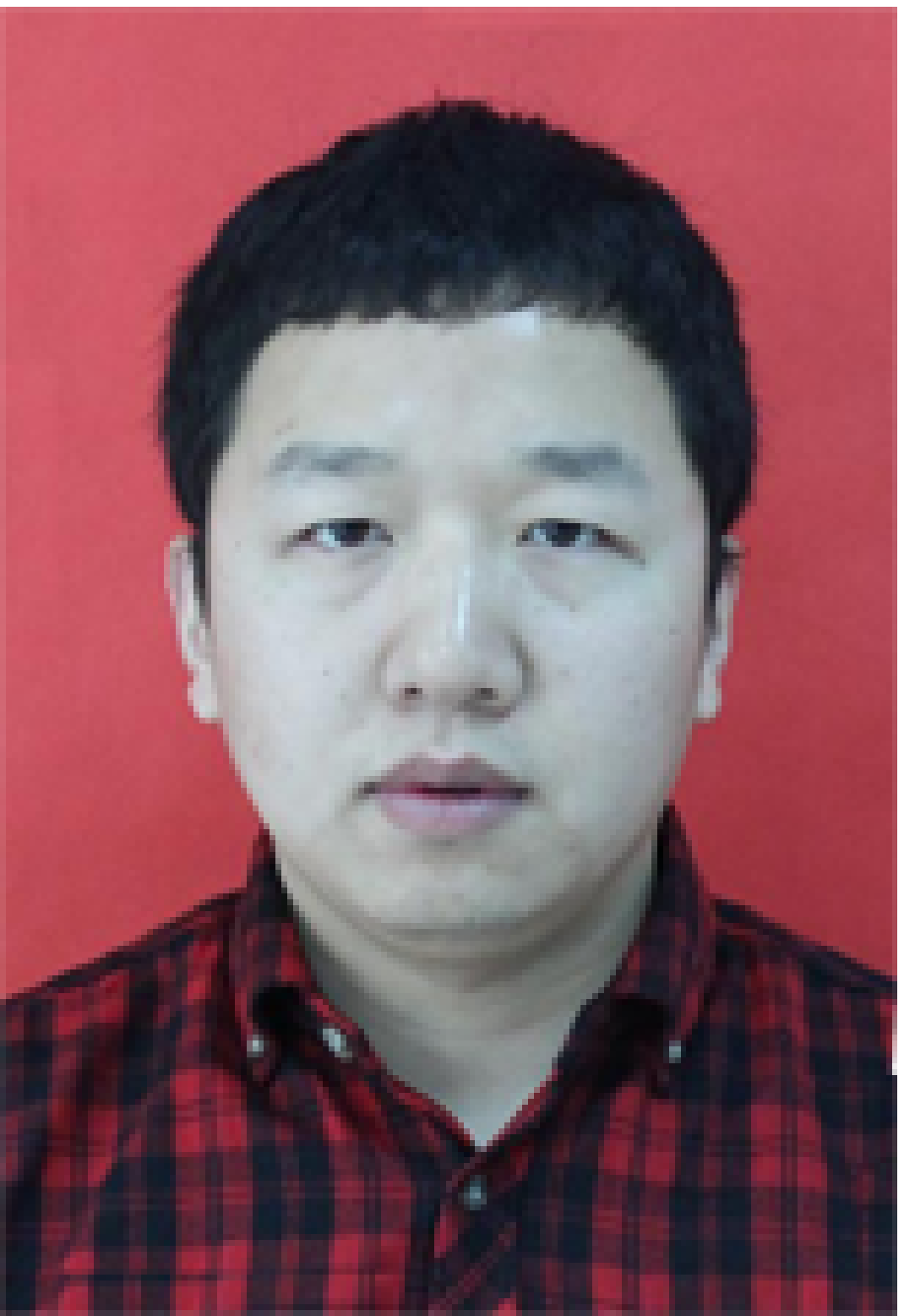}}]{Xuebin Ren} received his PhD degree in the Department of Computer Science and Technology from Xi'an Jiaotong University, China in 2017. Currently, he is a lecturer in Xi'an Jiaotong University and a member of National Engineering Laboratory for Big Data Analytics (NEL-BDA). He has been a visiting PhD student in the Department of Computing at Imperial College London from 2016 to 2017. His	research interests focus on data privacy protection, federated learning and privacy-preserving machine learning. He is a member of the IEEE and the ACM.
\end{IEEEbiography}

\begin{IEEEbiography}[{\includegraphics[width=1in,height=1.25in, clip,keepaspectratio]{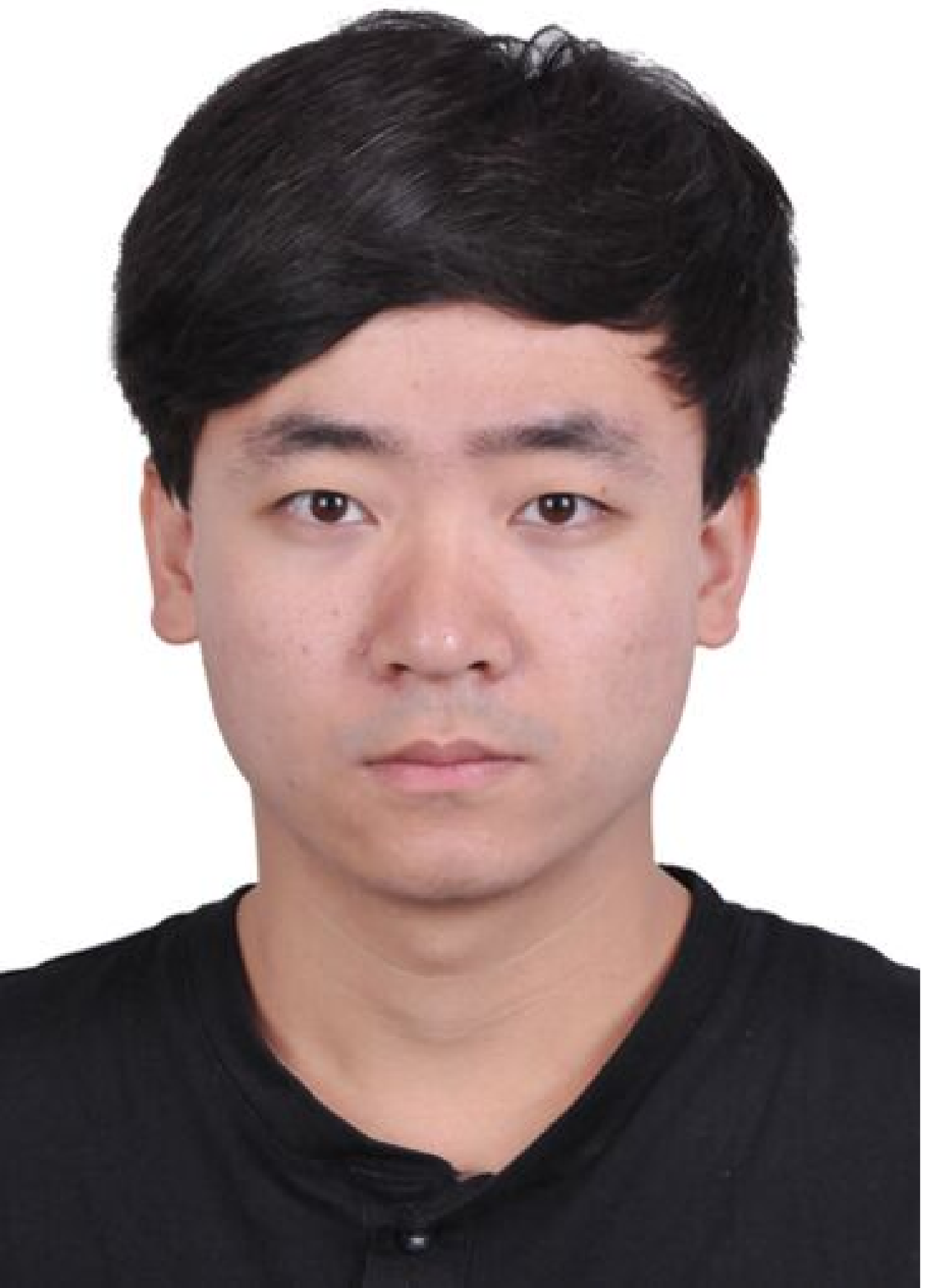}}] {Cong Zhao} received his Ph.D. degree from
	Xi’an Jiaotong University (XJTU) in 2017. He is currently a Research
	Associate in the Department of Computing at Imperial College London. His research interests include edge computing, computing economics, and people-centric sensing.
\end{IEEEbiography}

\clearpage
\appendices
\section{Proof of Theorems}
\label{appendix:proofs}

\subsection{Proof of Lemma \ref{lemma:convex_errorbound}}
\label{proof:lemma_convex_errorbound}

We give a lemma before the formal proof of Lemma \ref{lemma:convex_errorbound}.
\begin{lemma}\label{lemma:for_convex}
Let Assumption \ref{assumption:unbiased_smooth_variance}, $\|x-x^*\|\leq R$ and $\|\nabla f(x)\|\leq G$ hold. Then, we have
\begin{align*}
& \sum_{t=1}^{T}\mathbb{E}\langle \nabla f(x_{t})-\nabla f(x_{t-\tau(t)}),x_{t+1}-x^* \rangle\\
&\leq  Rc\tau_{max} +\frac{L(\tau_{max}+1)^2}{2}\sum_{t=1}^{T}\gamma_{t}^2\mathbb{E}\|\tilde{g}_{t-\tau(t)}\|^{2}.
\end{align*}
\end{lemma}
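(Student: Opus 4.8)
The plan is to reduce each summand to displacements of the iterates and then convert products of displacements into squared norms. First I would apply Cauchy--Schwarz together with the $L$-Lipschitz property of $\nabla f$ from Assumption~\ref{assumption:unbiased_smooth_variance} and the hypothesis $\|x_{t+1}-x^*\|\le R$, giving $\langle \nabla f(x_t)-\nabla f(x_{t-\tau(t)}),\,x_{t+1}-x^*\rangle \le L\,\|x_t-x_{t-\tau(t)}\|\cdot\|x_{t+1}-x^*\| \le LR\,\|x_t-x_{t-\tau(t)}\|$. Then I would unroll the AUDP recursion $x_{j+1}=x_j-\gamma_j\tilde g_{j-\tau(j)}$ over the window $j=t-\tau(t),\dots,t-1$, so that $x_t-x_{t-\tau(t)}=-\sum_{j=t-\tau(t)}^{t-1}\gamma_j\tilde g_{j-\tau(j)}$ is a sum of at most $\tau_{max}$ terms. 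The power--mean (Cauchy--Schwarz) inequality $\|\sum_{i=1}^n a_i\|^2\le n\sum_i\|a_i\|^2$ then contributes one factor $\tau_{max}$, and interchanging the order of summation over $t$ and $j$ (each index $j$ lies in the window of at most $\tau_{max}+1$ distinct values of $t$) contributes a second factor, which together produce the $(\tau_{max}+1)^2$ coefficient in front of $\sum_{t}\gamma_t^2\|\tilde g_{t-\tau(t)}\|^2$.

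The lower-order piece left over after applying Young's inequality would be controlled using $\|\nabla f(x)\|\le G$ and $\|x-x^*\|\le R$, and this is what I would collect into the additive $Rc\tau_{max}$ term (here $c$ is the uniform gradient bound $G$, matching the $RG\tau_{max}$ term that Lemma~\ref{lemma:convex_errorbound} inherits from this lemma). Finally I would take expectations of the resulting inequality; the conditional expectation is used to drop any inner product between the $\mathcal{F}_t$-measurable factor $\nabla f(x_t)-\nabla f(x_{t-\tau(t)})$ and the zero-mean residual $\tilde g_{t-\tau(t)}-\nabla f(x_{t-\tau(t)})$, which has zero mean by the unbiasedness of the mini-batch gradient in Assumption~\ref{assumption:unbiased_smooth_variance} together with the zero mean of the DP noise.

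The step I expect to be the real obstacle is ensuring the right-hand side carries no factor growing with $T$. A direct Cauchy--Schwarz bound of the form $LR\|x_t-x_{t-\tau(t)}\|\le LR\sum_j\gamma_j\|\tilde g_{j-\tau(j)}\|$ leaves, after summation, a linear residual $\sim\sum_t\gamma_t\|\tilde g_{t-\tau(t)}\|$ whose expectation is $\Theta(\sqrt T)$ and therefore cannot be absorbed either into $\sum_t\gamma_t^2\mathbb{E}\|\tilde g_{t-\tau(t)}\|^2$ (which is only $O(\log T)$ for the step sizes of Theorem~\ref{theorem:convex_errorbound}) or into the fixed constant $Rc\tau_{max}$. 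Avoiding it requires either pairing the gradient difference with a second displacement so that products of two $\gamma\|\tilde g\|$ factors arise and Young's inequality returns purely quadratic terms, or invoking the conditional-expectation cancellation noted above; in either route one must also treat separately the boundary indices $t\le\tau_{max}$, where the unrolled window is truncated at the common initialization $x_1$, in order to land on exactly the stated constant. Once the cross-terms are organized this way, the remaining manipulations --- Lipschitz continuity, Cauchy--Schwarz, Young's inequality, and the summation interchange --- are routine bookkeeping.
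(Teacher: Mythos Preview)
You correctly diagnose the obstacle: a naive Cauchy--Schwarz bound $LR\|x_t-x_{t-\tau(t)}\|$ leaves a residual of order $\sum_t\gamma_t\|\tilde g_{t-\tau(t)}\|$, which is too large. But neither of your proposed repairs closes the gap. Splitting $x_{t+1}-x^*=(x_{t+1}-x_t)+(x_t-x^*)$ does let you pair the gradient difference with the step $x_{t+1}-x_t=-\gamma_t\tilde g_{t-\tau(t)}$ and obtain purely quadratic terms via Young's inequality, but the second piece $\langle\nabla f(x_t)-\nabla f(x_{t-\tau(t)}),\,x_t-x^*\rangle$ survives unchanged and is exactly as bad as before. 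And the conditional-expectation cancellation you invoke concerns the inner product of $\nabla f(x_t)-\nabla f(x_{t-\tau(t)})$ with the \emph{noise residual} $\tilde g_{t-\tau(t)}-\nabla f(x_{t-\tau(t)})$; it says nothing about the inner product with $x_t-x^*$, which is $\mathcal F_t$-measurable and not mean-zero.

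The paper's device is the Bregman four-point identity
\[
\langle\nabla f(a)-\nabla f(b),\,c-d\rangle
= D_f(d,a)-D_f(d,b)-D_f(c,a)+D_f(c,b),
\]
applied with $(a,b,c,d)=(x_t,x_{t-\tau(t)},x_{t+1},x^*)$. The terms $D_f(x^*,x_t)-D_f(x^*,x_{t-\tau(t)})$ \emph{telescope} under the sum over $t$, leaving at most $\tau_{max}$ boundary Bregman divergences, each bounded by $RG$ via convexity and the hypotheses $\|x-x^*\|\le R$, $\|\nabla f(x)\|\le G$; this is precisely where the constant $RG\tau_{max}$ comes from. The remaining term $D_f(x_{t+1},x_{t-\tau(t)})\le\tfrac{L}{2}\|x_{t+1}-x_{t-\tau(t)}\|^2$ is already quadratic in the displacement, so unrolling and interchanging the double sum yields the $(\tau_{max}+1)^2$ factor with no Young's inequality needed. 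The key point you are missing is that convexity (through Bregman divergences) converts the troublesome linear piece into a telescoping sum; Lipschitz continuity alone cannot do this.
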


\begin{proof}
The proof follows by using a few Bregman divergence identities to rewrite the inner production. Let $D_f(\cdot,\cdot)$ is the Bregman divergence of $f$ \cite{bregman1967relaxation} which is defined as
\begin{align}\label{eq: definition of bregman divergence}
D_f(x,y):=f(x)-f(y)-\langle \nabla f(y),x-y \rangle.
\end{align}
Based on the following well-known four term equality, a consequence of straightforward algebra: for any $a,b,c,d$,
\begin{align*}
&\langle \nabla f(a)-\nabla f(b), c-d \rangle \\
&= D_{f}\langle d,a \rangle - D_{f}\langle d,b \rangle - D_{f}\langle c,a \rangle + D_{f}\langle c,b \rangle.
\end{align*}
We have
\begin{align}
&\langle \nabla f(x_{t})-\nabla f(x_{t-\tau(t)}),x_{t+1}-x^* \rangle\notag\\
& =D_{f}(x^*,x_{t})-D_{f}(x^*,x_{t-\tau(t)})-D_{f}(x_{t+1},x_{t}) \notag\\
&\quad +D_{f}(x_{t+1},x_{t-\tau(t)})\notag\\
&\leq D_{f}(x^*,x_{t})-D_{f}(x^*,x_{t-\tau(t)})+L/2\|x_{t+1}-x_{t-\tau(t)}\|^2.\label{eq:proof_lemma1_main}
\end{align}
In the last inequality, we drop the non-negative term $D_{f}(x_{t+1},x_{t})$, and use
\begin{align*}
D_{f}(x_{t+1},x_{t-\tau(t)})\leq L/2\|x_{t_{t+1}}-x_{t-\tau(t)}\|^2,
\end{align*}
which is derived from Eq.\;(\ref{eq: definition of bregman divergence}) and smooth gradient.
	
Taking expectation on both sides of Eq.\;(\ref{eq:proof_lemma1_main}), and summation $t$ from 1 to $T$, we have
\begin{align}
& \sum_{t=1}^{T}\mathbb{E}\langle \nabla f(x_{t})-\nabla f(x_{t-\tau(t)}),x_{t+1}-x^* \rangle \notag\\
&\leq \sum_{t=1}^{T} \mathbb{E}[D_{f}(x^*,x_{t})-D_{f}(x^*,x_{t-\tau(t)})] \notag\\
&\quad +\frac{L}{2}\sum_{t=1}^{T} \mathbb{E}\|\sum_{k=t-\tau(t)}^{k=t}x_{k}-x_{k+1}\|^2 \notag\\
&\leq \sum_{t=T-\tau_{max}+1}^{T}\mathbb{E}D_{f}(x^*,x_{t}) \notag\\
&+ \frac{L}{2}\sum_{t=1}^{T}(\tau(t)+1)\sum_{k=t-\tau(t)}^{t}\mathbb{E}\|x_{k}-x_{k+1}\|^2. \label{eq:proof_lemma1_main2}
\end{align}
For Bregman divergence $D_{f}(x^*,x_{t})$ in Eq.\;(\ref{eq:proof_lemma1_main2}), we have
\begin{align}
D_{f}(x^*,x_{t}) &=  f(x^*)-f(x_{t})-\langle \nabla f(x_{t}),x^*-x_{t} \rangle \notag\\
&\leq \|\nabla f(x_{t})\|_{*}\|x^*-x_{t}\|\leq RG.\label{eq:proof_lemma1_main3}
\end{align}
Next, we bound the remaining term in Eq.\;(\ref{eq:proof_lemma1_main2}).
\begin{align*}
&\sum_{t=1}^{T}(\tau(t)+1)\sum_{k=t-\tau(t)}^{t}\mathbb{E}\|x_{k}-x_{k+1}\|^2 \notag\\
&\leq \sum_{t=1}^{T}(\tau(t)+1)\sum_{k=t-\tau(t)}^{t} \gamma_{k}^2\mathbb{E}\|\tilde{g}_{k-\tau(k)} \|^2 \notag\\
&\leq (\tau_{max}+1)\sum_{t=1}^{T}\sum_{k=t-\tau(t)}^{t} \gamma_{k}^2 (\mathbb{E}\|\tilde{g}_{k-\tau(k)}\|^2) \notag\\
&\leq (\tau_{max}+1)^2 \sum_{t=1}^{T}\gamma_{t}^2 (\mathbb{E}\|\tilde{g}_{t-\tau(t)}\|^2).
\end{align*}
Substituting this result and Eq.\;(\ref{eq:proof_lemma1_main3}) into Eqs.\;(\ref{eq:proof_lemma1_main2}) completes the proof.
\end{proof}

Now, we prove Lemma \ref{lemma:convex_errorbound}.

Based on the $L$-Lipschitz continuity of gradient and convexity of function, we have
\begin{align}
&\mathbb{E}f(x_{t+1})-f(x^*) \notag\\
& \leq \mathbb{E}\langle \nabla f(x_{t}),x_{t+1}-x^* \rangle + \frac{L}{2}\mathbb{E}\|x_{t+1}-x_{t}\|^2 \notag\\
& = \underbrace{\mathbb{E}\langle \nabla f(x_{t})-\nabla f(x_{t-\tau(t)}),x_{t+1}-x^* \rangle}_{T_1} \notag\\
&\quad + \underbrace{\mathbb{E}\langle \nabla f(x_{t-\tau(t)}) - \tilde{g}_{t-\tau(t)},x_{t+1}-x^* \rangle}_{T_2} \notag\\
&\quad + \underbrace{\mathbb{E}\langle \tilde{g}_{t-\tau(t)},x_{t+1}-x^* \rangle}_{T_3} + L\gamma_{t}^{2}/2\mathbb{E}\|\tilde{g}_{t-\tau(t)}\|^2. \label{eq:proof_convex_general_main}
\end{align}
With respect to $T_1$, by Lemma \ref{lemma:for_convex}, we have
\begin{align}
\sum_{t=1}^{T}T_1\leq  RG\tau_{max} +\frac{L(\tau_{max}+1)^2}{2}\sum_{t=1}^{T}\gamma_{t}^2\mathbb{E}\|\tilde{g}_{t-\tau(t)}\|^{2}.\label{eq:proof_convex_general_T1}
\end{align}
With respect to $T_2$, we have
\begin{align}
T_2& = \mathbb{E}\langle \nabla f(x_{t-\tau(t)}) - \tilde{g}_{t-\tau(t)},x_{t+1}-x_{t} \rangle\notag\\
&\qquad + \mathbb{E}\langle \nabla f(x_{t-\tau(t)}) - \tilde{g}_{t-\tau(t)},x_{t}-x^* \rangle\notag\\
& = \mathbb{E}\langle \nabla f(x_{t-\tau(t)}) - \tilde{g}_{t-\tau(t)},-\gamma_{t}\tilde{g}_{t-\tau(t)} \rangle\notag\\
&= -\gamma_{t}\mathbb{E}\langle \nabla f(x_{t-\tau(t)}),\tilde{g}_{t-\tau(t)} \rangle + \gamma_{t}\mathbb{E}\|\tilde{g}_{t-\tau(t)}\|^2\notag\\
& = -\gamma_{t}\|\nabla f(x_{t-\tau(t)})\|^2 + \gamma_{t}\mathbb{E}\|\tilde{g}_{t-\tau(t)}-\nabla f(x_{t-\tau(t)})\|^2\notag\\
&\quad + \gamma_{t}\|\nabla f(x_{t-\tau(t)})\|^2\notag\\
& = \gamma_{t}\mathbb{E}\|\tilde{g}_{t-\tau(t)}-\nabla f(x_{t-\tau(t)})\|^2.\label{eq:proof_convex_general_T2}
\end{align}
The second equality in Eq.\;(\ref{eq:proof_convex_general_T2}) follows
\begin{align*}
\mathbb{E}\langle \nabla f(x_{t-\tau(t)}) - \bar{g}_{t-\tau(t)},x_{t}-x^* \rangle=0.
\end{align*}
The fourth equality in Eq.\;(\ref{eq:proof_convex_general_T2}) follows
\begin{align*}
&\mathbb{E}\langle \nabla f(x_{t-\tau(t)}),\tilde{g}_{t-\tau(t)} \rangle=\|\nabla f(x_{t-\tau(t)})\|^2,\\
&\mathbb{E}\langle\tilde{g}_{t-\tau(t)}-\nabla f(x_{t-\tau(t)}),\nabla f(x_{t-\tau(t)})\rangle=0.
\end{align*}
The last equality in Eq.\;(\ref{eq:proof_convex_general_T2}) follows
\begin{align*}
&\mathbb{E}\langle \tilde{g}_{t-\tau(t)}-\nabla f(x_{t-\tau(t)}),\tilde{g}_{t-\tau(t)}-\nabla f(x_{t-\tau(t)}) \rangle\\
&=\mathbb{E}\|\tilde{g}_{t-\tau(t)}\|^{2}+\mathbb{E}\|\nabla f(t-\tau_{(t)})\|^{2}-2\mathbb{E}\|\nabla f(t-\tau_{(t)})\|^{2}.
\end{align*}
With respect to $T_3$, we have
\begin{align}
T_3& = \mathbb{E}\langle \tilde{g}_{t-\tau(t)},x_{t+1}-x_{t} \rangle + \mathbb{E}\langle \tilde{g}_{t-\tau(t)},x_{t}-x^* \rangle\notag\\
&= \mathbb{E}\langle \tilde{g}_{t-\tau(t)},-\gamma_{t}\tilde{g}_{t-\tau(t)} \rangle + 1/\gamma_{t}\mathbb{E}\langle \gamma_{t}\tilde{g}_{t-\tau(t)},x_{t}-x^* \rangle\notag\\
&= -\gamma_{t}\mathbb{E}\|\tilde{g}_{t-\tau(t)}\|^2 + \frac{1}{2\gamma_{t}}\mathbb{E}( \gamma_{t}^2\|\tilde{g}_{t-\tau(t)}\|^2\notag\\
& \quad + \|x_{t}-x^*\|^2-\|x_{t+1}-x^*\|^2)\notag\\
& = -\frac{\gamma_{t}}{2}\mathbb{E}\|\tilde{g}_{t-\tau(t)}\|^2 + \frac{1}{2\gamma_{t}}\mathbb{E}[\|x_{t}-x^*\|^2-\|x_{t+1}-x^*\|^2].\label{eq:proof_convex_general_T3}
\end{align}
The third equality uses the fact $\langle a,b \rangle = 1/2[\|a\|^2+\|b\|^2-\|a-b\|^2]$.

Taking summation on both sides of Eq.\;(\ref{eq:proof_convex_general_main}) from 1 to $T$, and replacing $T_1,T_2,T_3$ with upper bound of Eqs.\;(\ref{eq:proof_convex_general_T1}), (\ref{eq:proof_convex_general_T2}) and (\ref{eq:proof_convex_general_T3}), we have
\begin{align*}
& \sum_{t=1}^{T}\mathbb{E}f(x_{t+1})-f(x^*) \\
&\leq RG\tau_{max} +\frac{L(\tau_{max}+1)^2}{2}\sum_{t=1}^{T}\gamma_{t}^2\mathbb{E}\|\tilde{g}_{t-\tau(t)}\|^{2}\\
&\quad + \sum_{t=1}^{T}\gamma_{t}\mathbb{E}\|\tilde{g}_{t-\tau(t)}-\nabla f(x_{t-\tau(t)})\|^2 \\
&\quad - \sum_{t=1}^{T}\frac{1}{2}(\gamma_{t} - L\gamma_{t}^2)\mathbb{E}\|\tilde{g}_{t-\tau(t)}\|^2\\
&\quad +\sum_{t=1}^{T}\frac{1}{2\gamma_{t}}[\|x_{t}-x^*\|^2 - \|x_{t+1}-x^*\|^2].
\end{align*}

\subsection{Proof of Theorem \ref{theorem:convex_errorbound}}
\label{proof:theorem_convex_errorbound}

When $\gamma_{t}$ is set as $\gamma_{t}^{-1}=L(\tau_{max}+1)+\sqrt{\Delta_{b}+1}\sqrt{t}$, obviously for all $t\in\mathbb{N}^{+}$, $\gamma_{t}\in(0,1/L)$.Therefore we drop the minus term
$\frac{1}{2}\sum_{t=1}^{T}\left(\gamma_{t}-L\gamma_{t}^{2}\right)\mathbb{E}\|\tilde{g}_{t-\tau_{(t)}}\|^{2}.$
Due to
\begin{align*}
&\mathbb{E}\|\tilde{g}_{t-\tau(t)}-\nabla f(x_{t-\tau(t)})\|^{2}\\
&=\mathbb{E}\|g_{t-\tau(t)}-\nabla f(x_{t-\tau(t)})\|^{2}+\mathbb{E}\|\eta_{t}\|^{2}\\
&\leq \sigma^{2}/b + 2\Delta S^{2}/\varepsilon_{k(t)}^{2}\leq \Delta_{b}.
\end{align*}
and
\begin{align*}
&\mathbb{E}\|\tilde{g}_{t-\tau_{(t)}}\|^{2}\\
&=\mathbb{E}\|\tilde{g}_{t-\tau_{(t)}}-\nabla f(x_{t-\tau(t)})\|^{2}+\mathbb{E}\|\nabla f(x_{t-\tau(t)})\|^{2}\\
&\leq \Delta_{b}+G^{2},
\end{align*}
we have
\begin{align}
& \frac{1}{T}\sum_{t=1}^{T}\mathbb{E}f(x_{t+1})-f(x^*) \notag\\
&\leq \frac{RG\tau_{max}}{T} +\frac{L(\tau_{max}+1)^2(\Delta_{b}+G^{2})}{2T}\sum_{t=1}^{T}\gamma_{t}^2\notag\\
&\quad + \frac{\Delta_{b}}{T}\sum_{t=1}^{T}\gamma_{t}
 +\frac{1}{T}\sum_{t=1}^{T}\frac{1}{2\gamma_{t}}(\|x_{t}-x^*\|^2 - \|x_{t+1}-x^*\|^2).\label{eq:convex_errorbound}
\end{align}
By observing that
\begin{align}
\sum_{t=1}^{T}\gamma_{t}^{2} &\leq\sum_{t=1}^{T}\frac{1}{(\Delta_{b}+1)t}\leq\frac{\log T}{\Delta_{b}+1},\label{eq:convex_error_t1}\\
\sum_{t=1}^{T}\gamma_{t} &\leq\sum_{t=1}^{T}\frac{1}{\sqrt{\Delta_{b}+1}}\frac{1}{\sqrt{t}}
\leq \frac{2\sqrt{T}}{\sqrt{\Delta_{b}+1}},\label{eq:convex_error_t2}
\end{align}
and
\begin{align}
&\sum_{t=1}^{T}\frac{1}{2\gamma_{t}}(\|x_{t}-x^*\|^2 - \|x_{t+1}-x^*\|^2)\notag\\
&\leq \frac{R^{2}}{2\gamma_{1}}+ \frac{\sqrt{\Delta_{b}+1}R^{2}/2}{\sqrt{T}},\label{eq:convex_error_t3}
\end{align}
we complete the proof after returning Eqs.\;(\ref{eq:convex_error_t1}), (\ref{eq:convex_error_t2}) and (\ref{eq:convex_error_t3}) back into Eq.\;(\ref{eq:convex_errorbound}).

\subsection{Proof of Theorem \ref{theorem:errorbound_gradient}}
\label{appendix:proofs_errorbound_gradient}

The essential idea is using the properties of smooth function and several inequality to separate the gradient. Recall the update formula
\begin{align*}
x_{t+1}-x_{t}=-\gamma_{t}\tilde{g}_{t-\tau(t)}, \tilde{g}_{t-\tau(t)}=g_{t-\tau(t)}+\eta_{t},
\end{align*}
where $\eta_{t}$ follows the density function Eq.\;(\ref{eq:noise_density}),
\begin{align*}
\mathbb{E}(\eta)=0, \mathbb{E}(\|\eta\|^{2})=2\Delta S_{k(t)}^2/\varepsilon_{k(t)}^2.
\end{align*}
Based on the Lipschitz continuity of gradient, we have
\begin{align*}
& f(x_{t+1})-f(x_{t}) \\
&\leq \langle \nabla f(x_{t}),x_{t+1}-x_{t} \rangle + L/2\|x_{t+1}-x_{t}\|^2\\
&\leq -\gamma_{t}\langle \nabla f(x_{t}),(\tilde{g}_{t-\tau(t)}) \rangle+ L\gamma_{t}^2/2\mathbb{E}\|\tilde{g}_{t-\tau(t)}\|^2.
\end{align*}
Taking expectation respect to $\eta_{t}$ and $\xi$, we have
\begin{align*}
&\mathbb{E}\langle \nabla f(x_{t}),(\tilde{g}_{t-\tau(t)}) \rangle=\langle \nabla f(x_{t}),\nabla f(x_{t-\tau(t)}) \rangle\\
&=\frac{1}{2}\left( \|\nabla f(x_{t})\|^{2}+\|\nabla f(x_{t-\tau(t)})\|^{2}\right.\\
&\quad \left. -\|\nabla f(x_{t})-\nabla f(x_{t-\tau(t)})\|^{2} \right),
\end{align*}
where we use the unbiased estimation in the first equality and the second equality uses the fact that
\begin{align*}
\langle a,b \rangle =\frac{1}{2}\left( \|a\|^{2}+\|b\|^{2}-\|a-b\|^{2} \right).
\end{align*}
So we have
\begin{align}
&\mathbb{E} [f(x_{t_{t+1}}) - f(x_{t})]\notag\\
&\leq -\frac{\gamma_{t}}{2}\left( \|\nabla f(x_{t})\|^{2}+\|\nabla f(x_{t-\tau(t)})\|^{2}\right. \notag\\
& -\underbrace{\|\nabla f(x_{t})-\nabla f(x_{t-\tau(t)})\|^{2}}_{T_1} ) + L\gamma_{t}^2/2\underbrace{\mathbb{E}\|\tilde{g}_{t-\tau(t)}\|^2}_{T_2}.\label{eq:proof_th1_error}
\end{align}
Next we estimate the upper bound of $T_1$ and $T_2$. For $T_2$, we have
\begin{align}
T_2&=\mathbb{E}\|\tilde{g}_{t-\tau(t)}\|^2\notag\\
&=\mathbb{E}\|\tilde{g}_{t-\tau(t)}-\nabla f(x_{t-\tau(t)})+\nabla f(x_{t-\tau(t)})\|^2\notag\\
&=\mathbb{E}\|\tilde{g}_{t-\tau(t)}-\nabla f(x_{t-\tau(t)})\|^{2}+\mathbb{E}\|\nabla f(x_{t-\tau(t)})\|^2\notag\\
&=\mathbb{E}\|g_{t-\tau(t)}-\nabla f(x_{t-\tau(t)})\|^{2}+\mathbb{E}\|\eta_{t}\|^{2}+\mathbb{E}\|\nabla f(x_{t-\tau(t)})\|^2\notag\\
&\leq \sigma^{2}/b+2\Delta S_{k(t)}^{2}/\varepsilon_{k(t)}^{2}+\mathbb{E}\|\nabla f(x_{t-\tau(t)})\|^2\notag\\
&\leq \Delta_{b}+\mathbb{E}\|\nabla f(x_{t-\tau(t)})\|^2,\label{eq:proof_th1_T2}
\end{align}
where the third equality is due to
\begin{align*}
&\mathbb{E}_{\xi|\eta_{t}}\langle \mathbb{E}_{\eta_{t}}\tilde{g}_{t-\tau(t)}-\nabla f(x_{t-\tau(t)}),\nabla f(x_{t-\tau(t)}) \rangle\\
&=\mathbb{E}_{\xi}\langle g_{t-\tau(t)}-\nabla f(x_{t-\tau(t)}),\nabla f(x_{t-\tau(t)}) \rangle\\
&=\langle \mathbb{E}_{\xi} g_{t-\tau(t)}-\nabla f(x_{t-\tau(t)}),\nabla f(x_{t-\tau(t)}) \rangle=0
\end{align*}
and the fourth equality is due to
\begin{align*}
&\mathbb{E}\|\tilde{g}_{t-\tau(t)}-\nabla f(x_{t-\tau(t)})\|^{2}\\
&=\mathbb{E}\|g_{t-\tau(t)}-\nabla f(x_{t-\tau(t)})+\eta_{t}\|^{2}\\
&=\mathbb{E}\|g_{t-\tau(t)}-\nabla f(x_{t-\tau(t)})\|^{2}+\mathbb{E}\|\eta_{t}\|^{2}\\
&\leq \sigma^{2}/b+2\Delta S_{k}^{2}/\varepsilon_{k}^{2}.
\end{align*}
With respect to $T_1$, we have
\begin{align}
T_1&=\|\nabla f(x_{t})-\nabla f(x_{t-\tau(t)})\|^{2}\leq L^{2}\|x_{t}-x_{t-\tau(t)}\|^{2}\notag\\
&=L^{2}\|\sum_{j=t-\tau(t)}^{t-1}x_{j+1}-x_{j}\|^{2}=L^{2}\|\sum_{j=t-\tau_{(t)}}^{t-1}\gamma_{j}\tilde{g}_{j-\tau(j)}\|^{2}\notag\\
&\leq 2L^{2}\underbrace{\|\sum_{j=t-\tau(t)}^{t-1}\gamma_{j}\left(\tilde{g}_{j-\tau(j)}-\nabla f(x_{j}-\tau(j))\right)\|^{2}}_{T_3}\notag\\
&\quad +2L^{2}\underbrace{\|\sum_{j=t-\tau(t)}^{t-1}\gamma_{j}\nabla f(x_{j}-\tau(j))\|^{2}}_{T_4}\label{eq:proof_th1_T1}
\end{align}
where the last inequality is derived from the fact that $\|a\|^{2}=\|a-b+b\|^{2}\leq \|a-b\|^{2}+\|b\|^{2}$.
With respect to $T_3$, we have
\begin{align*}
\mathbb{E}T_3&=\mathbb{E}_{\xi|\eta_{j}}(\mathbb{E}_{\eta_{j}}T_3)\\
&=\mathbb{E}\sum_{j=t-\tau(t)}^{t-1}\gamma_{j}^{2}\|\tilde{g}_{j-\tau(j)}-\nabla f(x_{j-\tau(j)})\|^{2}\\
&=\mathbb{E}\sum_{j=t-\tau(t)}^{t-1}\gamma_{j}^{2}\left(\|g_{j-\tau(j)}-\nabla f(x_{j-\tau(j)})\|^{2}+\mathbb{E}\|\eta_{j}\|^{2}\right)\\
&\leq \mathbb{E}\sum_{j=t-\tau_{max}}^{t-1}\gamma_{j}^{2}\left(\|g_{j-\tau(j)}-\nabla f(x_{j-\tau(j)})\|^{2}+\mathbb{E}\|\eta_{j}\|^{2}\right)\\
&\leq\sum_{j=t-\tau_{max}}^{t-1}\gamma_{j}^{2}\left(\sigma^{2}/b+2\Delta S_{k(j)}^{2}/\varepsilon_{k(j)}^{2}\right)\leq \Delta_{b}\sum_{j=t-\tau_{max}}^{t-1}\gamma_{j}^{2}.
\end{align*}
With respect to $T_4$, we have
\begin{align*}
&\mathbb{E}T_4\leq \tau(t)\sum_{j=t-\tau(t)}^{t-1}\gamma_{j}^{2}\|\nabla f(x_{j-\tau(j)})\|^{2}\\
&\leq \tau_{max}\sum_{j=t-\tau_{max}}^{t-1}\gamma_{j}^{2}\|\nabla f(x_{j-\tau(j)})\|^{2}
\end{align*}
Taking full expectation on both sides of Eq.\;(\ref{eq:proof_th1_T1}) and replacing $\mathbb{E}T_3$ and $\mathbb{E}T_4$ with their upper bound, we have
\begin{align}
\mathbb{E}T_1\leq 2L^{2}&\left( \Delta_{b}\sum_{j=t-\tau_{max}}^{t-1}\gamma_{j}^{2}\right.\notag\\
&\left.+\tau_{max}\sum_{j=t-\tau_{max}}^{t-1}\gamma_{j}^{2}\|\nabla f(x_{j-\tau(j)})\|^{2} \right)\label{eq:proof_th1_ET1}.
\end{align}
Taking Eqs.\;(\ref{eq:proof_th1_ET1}) and (\ref{eq:proof_th1_T2}) back into Eq.\;(\ref{eq:proof_th1_error}), we have
\begin{align*}
&\mathbb{E}f(x_{t+1})-f(x_{t})\\
&\leq-\frac{\gamma_{t}}{2}\left(\mathbb{E}\|\nabla f(x_{t})\|^{2}+\mathbb{E}\|\nabla f(x_{t-\tau(t)})\|^{2}\right)\\
&+L^{2}\gamma_{t}\left( \Delta_{b}\sum_{j=t-\tau_{max}}^{t-1}\gamma_{j}^{2}+\tau_{max}\sum_{j=t-\tau_{max}}^{t-1}\gamma_{j}^{2}\mathbb{E}\|\nabla f(x_{j-\tau(j)})\|^{2} \right)\\
&+\frac{L\gamma_{t}^{2}}{2}\left(\Delta_{b}+\mathbb{E}\|\nabla f(x_{t-\tau(t)})\|^{2}\right)\\
&\leq -\frac{\gamma_{t}}{2}\mathbb{E}\|\nabla f(x_{t})\|^{2}+\left(\frac{L\gamma_{t}^{2}-\gamma_{t}}{2}\right)\mathbb{E}\|\nabla f(x_{t-\tau(t)})\|^{2}\\
&+\Delta_{b}\left( \frac{L\gamma_{t}^{2}}{2}+L^{2}\gamma_{t}\sum_{j=t-\tau_{max}}^{t-1}\gamma_{j}^{2} \right)\\
&+L^{2}\gamma_{t}\tau_{max}\sum_{j=t-\tau_{max}}^{t-1}\gamma_{j}^{2}\mathbb{E}\|\nabla f(x_{j-\tau(j)})\|^{2}.
\end{align*}
Taking summation on $t$ from 1 to $T$ and rearranging terms, we have
\begin{align*}
&\mathbb{E}f(x_{T+1})-f(x_{1})\\
&\leq -\frac{1}{2}\sum_{t=1}^{T}\gamma_{t}\mathbb{E}\|\nabla f(x_{t})\|^{2}\\
&+\sum_{t=1}^{T}\left( \gamma_{t}^{2}(\frac{L}{2}+L^{2}\tau_{max}\sum_{\rho=1}^{\tau_{max}}\gamma_{t+\rho})-\frac{\gamma_{t}}{2} \right)\mathbb{E}\|\nabla f(x_{t-\tau(t)})\|^{2}\\
&+\Delta_{b}\sum_{t=1}^{T}\left( \frac{L\gamma_{t}^{2}}{2}+L^{2}\gamma_{t}\sum_{j=t-\tau_{max}}^{t-1}\gamma_{j}^{2} \right).
\end{align*}
If $\gamma_{t}$ is set as a constant $1/(2L(\tau_{max}+1))$ (Eq.\;(\ref{eq:stepsize_constant})), then
\begin{align*}
\gamma_{t}^{2}\left(\frac{L}{2}+L^{2}\tau_{max}\sum_{\rho=1}^{\tau_{max}}\gamma_{t+\rho}\right)-\frac{\gamma_{t}}{2}<0
\end{align*}
is always hold. Dropping this minus term and taking summation on $t$ from $1$ to $T$, we have
\begin{align*}
&\sum_{t=1}^{T}\gamma\mathbb{E}\|\nabla f(x_{t})\|^{2}\leq 2(f(x_{1})-f(x^{*}))\\
&+\Delta_{b}\sum_{t=1}^{T}\left( {L\gamma^{2}}+2L^{2}\gamma^{3}\tau_{max} \right),
\end{align*}
where we ue the fact that $f(x_{1})-f(x_{T+1})\leq f(x_{1})-f(x*)$. Dividing $T$ on both sides with $\gamma T$, we have
\begin{align*}
&\frac{1}{T}\sum_{t=1}^{T}\mathbb{E}\|\nabla f(x_{t})\|^{2}\leq\frac{2(f(x_{1})-f(x^{*}))+\Delta_{b}T2L\gamma^{2}}{T\gamma},
\end{align*}
in where $2L^{2}\gamma^{3}\tau_{max}\leq L\gamma^{2}$. Therefore,
\begin{align*}
\frac{1}{T}\sum_{t=1}^{T}\mathbb{E}\|\nabla f(x_{t})\|^{2}\leq\frac{2(f(x_{1})-f(x^{*}))}{T\gamma}+2\Delta_{b}L\gamma
\end{align*}
completes the proof.

\end{document}